\documentclass{article} 
\usepackage[final]{colm2025_conference}

\usepackage{microtype}
\usepackage[
pagebackref=true,
breaklinks=true
]{hyperref}

\usepackage{fancyvrb}
\renewcommand*\backref[1]{\ifx#1\relax \else (Cited on p. #1) \fi}

\hypersetup{
  colorlinks=true,
  linkcolor=blue!35!black,   
  citecolor=blue!35!black,   
  urlcolor=blue!35!black     
}

\usepackage{url}
\usepackage{booktabs}
\usepackage{relsize}
\usepackage{multicol}
\usepackage{multirow}
\usepackage{colortbl}
\usepackage[normalem]{ulem}

\usepackage[flushmargin,hang]{footmisc}
\usepackage{lineno}
\usepackage{subcaption}

\usepackage[inline,shortlabels]{enumitem}
\setitemize{noitemsep,topsep=0pt,parsep=0pt,partopsep=0pt,leftmargin=*}
\setenumerate{noitemsep,topsep=0pt,parsep=0pt,partopsep=0pt,leftmargin=*}

\definecolor{darkblue}{rgb}{0, 0, 0.5}
\hypersetup{colorlinks=true, citecolor=darkblue, linkcolor=darkblue, urlcolor=darkblue}

\usepackage[frozencache=true,cachedir=minted-cache]{minted} 
\setminted{linenos, firstnumber=last, escapeinside=@@, numbersep=4pt}

\usepackage[T1]{fontenc}   
\usepackage[utf8]{inputenc} 
\usepackage{pifont}         
\usepackage{xcolor}         
\usepackage{amsmath}  

\usepackage{tikz}
\usetikzlibrary{arrows.meta,positioning}

\renewcommand{\aa}[0]{{\texttt{a}}}
\newcommand{\bb}[0]{{\texttt{b}}}
\newcommand{\cmark}[0]{\textcolor{green}{\text{\ding{51}}}}
\newcommand{\xmark}[0]{\textcolor{red}{\text{\ding{55}}}}

\usepackage{thmtools}
\usepackage{thm-restate}
\usepackage{apxproof}

\interfootnotelinepenalty=10000

\usepackage{comment}

\usepackage{algorithm}
\usepackage[noend]{algpseudocode}  
\algrenewcommand\algorithmicindent{1.0em}%

\newcommand{\rightcomment}[1]{{\color{gray} \(\triangleright\){\footnotesize\textit{#1}}}}
\algrenewcommand{\algorithmiccomment}[1]{\hfill \rightcomment{#1}}  
\algnewcommand{\LinesComment}[1]{\State {\color{black!50!green}\rightcomment{\parbox[t]{.95\linewidth-\leftmargin-\widthof{\(\triangleright\) }}{#1}}}}
\algnewcommand{\LineComment}[1]{\State {\color{black!50!green}\smaller \(\triangleright\) \parbox[t]{\linewidth-\leftmargin-\widthof{\(\triangleright\) }}{\it #1}\smallskip}} 
\algnewcommand{\InlineComment}[1]{\hfill {\color{black!50!green}\(\triangleright\) {\scriptsize \it #1}}}
\algrenewcommand\algorithmicindent{1.0em}%

\algrenewcommand\alglinenumber[1]{{\tiny\color{black!50}#1.}\hspace{-2pt}}
\newcommand{\algorithmicfunc}[1]{\textbf{def} {#1}:}
\algdef{SE}[FUNC]{Func}{EndFunc}[1]{\algorithmicfunc{#1}}{}
\makeatletter
\ifthenelse{\equal{\ALG@noend}{t}}%
{\algtext*{EndFunc}}
{}%
\makeatother

\usepackage{macros}

\setlength{\marginparwidth}{3.3cm}
\usepackage[disable]{todonotes}


\newcounter{todocounter}            
\renewcommand{\thetodocounter}{\arabic{todocounter}} 
\makeatletter
\newcommand{\note}[4][]{%
  \refstepcounter{todocounter}
  \protected@edef\@currentlabel{\thetodocounter}%
  \todo[author=\textbf{#2}  {\scriptsize\color{black!75}({\thetodocounter})},color=#3,caption={},#1]{#4}%
}
\makeatother

\newcommand{\saveforcameraready}[1]{#1}
\newcommand{\cutforspace}[1]{}

\usepackage{float}  
\newfloat{listing}{htbp}{lop}[section]
\floatname{listing}{Listing}

\newtheorem{lemma}{Lemma}

\newtheorem{proposition}{Proposition}

\newtheorem{definition}{Definition}
\newtheorem{example}{Example}

\usepackage[capitalize,noabbrev]{cleveref}
\crefname{section}{\S}{\S\S}
\crefname{table}{Tab.}{Tabs.}
\crefname{figure}{Fig.}{Figs.}
\crefname{algorithm}{Alg.}{Algs.}
\crefname{equation}{Eq.}{Eqs.}
\crefname{appendix}{App.}{Apps.}
\crefname{theorem}{Thm.}{Thms.}
\crefname{proposition}{Prop.}{Props.}
\crefname{definition}{Def.}{Defs.}
\crefname{cor}{Corr.}{Corrs.}
\crefname{observation}{Ob.}{Obs.}
\crefname{hyp}{Hyp.}{Hyps.}
\crefformat{section}{\S#2#1#3}
\crefname{todocounter}{comment}{comments}

\title{
Fast Controlled Generation from Language Models with \\ Adaptive Weighted Rejection Sampling
}

\newcommand{\MIT}[0]{\textsuperscript{1}}
\newcommand{\ETH}[0]{\textsuperscript{2}}
\newcommand{\mcgill}[0]{\textsuperscript{3}}
\newcommand{\cifar}[0]{\textsuperscript{4}}
\newcommand{\mila}[0]{\textsuperscript{5}}
\newcommand{\jhu}[0]{\textsuperscript{6}}
\newcommand{\Yale}[0]{\textsuperscript{7}}
\newcommand{\CHI}[0]{\textsuperscript{8}}
\newcommand{\authorsep}[0]{\ \ }
\newcommand{\cosenior}[0]{\ensuremath{\ddagger}}

\author{
\vspace{-1em}\\
\textbf{Benjamin Lipkin}\thanks{co-first authorship, $^{\cosenior}$co-senior authorship. contact:
\texttt{\href{mailto:lipkinb@mit.edu}{lipkinb@mit.edu}} \& \texttt{\href{mailto:tim.f.vieira@gmail.com}{tim.f.vieira@gmail.com}}}\hphantom{$^*$}\MIT \authorsep
\textbf{Benjamin LeBrun}$^*$\mila \authorsep
\textbf{Jacob Hoover Vigly}\MIT\authorsep
\textbf{João Loula}\MIT\authorsep
\\
\textbf{David R. MacIver}\CHI\authorsep
\textbf{Li Du}\jhu\authorsep
\textbf{Jason Eisner}\jhu \authorsep
\textbf{Ryan Cotterell}\ETH \authorsep
\textbf{Vikash Mansinghka}\MIT \authorsep
\\
\textbf{Timothy J. O'Donnell}\ensuremath{^{\cosenior}}\mcgill$^,$\cifar$^,$\mila
\textbf{Alexander K. Lew\ensuremath{^{\cosenior}}}\MIT$^,$\Yale \authorsep
\textbf{Tim Vieira}\ensuremath{^{\cosenior}}\ETH \authorsep
\\
\MIT MIT \authorsep
\ETH ETH Z\"{u}rich \authorsep
\mcgill McGill \authorsep
\cifar Canada CIFAR AI Chair \authorsep
\mila Mila \authorsep
\\
\jhu Johns Hopkins \authorsep
\Yale Yale \authorsep
\CHI CHI FRO \authorsep
}
%

\begin{document}

\ifcolmsubmission
\linenumbers
\fi

\maketitle

\begin{abstract}
The dominant approach to generating from language models subject to some constraint is \emph{locally constrained decoding} (LCD), incrementally sampling tokens at each time step such that the constraint is never violated. Typically, this is achieved through \emph{token masking}: looping over the vocabulary and excluding non-conforming tokens. There are two important problems with this approach. (i) Evaluating the constraint on every token can be prohibitively expensive---LM vocabularies often exceed $100,000$ tokens. (ii) LCD can distort the global distribution over strings, sampling tokens based only on local information, even if they lead down dead-end paths. This work introduces a new algorithm that addresses both these problems. First, to avoid evaluating a constraint on the full vocabulary at each step of generation, we propose an \emph{adaptive rejection sampling algorithm} that typically requires orders of magnitude fewer constraint evaluations. Second, we show how this algorithm can be extended to produce low-variance, unbiased estimates of importance weights at a very small additional cost---estimates that can be soundly used within previously proposed \emph{sequential Monte Carlo algorithms} to correct for the myopic behavior of local constraint enforcement. Through extensive empirical evaluation in text-to-SQL, molecular synthesis, goal inference, pattern matching, and JSON domains, we show that our approach is superior to state-of-the-art baselines, supporting a broader class of constraints and improving both runtime and performance. Additional theoretical and empirical analyses show that our method's runtime efficiency is driven by its dynamic use of computation, scaling with the divergence between the unconstrained and constrained LM, and as a consequence, runtime improvements are greater for better models.

\begin{center}
\includegraphics[width=.85em,height=.85em]{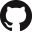}\hspace{0em}
{\url{https://github.com/genlm/genlm-control}}
\end{center}
\end{abstract}

\section{Introduction}

Many tasks in scientific and engineering disciplines can be approached through controlled generation of strings from a language model subject to hard constraints.
For example, we may seek to generate an API call that matches an endpoint, produce SQL queries that are consistent with the schema of a database, or design a molecule that satisfies a target specification.
The dominant approach in these settings is \defn{locally constrained decoding} (\defn{LCD}), which forces each sampled token to conform to the constraint \citep{lu2021neurologic,shin2021constrained,scholak2021picard,lu2022neurologic,poesia2022synchromesh,shin2022few,geng2023grammar,beurer2024guiding,huang2024grounded,moskal2024aici,ugare2024improving,wang2024grammar,zheng2024sglang,banerjee2025crane}.\footnote{This paper focuses on \emph{runtime} control, but we note that many \emph{training-based} control methods exist \citep[e.g.,][]{ziegler2019fine,stiennon2020learning,bai2022constitutional,ouyang2022training,rafailov2023direct}.}

This approach suffers from two critical drawbacks.
First, the typical method for sampling from this distribution by enumerative \defn{token masking} can be \textit{slow}. 
Full masking requires checking the constraint against every item in the vocabulary---which can often consist of more than $100,000$ tokens---before finally filtering, renormalizing, and sampling. 
In some special cases, such as when the constraint can be expressed as a regular or context-free grammar, optimizations are available that make checking every token feasible \citep{willard2023efficient,kuchnik2023validating,koo2024automata,ugare2024improving,dong2024xgrammar,cognetta2025pitfalls,park2025flexible}. 
However, for \defn{black-box constraints}, such optimizations do not apply.
Second, as often observed \cite[e.g.,][]{lew2023sequential,park2024grammar,gareev2024local,ahmed2025controllable,loula2025syntactic,kempton2025local,ye2025efficient}, by renormalizing only the \textit{local} distribution (over tokens), this approach can distort the \textit{global} distribution (over strings). 
Because LCD myopically enforces constraints at each step, it can sometimes greedily sample its way into low-probability regions of sequence space (see \cref{sec:lcd} for discussion).\looseness=-1

Several papers have noted this problem and proposed approaches that recast controlled generation as probabilistic conditioning, treating the problem as posterior inference \citep{rosenfeld2001whole,miao2020right,krause2021gedi,yang2021fudge,meng2022controllable,qin2022cold,shih2023long,zhang2023tractable,hu2024amortizing,zhang2024adaptable,faria2025sample}.
In this framework, approximate inference methods---e.g., importance sampling (IS) or sequential Monte Carlo (SMC)---can be used to correct sampled sequences from the locally constrained distribution to the target global posterior \citep{borschinger2011particle,dubbin2012unsupervised,yang2013log,buys2015bayesian,lin2018neural,lew2023sequential,zhao2024probabilistic,puri2025probabilistic,loula2025syntactic}.\footnote{See \cref{app:cond-lm} for a detailed discussion of language modeling as probabilistic conditioning, importance sampling, and sequential Monte Carlo.}

A key diagnostic quantity for assessing the quality of a generated sequence is the marginal probability of the constraint under the LM's local next-token distribution at each timestep,
$\Z \defeq \sum_{\x \in \domain} \prior(\x)\lik(\x)$, where $\prior$ is the LM next-token-distribution over the vocabulary $\domain$, and $\lik$ checks whether a given token conforms to the constraint at the current time step.   
Within sequential IS or SMC, $\Z$ arises as an incremental importance weight update.
When this quantity is low, we have reached a point in generation where it is difficult to sample any continuation of the sequence that satisfies the constraint. 
For SMC specifically, this can be used as a signal for reallocating computation to more promising sequences during subsequent resampling.
Using token masking, it is clear how to compute $\Z$---simply sum over all unmasked tokens. 
However, if we would like to sample from the local constrained token distribution without evaluating $\lik$ over the entire vocabulary,
this quantity may not be directly available, and we must seek to estimate it.
 
In this paper, we introduce an exact sampler for constrained token distributions whose runtime is faster than token masking, often by orders of magnitude.
Our algorithm is based on \defn{rejection sampling} and, thus, is a \defn{Las Vegas algorithm}, i.e., an exact algorithm with a stochastic runtime. Rather than using fixed compute for each step,  compute scales dynamically based on need.
Furthermore, we derive unbiased estimators of $\Z$ for this algorithm that can be used to correct samples globally within an SMC framework.

Our core contributions are as follows:
\begin{itemize}
\item \textbf{A fast Las Vegas sampling algorithm compatible with any constraint.} 
We develop \emph{adaptive weighted rejection sampling} (AWRS), a sampling algorithm for constrained generation. 
Like other rejection sampling algorithms, the cost of our approach scales with the difficulty of the constraint. 
However, our adaptive algorithm outperforms standard rejection sampling by capitalizing on the statistical structure of the next token distribution.
Due to the speed of this algorithm, we can comfortably evaluate arbitrary black-box constraints.
\item \textbf{Stochastic estimates of $\Z$ to correct for greediness.} 
We develop an approach to calculate low-variance, unbiased estimates of $\Z$ for AWRS, supporting its integration within approximate inference methods like SMC.
\item \textbf{Runtime analysis.} 
We theoretically and empirically characterize the work done by constrained decoding in terms of the probabilistic update between the LM's unconditional next token distribution and the conditioned target distribution. 
Critically, AWRS runs faster when the base model is better able to capture the constraint---meaning that our approach is more efficient for more accurate LMs---aligning with scaling trends.
\item \textbf{Empirical evaluations.} We evaluate AWRS alongside several state-of-the-art baselines on five challenging controlled generation benchmarks. AWRS yields improvements to expressiveness, runtime, and accuracy across relevant comparisons.
\end{itemize}

\begin{figure}[t]
\centering
\includegraphics[width=\linewidth]{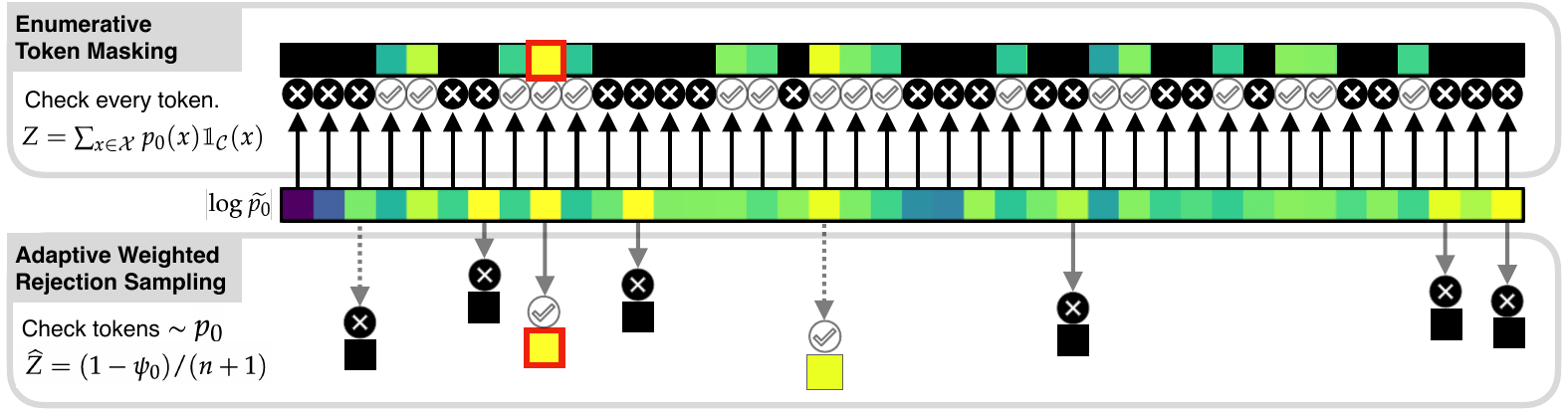}
\caption{Our approach (bottom) compared to enumerative token masking (top). Only a subset of the tokens are checked while sampling from the same distribution.}
\label{fig:overview}
\end{figure}


\section{Background}

\paragraph{Locally Constrained Decoding (LCD).}
\label{sec:lcd}

A \defn{language model} $\p$ over an \defn{alphabet} $\domain$ is a probability distribution over the set of \defn{strings} $\strings$.
We write $\x$ for individual tokens and $\str$ for complete strings. 
A \defn{global constraint function} $\condition\colon \strings \to \{0, 1\}$ encodes the \defn{set of valid strings} $\conditionSet \subseteq \strings$.
The problem of controlled generation can be expressed as sampling $\str\sim\ensemble$, where $\ensemble\defeq\p(\cdot\mid\str\in\conditionSet)$, i.e., sampling a complete string from the LM that satisfies the constraint while preserving the relative probabilities of possible strings.
This latter property is what distinguishes proper global conditioning from arbitrary constrained sampling.

During the autoregressive generation of a string, it is sometimes possible to evaluate whether sampling any given token would cause an immediate violation of the constraint.
For example, if a constraint requires writing a sentence where no word exceeds $5$ characters, the only possible continuations to ``north'' should induce word boundaries, rather than continuing with ``ern'' or ``east''.
At each step of generation, based on a current string prefix $\str_p$, we assume access to a \defn{local constraint function} $\lik\colon \domain \to \{0, 1\}$, encoding a \defn{set of valid next tokens} $\cond\subseteq\domain$, such that $\x \not\in \cond \implies \forall \str_s \colon \str_p\,\x\,\str_s \not\in \conditionSet$. That is, if the local constraint function rejects a token $\x$, there is no valid continuation of $\str_p$ that begins with $\x$.\footnote{See \cref{app:global-local} for a more detailed discussion of mapping global constraints to local constraint checkers.}

Fixing a prefix $\str_p$, let $\prior$ denote the LM's \defn{local prior distribution} over the next token---note this distribution is conditioned on the preceding tokens.
At each step, LCD samples a next token from the \defn{local posterior distribution} $\post$ induced by the local constraint function: 
\begin{align}
\post\of\x \defeq \invZ \zpost\of\x \qquad
\zpost\of\x \defeq \prior\of\x\, \lik\of\x \qquad
\Z \defeq \sum_{\x\in\domain} \zpost\of{\x}
\end{align}
\noindent where $\zpost$ is its unnormalized density and $\Z$ is its normalizing constant. 
(Note $0<\Z \le 1$.)
Intuitively, $\Z$  is the total probability of satisfying the local constraint under $\prior$.
The most common approach to implementing LCD is token masking: computing $\post$ exactly during generation, by evaluating $\lik$ on every item in the vocabulary, zeroing impossible tokens, summing $\prior\of\x$ over the remainder to compute $\Z$, and renormalizing to obtain $\post$ (\cref{fig:overview}, top).\looseness=-1

By applying constraints locally at every token, LCD can greedily sample its way into low-probability sequences that are difficult or impossible to recover from.
For example, continuing with the constraint of generating a sentence where no word may exceed $5$ characters, an LM will struggle to complete the prefix ``The Fed says the cost of a 30-yr fixed mortg'', despite there having been no issues up until that point \citep[see][]{lew2023sequential}.

Interestingly, such dead-ends can be readily detected using precisely the quantity $\Z$ computed as a byproduct of LCD.
When $\Z$ is very low, we have reached a point in string generation where only very few, improbable tokens can continue the string.\footnote{Why $\Z$ and not sequence probability? While methods optimizing sequence probability, like beam search, partially address the sample quality desiderata, there are caveats. If there are many valid tokens, whose sum is large, but each individual token has low probability, sequence probability will unfairly penalize any token \citep{koehn2017six,murray2018correcting,cohen2019empirical,stahlberg2019nmt}. Methods that account for $\Z$ do not suffer this drawback.}
This local $\Z$ can be used in \textbf{sequential Monte Carlo (SMC)} to reweight partial generations (particles) for subsequent re-sampling steps, which tend to eliminate unpromising string prefixes and duplicate more promising ones (\cref{app:cond-lm,alg:smc}). 
In this setting---where
LCD is used as a \defn{proposal distribution} within SMC---$\Z$ is precisely the incremental correction factor that can be derived by simplifying the importance weights \citep{naesseth2019elements}.

Since $\Z$ is computed as a byproduct of token masking, this algorithm provides its own correction factors when used as a part of a proposal distribution in SMC.
Unfortunately, however, modern LMs often have vocabularies exceeding $100,000$ tokens; evaluating $\lik$ on every token in this set---and, thus, exactly computing $\Z$---is often prohibitively slow.
It would initially appear that exact sampling from the local posterior $\post$ and exact weight correction with $\Z$ are impractical in these cases. 

In the next sections, we develop an approach to sampling from the local posterior $\post$ that is exact, tractable for large vocabularies, typically fast in terms of average time, and provides correction factors to correct the greediness of LCD. 

\paragraph{Simple rejection sampling.}
Simple rejection sampling is an algorithm that can provide exact samples from $\post$ without looping through the entire token vocabulary.
It works by drawing a token $\x \sim \prior$, then checking whether the token satisfies $\lik\of{\x}$. 
If it does, the token is returned; otherwise, this process is repeated.
The returned token is an exact sample from $\post$.\looseness=-1

Compared to the constant cost of token masking, this algorithm is a \defn{Las Vegas algorithm}, i.e., an exact algorithm with a stochastic runtime.
It is easy to show that the expected number of samples drawn by the algorithm before meeting the constraint is $\frac{1}{\Z}$; it can furthermore be shown that $D_{KL}\left(\post \parallel \prior\right)=-\log \Z$ \citep{levy2008expectation,freer2010probabilistic}, and thus that the expected runtime is exponential in $D_{KL}\left(\post \parallel \prior\right)$.
When the constraint is relatively easy to satisfy (i.e., when $\Z\approx 1$ so $D_{KL}\left(\post \parallel \prior\right)$ is low), this can lead to runtimes that are much faster than those required by full-vocabulary token masking. However, when $Z$ is very small ($\Z < |\domain|^{-1}$), rejection sampling's runtime can be even worse than that of token masking.

\paragraph{Adaptive rejection sampling (ARS).}
Adaptive rejection sampling \citep{gilks1992adaptive,mansinghka2009exact} is a version of rejection sampling that is never slower than token masking and often significantly faster. In ARS, we adaptively \textit{remove} each invalid token encountered while rejection sampling, so that we never sample the same rejected token twice. This simple modification can dramatically improve the runtime of the algorithm (\cref{fig:rt}).\looseness=-1

Both rejection sampling approaches evaluate $\lik$ only on the samples they draw and, thus, can be more efficient than token masking. However, neither provides a direct way to exactly compute $\Z$. A solution to this problem arises from the observation that for SMC, it suffices to use independent unbiased estimates of each local $\Z$, rather than the exact quantities, as correction factors \citep[\cref{app:cond-lm,alg:smc-pwp}; ][]{pmlr-v37-naesseth15}.

\section{Our Algorithms}
\label{sec:algorithms}

In lieu of the exact but expensive correction factors obtained as a byproduct of token masking, we wish to find cheap, unbiased estimates of $\Z$ that we can compute as a byproduct of simple or adaptive rejection sampling.

Our first observation toward this goal is that the \textit{number of rejected tokens} sampled before generating a valid next token $\x$ contains signal about the magnitude of $\Z$: when many rejected tokens are generated, this suggests $\Z$ is small, and can serve as a sign that the current prefix $\str_p$ has landed us in a low-probability region of the global posterior $\ensemble$.

Indeed, in simple rejection sampling, the number of trials $\nrej_0+1$ (that is, $\nrej_0$ rejections plus the final success) is an unbiased estimate of $1/{\Z}$. 
Unfortunately, $1/(\nrej_0+1)$ is \textit{not} an unbiased estimate of $\Z$, and we require unbiased estimates of $\Z$ for sound use within SMC. 

\subsection{Warm-up: Weighted Rejection Sampling (WRS)}
\label{sec:wrs}

In the simple rejection sampling setting, we can collect more data about $\Z$ by running $\L\ge1$ \textit{additional} rejection loops.  In addition to reducing variance, this also supports calculation of an \textit{unbiased} estimate of $\Z$.  The total number of trials $\T$ (rejections and successes) required to reach $\L+1$ successes follows a negative binomial distribution with parameters $\Z$ and $\L+1$. 
Letting $\nrej = \sum_{i=0}^\L \nrej_i$ be the total number of rejections across the $\L+1$ loops,
\begin{equation}\Zest \defeq \frac{\L}{\T-1} = \frac{\L}{(\nrej+(\L+1))-1} = \frac{\L}{\nrej+\L}
\end{equation} 
is the minimum variance unbiased estimator (MVUE) for the $\Z$ parameter of the negative binomial distribution, provided that $\L \geq 1$ \citep[\cref{proof:WRSMVUE}]{forbes2011statistical}. This allows us to define the following algorithm for jointly generating a next token $\x$ from $\post$ alongside an unbiased estimate $\hat{\Z}$ of $\Z$.

\begin{restatable}[]{definition}{WRS}
\label{def:wrs}
Given an unnormalized target $\zpost$ as above, \defn{weighted rejection sampling} generates $\pairz\sim\wprop_\textrm{WRS}$
as follows:
\begin{enumerate}
\item Run rejection sampling to obtain a valid sample $\x$: Sample $\ar_1,\dots,\ar_{\nrej_0},\x \simIID\prior$ through $\nrej_0$ rejections $\ar_i$ until obtaining an accepted token $\x \in \cond$.
\item For a budget of $\L \geq 1$ additional loops, repeat step 1 and count the number of rejections on each loop $\nrej_1, \ldots, \nrej_{\L}$.
\item Calculate the estimate $\Zest \defeq \frac{\L}{\nrej+\L}$, where $\nrej = \sum_{i=0}^{\L}\nrej_i$.
\item Return $\pairz$
\end{enumerate}
\end{restatable}
\saveforcameraready{\textit{An implementation in NumPy can be found in \cref{alg:wrs}.}}

\begin{restatable}[]{proposition}{WRSCorrect}
For $\pairz\sim\wprop_\textrm{WRS}$, $\x$ is distributed according to $\post$ and $\E[\Zest] = \Z$. \hfill {\color{black!50}(\cref{proof:WRS})}
\end{restatable}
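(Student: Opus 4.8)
The plan is to prove the two assertions separately, since the returned token \(\x\) and the weight estimate \(\Zest\) are governed by different parts of the procedure. The claim \(\x\sim\post\) depends only on Step~1 (the first rejection loop), whereas \(\E[\Zest]=\Z\) depends on the aggregate trial count \(\T\) across all \(\L+1\) loops. First I would isolate these dependencies and then dispatch each in turn.

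For the distributional claim I would appeal to the standard correctness of simple rejection sampling, observing that Steps~2--4 never alter \(\x\). Within a single loop each draw is i.i.d.\ from \(\prior\) and is accepted exactly when it lands in \(\cond\), an event of probability \(\Z\). Conditioning on the number of rejections \(\nrej_0\) preceding acceptance, the accepted token has probability \(\prior\of{a}/\Z=\post\of{a}\) for each \(a\in\cond\) and \(0\) otherwise; summing out \(\nrej_0\) (a geometric count of failures that factors out) leaves exactly \(\post\). A bonus worth recording is that the joint law factors as \(\Pr(\nrej_0=n,\x=a)=(1-\Z)^n\prior\of{a}\), so \(\x\) is independent of \(\nrej_0\). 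Since Steps~2--4 only touch the rejection counts, \(\x\) is then independent of \(\Zest\), which upgrades the two marginal claims to the joint ``properly weighted'' identity \(\E[\Zest\,f(\x)]=\Z\,\E_{\post}[f]\) needed downstream in SMC.

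For unbiasedness I would first argue that concatenating the \(\L+1\) loops produces a single stream of i.i.d.\ Bernoulli\((\Z)\) trials halted at the \((\L+1)\)-th success, so the total trial count \(\T=\nrej+(\L+1)\) is negative binomial with \(\Pr(\T=t)=\binom{t-1}{\L}\Z^{\L+1}(1-\Z)^{t-\L-1}\) for \(t\ge\L+1\). Then I would evaluate \(\E[\Zest]=\E\!\left[\tfrac{\L}{\T-1}\right]\) directly. The engine of the computation is the identity \(\tfrac{\L}{t-1}\binom{t-1}{\L}=\binom{t-2}{\L-1}\), which pulls one factor of \(\Z\) out front and, after the shift \(s=t-1\), turns the remaining sum into \(\sum_{s\ge\L}\binom{s-1}{\L-1}\Z^{\L}(1-\Z)^{s-\L}\). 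This is precisely the total mass of a negative binomial law counting trials to the \(\L\)-th success, hence equals \(1\), giving \(\E[\Zest]=\Z\).

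The main obstacle is bookkeeping rather than conceptual: getting the binomial identity and the index shift exactly right, and tracking where the hypothesis \(\L\ge1\) is used. That hypothesis is essential twice---it guarantees \(\T-1\ge\L\ge1>0\) so the estimator is always well defined, and it makes the collapsed sum a bona fide negative binomial PMF (with \(\L\ge1\) successes) that sums to \(1\). Convergence of the series is immediate, as it is a sum of nonnegative probabilities bounded by \(1\). Finally, while the cited minimum-variance result (\cref{proof:WRSMVUE}) already implies unbiasedness, the direct computation above is self-contained and is all the proposition requires.
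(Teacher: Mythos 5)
Your proof is correct, but it takes a different route from the one the paper designates as the proof of this proposition. The paper's official proof (\cref{proof:WRS}) establishes the stronger property that $\wprop_\textrm{WRS}$ is \emph{properly weighted} for $\zpost$ (\cref{def:proper-weight}) via the 2-level RAVI framework of \cref{def:ravi-2}: it constructs an explicit proposal over traces of rejected samples, a meta-proposal that runs $\L$ extra rejection loops and splits one of them, and a meta-meta-proposal, and then shows the density ratio collapses to $\frac{\L}{\L+\nrej}$; both marginal claims of the proposition follow. Your argument instead proves the two marginal claims directly: the standard rejection-sampling argument for $\x\sim\post$, and a negative-binomial computation for $\E[\Zest]=\Z$ using the identity $\frac{\L}{t-1}\binom{t-1}{\L}=\binom{t-2}{\L-1}$ and an index shift. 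This unbiasedness computation is in fact essentially identical to the paper's \emph{secondary} derivation in \cref{proof:WRSMVUE} (``Another perspective on WRS''), where the same binomial identity and reindexing reduce the sum to the total mass of an $NB(\L,\Z)$ law. Two points in your favor: your observation that $\x$ is independent of the rejection counts (hence of $\Zest$) is exactly the missing link needed to upgrade the two marginal statements to the joint proper-weighting identity $\E[\Zest f(\x)]=\Z\,\E_{\post}[f]$ that SMC actually requires, and you track the role of $\L\ge 1$ precisely. What the paper's heavier RAVI route buys, and yours does not, is generality: in the adaptive setting of \cref{sec:awrs}, sampling without replacement destroys the i.i.d.\ Bernoulli structure, the negative-binomial argument no longer applies, and only the auxiliary-variable construction carries over (see \cref{proof:AWRS}). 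Your approach is the more elementary and self-contained proof of this particular proposition; the paper's is the one that scales to AWRS.
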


\begin{restatable}[]{proposition}{WRSRT}
The expected runtime of $\wprop_\textrm{WRS}$ scales with $\mathcal{O}(\frac{\L}{\Z})$. \hfill {\color{black!50}(\cref{app:wrs-rt})}
\end{restatable}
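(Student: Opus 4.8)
The plan is to reduce the expected runtime to the expected number of trials and then read off the latter from the negative binomial law already identified above. Every elementary operation performed by $\wprop_\textrm{WRS}$---a draw from $\prior$ followed by a single evaluation of $\lik$---occurs exactly once per trial, and each such operation costs $\mathcal{O}(1)$ in the parameters $\Z$ and $\L$. Hence the expected runtime is proportional to $\E[\T]$, where $\T = \nrej + \L + 1$ is the total number of trials across the $\L+1$ loops, and it suffices to compute $\E[\T]$.

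For this I would decompose $\T = \sum_{i=0}^{\L} \T_i$, writing $\T_i = \nrej_i + 1$ for the number of trials consumed by the $i$-th loop. Since each loop draws IID tokens from $\prior$ and halts on the first one lying in $\cond$, the count $\T_i$ is geometric on $\{1,2,\dots\}$ with success probability $\Z$, so $\E[\T_i] = 1/\Z$. By linearity of expectation, $\E[\T] = (\L+1)/\Z$. (Equivalently, this is just the mean of the negative binomial distribution with parameters $\Z$ and $\L+1$ noted in the text preceding \cref{def:wrs}.)

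To conclude, note that $\L \ge 1$ gives $\L+1 \le 2\L$, so $\E[\T] = (\L+1)/\Z \le 2\L/\Z$, i.e. $\E[\T] = \mathcal{O}(\L/\Z)$; multiplying by the constant per-trial cost yields the claimed bound $\mathcal{O}(\L/\Z)$ on expected runtime.

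The argument is essentially just the mean of a geometric variable, so I expect no real difficulty. The only step deserving care is the first---the accounting that makes total runtime proportional to the trial count. This requires noting that a single categorical draw from $\prior$ and one call to $\lik$ have a fixed cost independent of $\Z$ and $\L$ (any dependence of one categorical sample on $|\domain|$ is a separate constant factor absorbed into the $\mathcal{O}$). Stating this accounting cleanly is the whole subtlety.
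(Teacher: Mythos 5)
Your proposal is correct and follows essentially the same route as the paper: decompose the total trial count over the $\L+1$ loops, observe that each loop's count is geometric with success probability $\Z$, and apply linearity of expectation to get $(\L+1)/\Z = \mathcal{O}(\L/\Z)$. The only cosmetic differences are that the paper derives the geometric mean explicitly via the series sum rather than citing it, and that you make the constant per-trial cost accounting and the $\L+1 \le 2\L$ step explicit, both of which the paper leaves implicit.
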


Using these estimates, simple rejection sampling may be soundly integrated into SMC as a proposal distribution, supporting the correction of LCD's greediness.
$\L=1$ is enough to ensure unbiasedness, and we find it to work well in practice, but $\L$ can be increased to trade higher runtime for reduced variance (\cref{fig:unbiased-var}).

\subsection{Adaptive Weighted Rejection Sampling (AWRS)}
\label{sec:awrs}

In the adaptive setting, the expected number of rejections is reduced, and the negative binomial estimator can no longer be used. However, an auxiliary-variable argument based on the framework of~\citet{lew2022recursive} can be used to derive an alternative formula for the adaptive case based on not just the number of rejections but also the probability mass removed from $\prior$ during adaptation.

\begin{restatable}[]{definition}{AWRS}
\label{def:awrs}
Given an unnormalized target $\zpost$, AWRS generates $\pairz\sim\wprop_\textrm{AWRS}$ as follows:
\begin{enumerate}
\item Sample
$\Tuple{\ar_1,\dots,\ar_{\nrej_0},\x}$ as follows:
draw $\nrej_0$ \emph{unique} rejections $\ar_i$ until obtaining $\x\in\cond$.
Note that beyond the first step, we do not sample from $\prior$, but a re-normalized distribution on $\domain\setminus\ars_{<i}$. 
\item Calculate $\prej_0 = \sum_{i=1}^{\nrej_0} \prior(\ar_i)$.
\item Generate one additional trace $\Tuple{\s_{1},\dots,\s_{\nrej_1},\x^*}$, by continuing to sample as above from the remaining not-yet-rejected elements, through an additional $\nrej_1$ new \emph{unique} rejections, until finding an element $\x^*$. Note that $\x^*$ could be the same as $\x$, since acceptances are replaced (unlike rejections).
\item Calculate the estimate $\Zest \defeq \frac{1-\prej_0}{\nrej+1}$, where $\nrej = \nrej_0 + \nrej_1$.
\item Return $\pairz$
\end{enumerate}
\end{restatable}
\saveforcameraready{\textit{An implementation in NumPy can be found in \cref{alg:awrs}.}}

\begin{restatable}[]{proposition}{AWRSCorrect}
For $\pairz \sim \wprop_{\textrm{AWRS}}$, $\x$ is distributed according to $\post$ and $\E[\Zest] = \Z$. \hfill {\color{black!50}(\cref{proof:AWRS})}
\end{restatable}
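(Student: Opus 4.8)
The plan is to prove the two claims separately, since the law of $\x$ is fixed entirely by the first rejection loop, whereas unbiasedness of $\Zest$ requires reasoning about the whole trace.

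For the claim $\x \sim \post$, it suffices to invoke the correctness of adaptive rejection sampling, independent of the bookkeeping for $\Zest$. The cleanest route is the exponential-race representation of weighted sampling without replacement: attach to each token $\x' \in \domain$ an independent clock that is exponential with rate $\prior(\x')$; then drawing proportionally to $\prior$ and deleting each token as it is seen is exactly revealing tokens in increasing order of their clocks. The first loop returns the first valid token, i.e.\ the valid token with the smallest clock, and since the minimum of independent exponentials is attained by index $v$ with probability proportional to its rate, the accepted token equals a given $v \in \cond$ with probability $\prior(v)/\sum_{v'\in\cond}\prior(v') = \post(v)$. Deleting invalid tokens never touches the valid ones, so adaptation leaves this distribution intact, giving $\x \sim \post$.

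For $\E[\Zest] = \Z$, I would use the auxiliary-variable argument. Treat the execution as producing a pair $(\x, a)$ with $a = (\ar_1, \dots, \ar_{\nrej_0}, \s_1, \dots, \s_{\nrej_1}, \x^*)$ collecting all rejected tokens and the second acceptance, and write $q(\x, a)$ for the probability of this trace, a product of the sequential without-replacement conditionals (each $\prior(\cdot)$ divided by the mass remaining when that token is drawn). Following \citet{lew2022recursive}, I would introduce a backward kernel $r(a \mid \x)$ and the extended target $\tilde\pi(\x, a) \defeq \zpost(\x)\, r(a \mid \x)$, so that the importance weight $\tilde\pi(\x, a)/q(\x, a)$ equals, by construction, exactly $\Zest = \frac{1 - \prej_0}{\nrej + 1}$. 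Concretely one sets $r(a \mid \x) \defeq \Zest\, q(\x, a)/\zpost(\x)$ and checks the cancellations: the acceptance factor $\prior(\x)/(1-\prej_0)$ inside $q$ cancels against $\zpost(\x) = \prior(\x)$ and the $(1-\prej_0)$ in $\Zest$. Provided $r(\cdot \mid \x)$ is a genuine distribution, proper weighting yields $\E[\Zest\, f(\x)] = \sum_{\x} \zpost(\x) f(\x)$ for all $f$; taking $f \equiv 1$ gives $\E[\Zest] = \Z$, and combined with $\x \sim \post$ this confirms the pair is properly weighted, which is what SMC requires.

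The main obstacle is therefore to show that this $r(\cdot \mid \x)$ normalizes to $1$ (equivalently, that $\tilde\pi$ marginalizes over $a$ to $\zpost(\x)$). This is a nontrivial combinatorial identity, because the second loop's rejections must be distinct from the first loop's and its denominators involve $\prej_0$. My plan is to linearize the denominator with $\frac{1}{\nrej + 1} = \int_0^1 u^{\nrej_0 + \nrej_1}\, du$, so that, conditioned on the first loop, the sum over $a$ factors into the probability-generating function of the number of second-loop rejections; that generating function has a closed form obtainable from the same exponential-race picture applied to the second loop (where $\nrej_1$ is the number of remaining invalid clocks beating the minimal valid clock), and after substituting it and evaluating the $u$-integral the first-loop sum should telescope to $1$. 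A fully elementary alternative is induction on $|\domain \setminus \cond|$, conditioning on the first token drawn to peel off one invalid token. As a sanity check, both routes can be confirmed on a two-token alphabet, where $\E[\Zest]$ evaluates to the valid mass exactly.
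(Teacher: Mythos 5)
Your first claim is handled correctly: the exponential-race representation of sampling without replacement shows that the first accepted token is the valid token with the minimal clock, hence equals $v\in\cond$ with probability $\prior(v)/\sum_{v'\in\cond}\prior(v')=\post(v)$. This is, if anything, more detailed than the paper's own treatment of that half, which simply appeals to the correctness of rejection samplers.

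The gap is in the second claim. Your reduction is sound: if the backward kernel $r(a\mid\x)\defeq \Zest\,q(\x,a)/\zpost(\x)$ sums to one over traces $a$, then proper weighting and hence $\E[\Zest]=\Z$ follow. But that normalization \emph{is} the whole content of the proposition (it is exactly the pointwise proper-weighting identity $\sum_a \Zest\, q(\x,a)=\zpost(\x)$), and you leave it unproven, offering only two candidate strategies. Neither is carried out, and the first is doubtful as stated: conditioning on the first loop, the generating function $\E[u^{\nrej_1}\mid\text{first loop}]$ is an expectation over the minimal valid-token clock $M$ (exponential with rate $\Z$) of the product $\prod_{y\notin\cond\cup\ar}\bigl(u+(1-u)e^{-\prior(y)M}\bigr)$; integrating this over $M$ yields a sum over subsets of the remaining invalid tokens with no useful closed form for general $\prior$, and there is no evident telescoping once the $(1-\prej_0)$ factor and the distinctness coupling between the two loops are included. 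A proof whose central identity is deferred to a plan (``should telescope to $1$'') is not complete; checking a two-token alphabet does not substitute for the general case.

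The paper's proof (\cref{proof:AWRS}) avoids this obstacle by reversing the logic: instead of fixing the weight and verifying that the induced backward kernel is normalized, it constructs backward kernels that are normalized by construction---they are explicit sampling procedures---and then \emph{derives} the weight. Concretely, it instantiates the 2-level RAVI recipe of \cref{def:ravi-2}: the meta-proposal $\mprop$ re-runs a single adaptive rejection race producing all $\nrej_0+\nrej_1$ rejections plus an acceptance, and then chooses the split point between ``first loop'' and ``second loop'' uniformly among the $\nrej_0+\nrej_1+1$ positions (this is precisely where the factor $\frac{1}{\nrej+1}$ comes from), while the meta-meta-proposal $\mmprop$ accounts for $\mprop$'s own auxiliary randomness by continuing the race past $\ar$. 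In the ratio $\zpost(\x)\mprop(\s,\ar;\x)\,/\,\bigl(\prop(\ar,\x)\mmprop(\s;\ar,\x)\bigr)$ everything cancels except $\frac{1-\prej_0}{\nrej+1}$---the $(1-\prej_0)$ survives because the numerator $\zpost(\x)=\prior(\x)$ lacks the renormalization $\frac{1}{1-\prej_0}$ with which $\prop$ draws its accepted token---and proper weighting is then inherited from the already-proven general RAVI proposition, with no combinatorial identity left to check. If you want to complete your argument along your own lines, the missing idea is exactly this uniform-split-point construction of $r(\cdot\mid\x)$: it is a genuine distribution by fiat, and a short cancellation computation shows it reproduces your target weight.
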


\begin{restatable}[]{proposition}{AWRSRT}
\label{prop:awrs-rt}
The expected runtime of $\wprop_\textrm{AWRS}$ scales with $\bigO{\sum_{\x\notin\cond} \xbeforeC}$, where\\ $\xbeforeC 
\defeq \frac{\prior\of{\x}}{\prior\of\x + \Z}$, i.e., the probability of each non-conforming token relative to $\Z$. \hfill {\color{black!50}(\cref{app:awrs-rt})}
\end{restatable}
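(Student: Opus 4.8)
The plan is to observe that the runtime of $\wprop_\textrm{AWRS}$ is dominated by the number of constraint evaluations (calls to $\lik$), which equals the total number of tokens drawn across both loops, namely $\nrej + 2 = \nrej_0 + \nrej_1 + 2$: each of the $\nrej$ rejections and the two accepted tokens $\x,\x^*$ costs exactly one call to $\lik$. Hence it suffices to bound $\E[\nrej] = \E[\nrej_0] + \E[\nrej_1]$. The key tool is the exponential-race (equivalently, Gumbel-top-$k$ / Plackett--Luce) representation of sampling without replacement: attach to each token $\x\in\domain$ an independent clock $E_\x \sim \mathrm{Exp}(\prior(\x))$ and reveal tokens in order of increasing $E_\x$; the resulting order has exactly the law of repeated draws from $\prior$ without replacement, which is precisely the sampling scheme in \cref{def:awrs}. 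Under this coupling, loop 0 reveals tokens in clock order and halts at the first conforming token, so every rejected token is non-conforming.

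First I would compute $\E[\nrej_0]$ exactly. A non-conforming token $\x\notin\cond$ is rejected in loop 0 precisely when its clock rings before the earliest conforming clock, i.e. when $E_\x < M$, where $M \defeq \min_{c\in\cond} E_c$. Since a minimum of independent exponentials is exponential with the summed rate, $M \sim \mathrm{Exp}(\Z)$ with $\Z = \sum_{c\in\cond}\prior(c)$, and $M$ is independent of $E_\x$ because $\x\notin\cond$. The competing-exponentials identity then gives $\Pr[E_\x < M] = \prior(\x)/(\prior(\x)+\Z) = \xbeforeC$. Summing over non-conforming tokens by linearity of expectation yields the exact identity $\E[\nrej_0] = \sum_{\x\notin\cond}\xbeforeC$, which already supplies a matching lower bound for the claimed scaling.

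Next I would bound the second loop by $\E[\nrej_1]\le\E[\nrej_0]$. Loop 1 is an independent sampling-without-replacement run on the reduced pool $\domain\setminus\{\ar_1,\dots,\ar_{\nrej_0}\}$, which deletes only non-conforming tokens and therefore leaves the conforming mass $\Z$ unchanged. Applying the race argument conditionally on the loop-0 outcome, a surviving non-conforming token $\y$ is rejected in loop 1 with conditional probability $\prior(\y)/(\prior(\y)+\Z)$, while deleted non-conforming tokens contribute zero. Taking expectations over loop 0 and bounding each survival probability by $1$,
\[
\E[\nrej_1] = \sum_{\y\notin\cond}\Pr[\y\text{ survives loop }0]\,\frac{\prior(\y)}{\prior(\y)+\Z} \;\le\; \sum_{\y\notin\cond}\frac{\prior(\y)}{\prior(\y)+\Z} = \E[\nrej_0].
\]
Combining the two displays gives $\E[\nrej]\le 2\sum_{\x\notin\cond}\xbeforeC$, so the expected runtime is $\bigO{\sum_{\x\notin\cond}\xbeforeC}$ (up to the additive constant from the two guaranteed acceptance checks, which is negligible whenever the constraint removes a nontrivial amount of mass).

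I expect the main obstacle to be setting up the race coupling carefully enough that both loops live on a common probability space, and in particular justifying that loop 1's conditional rejection probabilities retain the same $\prior(\y)/(\prior(\y)+\Z)$ form even though its pool is a \emph{random}, loop-0-dependent subset; the tower property, together with the independence of the fresh loop-1 draws from the loop-0 survival event, resolves this cleanly. A secondary care point is confirming that only non-conforming tokens are ever deleted, so that the conforming mass $\Z$ entering the competing-exponentials identity is identical in both loops.
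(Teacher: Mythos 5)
Your proof is correct, and it reaches the bound by the same overall decomposition as the paper's proof in \cref{app:awrs-rt}: reduce expected runtime to per-token precedence probabilities, note that a non-conforming token $\x$ costs a call to $\lik$ in a loop exactly when it appears before every conforming token, and use linearity to get $\E[\nrej_0]=\sum_{\x\notin\cond}\xbeforeC$ exactly, together with the observation that the renormalization in the second loop cancels so that surviving non-conforming tokens are again rejected with probability $\xbeforeC$. Where you genuinely differ is in how the precedence probability is justified. The paper proves an elementary lemma (\cref{prop:sampling-order}) by conditioning on the identity of the first revealed element of the relevant set, an argument that applies uniformly to sampling with or without replacement; you instead realize the without-replacement order through an exponential-race coupling ($E_\x \sim \mathrm{Exp}(\prior(\x))$, reveal in increasing clock order) and invoke the competing-exponentials identity $\Pr[E_\x < M] = \frac{\prior(\x)}{\prior(\x)+\Z}$ with $M \sim \mathrm{Exp}(\Z)$. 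The coupling buys transparency: independence between each non-conforming clock and the conforming minimum is immediate, and both loops can be analyzed on a common probability space. The paper's lemma buys generality within its own framework: it lets the proof push through an exact expectation for an arbitrary number $\L$ of extra loops, $\E[\T] = 1+\L+\card{\domain\setminus\cond}-\sum_{\x\notin\cond}(1-\xbeforeC)^{\L+1}$, whereas you specialize to the two-loop case of \cref{def:awrs} and settle for the (sufficient) bound $\E[\nrej_1]\le\E[\nrej_0]$; in fact your intermediate display, combined with the survival probability $1-\xbeforeC$, recovers the paper's exact $\L=1$ formula $\E[\T] = 2+\sum_{\x\notin\cond}\bigl(2\xbeforeC - \xbeforeC^2\bigr)$. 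Finally, the additive constant you flag (the $\L+1$ guaranteed acceptance checks) is absorbed into the $\bigO{\cdot}$ statement by the paper as well, so your treatment there matches the intended reading of the proposition.
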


As above, AWRS generates next tokens and correction factors suitable for use within SMC. In addition, AWRS offers considerable runtime benefits.
Trivially, since we cannot resample rejections, we must succeed after at most $\card{\domain\setminus\cond}$ rejection steps---the number of invalid tokens---no matter how small $\Z$ is.  AWRS also has lower \emph{expected} runtime.
Intuitively, we may think of the time it takes to sample an acceptance as follows.
Each non-conforming token may be considered a \textit{distractor} if its \textit{individual} mass is comparable to or higher than $\Z$, the sum of \textit{all} conforming tokens.
Rather than all non-conforming tokens contributing equally, expected runtime is dominated by only these---typically rare---distractor tokens.
The exact value is derived and further explored in \cref{app:awrs-rt}.
We also propose several extensions of AWRS with beneficial properties in \cref{app:awrs-ext}, including support for partial concurrency, properly weighted approaches to early stopping based on either mass explored or steps taken, and methods for combination with truncation sampling.

\section{Experiments}

Our experiments measure the practical impact of our methods on accuracy and runtime for 5 tasks in different domains.\footnote{We note that our implementations are all written in pure Python and are relatively unoptimized. Runtime improvements presented in this work are driven purely by algorithmic advancement. Being a flexible plug-and-play method, we encourage talented systems practitioners to capture the many untapped speedups.}
We use task-specific metrics rather than evaluating the sampler's internal behavior (e.g., how accurately it estimates $\Z$), but see \cref{fig:adaptive,sec:simsamp}.
We compare AWRS to strong baselines through consideration of two versions: \fastLCD, which performs simple LCD using \emph{unweighted} adaptive rejection sampling, and \SMCAWRS, which uses the weighted version within SMC.

\paragraph{Methods.}
We first compare the following \textit{uncorrected} methods in terms of both runtime and downstream task accuracy. Methods in this section yield $\N=1$ unweighted samples:
\begin{itemize}
\item \textbf{Base language model (\baseLM).} Sample sequences of tokens from the language model (without forcing any constraints), i.e., sample $\str\sim\p$.
\item \textbf{Locally constrained decoding with token masking (\slowLCD).} The standard approach to constrained decoding. We mask the entire token vocabulary, re-normalize, and sample.\footnote{Due to the high cost of full token masking, we only include this baseline for one benchmark, from which we illustrate our orders-of-magnitude speed-up.}
\item \textbf{Locally constrained decoding with adaptive rejection sampling (\fastLCD).} A faster implementation of LCD: rather than masking the entire vocabulary, we draw a sample from the same LCD distribution using ARS. (We do not yet use an importance-weight correction, so we run only the first rejection loop.)
\end{itemize}

The baselines above allow us to gauge the degree to which adaptive rejection sampling improves runtime.
Our next set of methods go beyond LCD, returning a weighted ensemble of $\N$ strings such that the expected weight of $\str$ in the ensemble is 
$\p(\str)\cdot\condition(\str) \,\propto\, \ensemble(\str)$.

\begin{itemize}
\item \textbf{\sampleVerify.} Sample $\N$ complete strings from the LM and weight them according to $\condition$ (which amounts to discarding strings $\str\notin\conditionSet$).\footnote{
This baseline is a common approach for incorporating constraints into an LM generation pipeline \citep{cobbe2021training,hendrycks2021measuring,nakano2021webgpt,ahn2022can,shi2022natural,uesato2022solving,olausson2023linc,lightman2023let,ankner2024critique,gandhi2024stream,wang2024math,xin2024deepseek,zhang2024generative}.}
\item \textbf{Sequential Monte Carlo with constraint as twist (\twistOnly).} Sample tokens directly from the LM, but use $\lik(\x)$ as a \defn{twist function} to filter partial sequences after they have been extended with a token $\x$.
Note that this is a programmable twist as in \citet{loula2025syntactic}, rather than a learned twist \citep{naesseth2019elements,lawson2022sixo}.

\item \textbf{Sequential Monte Carlo with AWRS proposal (\SMCAWRS).} Use the AWRS algorithm as a proposal distribution for SMC. Like \fastLCD, this method generates tokens using an adaptive rejection sampling loop, but \textit{does} calculate the correction factor.
\end{itemize}

\paragraph{Metrics.}
\begin{itemize}
\item \textbf{Accuracy.} The accuracy of a returned string is defined by the benchmark. 
For methods that construct a weighted ensemble, we report the expected accuracy of a random string returned from this ensemble (with probability proportional to its weight).\footnote{The rationale is that the probability of returning $\str$ then approaches $\ensemble(\str)$ as $\N$ grows, so we approximately return $\str\sim\ensemble$, just as \baseLM returns $\str\sim\p$.  Note that we could plausibly improve accuracy further by selecting the most probable string from the ensemble, or more generally, by the \defn{minimum Bayes risk} method of selecting or constructing a ``consensus string'' with low expected task loss under the weighted ensemble.}
\item \textbf{Runtime.} The average number of seconds it takes to generate the $\N$ complete strings.\footnote{Our runtimes scale sublinearly in $\N$ because we use parallel hardware (a GPU).  Specifically, the calls to obtain the next-token distribution $\prior$ from the LLM are batched over the $\N$ strings.} 
\end{itemize}

\paragraph{Benchmarks.}
\begin{itemize}
\item \textbf{Text-to-SQL (Spider).}
\emph{Task:} Generate SQL queries from a natural language question paired with its corresponding database schema.
\emph{Data:} Development split of the Spider dataset~\citep{yu-etal-2018-spider}.
\emph{Metric:} Execution accuracy (checking if the produced SQL query, when executed on a test database, yields the same results as the ground-truth query).
\emph{Base LM:} Llama 3.1 8B-Instruct.
\emph{Constraint function:} A Python parser for the SQL context-free grammars provided by~\citet{roy2024benchclamp} to enforce syntactically valid SQL.
\item \textbf{JSON.}
\emph{Task:} Generate documents that conform to a specific JSON Schema.
\emph{Data:} The validation splits of the Github-trivial, -easy and -medium tasks in the JSONSchemaBench dataset \citep{geng2025jsonschemabench}.
\emph{Metric:} Whether a valid JSON document conforming to the schema is generated.
\emph{Base LM:} Llama 3.1 8B-Instruct.
\emph{Constraint function:} Check the output parses as JSON, and validate using the Python jsonschema library. Parsing is done with a streaming JSON parser, which allows incremental detection of some schema violations before the full document has been generated.
\item \textbf{Goal inference (Planetarium).}
\emph{Task:} Formally define an agent’s goal within the STRIPS subset of the PDDL planning language, using a natural-language description of the goal alongside PDDL code that specifies the agent’s starting conditions and plan to reach it.
\emph{Data:} Blocksworld tasks featuring up to 10 objects from the Planetarium benchmark~\citep{zuo2024planetarium}.
\emph{Metric:} Equivalence to the ground-truth PDDL description.
\emph{Base LM:} Llama 3.1 8B.
\emph{Constraint function:} Check STRIPS syntax for goals as defined in the Planetarium Blocksworld domain + execute a simulation using a ground-truth plan to verify if the resulting state matches the predicted (partial) goal.
\item \textbf{Molecular Synthesis} \emph{Task:} Produce drug-like compounds using the SMILES notation~\citep{weininger1988smiles}.
\emph{Data:} Few-shot prompts created by repeatedly selecting 20 random samples from the GDB-17 database~\citep{ruddigkeit2012enumeration}.
\emph{Metric:} Quantitative Estimate of Drug-likeness~\citep[QED; ][]{bickerton2012quantifying}, a widely used measure of molecular quality.
\emph{Base LM:} Llama 3.1 8B.
\textit{Constraint function:} A SMILES prefix validator implemented via the Python \emph{partialsmiles} library~\citep{partialsmiles}.
\item \textbf{Pattern matching.} \emph{Task:} Generate strings that conform to expressive pattern-matching specifications. Compared to formal\cutforspace{\footnote{Modern \texttt{regex} engines contain many features that exceed the expressiveness of the regular languages. And for good reason! These features give users concise, pragmatic ways to express constraints.}} regular expressions, these patterns contain explicit features that cannot be fully captured by deterministic finite-state automata, including unbounded center embedding and conditionals. \emph{Data:} Over $400$ pattern-matching specifications generated via the pipeline in \cref{app:cspm}. \emph{Base LM:} Llama 3.1 8B-Instruct. \emph{Metric:} Adherence to the specified pattern.
\emph{Constraint function:} An incremental pattern validator that checks whether a complete match remains possible given a prefix \citep{barnett2014regex}.
\end{itemize}

The runtime costs associated with each benchmark's constraint checker, along with further practical discussion of constraint programming, can be found in \cref{app:constraints}.

\begin{table*}[t]
    \centering
    \begin{subtable}{0.48\textwidth}
        \centering
        \resizebox{\linewidth}{!}{
        \begin{tabular}{l cc}
        \toprule
        \textbf{Method} & \textbf{Accuracy} & \textbf{Runtime (sec/ex)} \\
        \midrule
        \baseLM & 0.523 (0.50, 0.54) & 0.78 (0.76, 0.80) \\
        \fastLCD & 0.572 (0.55, 0.59) & 1.02 (0.98, 1.06) \\
        \midrule
        \sampleVerify & 0.609 (0.59, 0.63) & 2.71 (2.61, 2.82) \\
        \twistOnly & 0.608 (0.59, 0.63) & 2.60 (2.48, 2.72) \\
        \SMCAWRS & 0.600 (0.58, 0.62) & 3.02 (2.90, 3.14) \\
        \bottomrule
        \end{tabular}
        }
        \caption{Text-to-SQL}
        \label{tab:text2sql}
    \end{subtable}
    \hfill
    \begin{subtable}{0.48\textwidth}
         \centering
         \resizebox{\linewidth}{!}{
         \begin{tabular}{l cc}
         \toprule
         \textbf{Method} & \textbf{Accuracy} & \textbf{Runtime (sec/ex)} \\
         \midrule
         \baseLM & 0.688 (0.66, 0.72) & 2.35 (2.20, 2.52) \\
         \fastLCD & 0.773 (0.74, 0.80) & 4.41 (4.08, 4.76) \\
         \midrule
         \sampleVerify & 0.858 (0.83, 0.88) & 6.18 (5.81, 6.56) \\
         \twistOnly & 0.871 (0.84, 0.90) & 5.17 (4.65, 5.71) \\
         \SMCAWRS  &  0.898 (0.87, 0.92) & 12.71 (11.58, 13.87) \\
         \bottomrule
         \end{tabular}
         }
         \caption{JSON}
         \label{tab:json}
    \end{subtable}

    \vspace{1em}

    \begin{subtable}{0.48\textwidth}
        \centering
        \resizebox{\linewidth}{!}{
        \begin{tabular}{l cc}
        \toprule
        \textbf{Method} & \textbf{Accuracy} & \textbf{Runtime (sec/ex)} \\
        \midrule
        \baseLM & 0.032 (0.01, 0.06) & 1.07 (0.97, 1.17) \\
        \fastLCD & 0.18 (0.11, 0.26) & 0.77 (0.68, 0.86) \\
        \midrule
        \sampleVerify & 0.205 (0.13, 0.28) & 4.55 (4.25, 4.84) \\
        \twistOnly & 0.479 (0.39, 0.57) & 3.20 (2.93, 3.47) \\
        \SMCAWRS & 0.528 (0.44, 0.62) & 2.62 (2.42, 2.82) \\
        \bottomrule
        \end{tabular}
        }
        \caption{Goal Inference}
        \label{tab:goal}
    \end{subtable}
    \hfill
    \begin{subtable}{0.48\textwidth}
        \centering
        \resizebox{\linewidth}{!}{
        \begin{tabular}{l cc}
        \toprule
        \textbf{Method} & \textbf{Accuracy} & \textbf{Runtime (sec/ex)} \\
        \midrule
        \baseLM & 0.271 (0.25, 0.29) & 0.69 (0.67, 0.71) \\
        \fastLCD & 0.522 (0.51, 0.54) & 0.97 (0.93, 1.02) \\
        \midrule
        \sampleVerify & 0.594 (0.59, 0.60) & 1.83 (1.79, 1.86) \\
        \twistOnly & 0.591 (0.59, 0.60) & 1.53 (1.51, 1.56) \\
        \SMCAWRS & 0.615 (0.61, 0.62) & 3.41 (3.31, 3.50) \\
        \bottomrule
        \end{tabular}
        }
        \caption{Molecular Synthesis}
        \label{tab:molecular}
    \end{subtable}
        
    \vspace{1em}

    \begin{subtable}{.96\textwidth}
        \centering
        \resizebox{\linewidth}{!}{
        \begin{tabular}{l cc!{\vrule width 1pt}l cc}
        \toprule
        \textbf{Method} & \textbf{Accuracy} & \textbf{Runtime (sec/ex)} & \textbf{Method} & \textbf{Accuracy} & \textbf{Runtime (sec/ex)}\\
        \midrule
        \baseLM & 0.562 (0.51, 0.61) & 0.10 (0.09, 0.11) & \sampleVerify & 0.786 (0.75, 0.83) & 0.26 (0.23, 0.29) \\
        \fastLCD & 0.980 (0.97, 0.99) & 0.16 (0.13, 0.20) & \twistOnly & 0.813 (0.77, 0.85) & 0.19 (0.18, 0.21) \\
        \slowLCD & 0.978 (0.96, 0.99) & 6.91 (5.68, 8.46) & \SMCAWRS  & 0.990 (0.98, 1.00) & 0.43 (0.35, 0.52) \\
        \bottomrule
        \end{tabular}
        }
        \caption{Pattern Matching}
        \label{tab:pattern}
    \end{subtable}

    \caption{Comparison of method accuracy and runtime across domains with 95\% bootstrapped confidence intervals. Runtime represents the average execution time (in seconds) across all instances in the dataset. \sampleVerify and \twistOnly{} were run with $\N=10$ particles. \SMCAWRS was run with $\N=5$ particles.}
    \label{tab:results}
\end{table*}

\section{Results \& Discussion}

\textbf{AWRS Outperforms State-of-the-Art Controlled Generation Methods.}\quad \cref{tab:results} shows the accuracy and runtime of each method in each domain. We observe the following results:

\begin{itemize}
\item \textbf{Controlled generation outperforms uncontrolled generation.} With little overhead to runtime, \fastLCD improves accuracy over \baseLM across all benchmarks.
\item \textbf{Adaptive sampling is much faster than token masking, with no loss of accuracy.} On the pattern matching domain --- the only one where it was computationally feasible to run \slowLCD{} 
--- \fastLCD matches its accuracy while being $>50\times$ faster.\saveforcameraready{\footnote{This $50\times$ speedup is at the level of complete sequence generation, including all time spent on LM computation. The speedup factor modulo this constant is much greater.}}
\item \textbf{Correcting for greediness improves accuracy.} \SMCAWRS always matches or beats \fastLCD, significantly improving it in four domains (Goal Inference, JSON, Text-to-SQL, Molecular Synthesis).
The other domain (Pattern Matching) suffers somewhat less under greediness because its local constraint $\lik$ is more precise, allowing a prefix only if it has a valid continuation.
\item \textbf{\SMCAWRS outperforms existing approaches that correct for greediness.} With half the number of particles, \SMCAWRS attains accuracy comparable to or higher than
\sampleVerify and \twistOnly in all benchmarks.
\end{itemize}

Next, we investigate how \SMCAWRS scales with LM size (\cref{fig:particle-lms}) compared to existing methods that sample from the global distribution.

\begin{itemize}
\item \textbf{\SMCAWRS with smaller LMs outperforms existing SMC approaches with larger LMs.} \cref{fig:particle-lms} shows how \SMCAWRS using Llama 3.2 1B yields better runtime and accuracy than \twistOnly using Llama 3.1 8B and Llama 3.3 70B. This suggests that including informative constraints in the proposal is a compute-efficient way to make small models punch above their weight.
\end{itemize}

\textbf{AWRS Achieves Speedups by Allocating Computation Dynamically.}\quad As shown through our extensive theoretical and empirical runtime analyses (\cref{app:awrs-rt,fig:rt,fig:expected}), AWRS scales with the difficulty of constraint conformance $D_{KL}\left(\post \parallel \prior\right)$, taking less time to sample a token when the constraint is expected and more time when it is not.
We analyzed the \slowLCD results of the pattern-matching benchmark, where token masking supports the exact calculation of the ground truth $D_{KL}\left(\post \parallel \prior\right)$ for each token sampling step. 
We then ran AWRS for each of these steps, illustrating a few key results (\cref{fig:adaptive}). 
\begin{enumerate}
\item $D_{KL}\left(\post \parallel \prior\right)$ is small in most sampling steps; AWRS typically checks only $2$ or $3$ tokens.
\item As $D_{KL}\left(\post \parallel \prior\right)$ increases for the hardest cases, the runtime of AWRS scales dynamically.
An interesting consequence is that \textit{AWRS samples faster for more accurate base models}.
\item As $D_{KL}\left(\post \parallel \prior\right)$ grows, AWRS generally does not deteriorate.
AWRS is roughly bounded by the number of non-conforming tokens whose individual probabilities are close to or exceed $\Z$. 
This set is typically small, and it turns out empirically that more accurate models seem to reduce the size of this set.
Even when a model's top choice is wrong, it often still prefers constraint-conforming tokens to arbitrary non-conforming tokens.
\end{enumerate}

\begin{figure}[t]
\centering
\includegraphics[width=\linewidth]{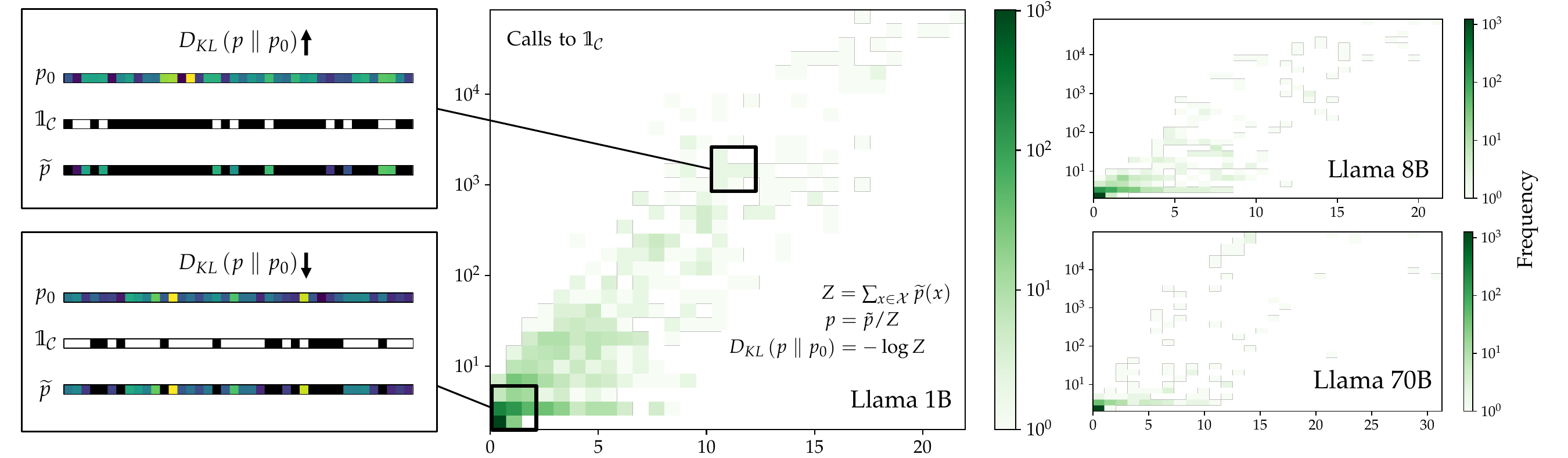}
\caption{The number of AWRS calls to $\lik$ (\textbf{y-axis}) scales with $D_{KL}\left(\post \parallel \prior\right)$ (Nats; \textbf{x-axis}). 
}
\label{fig:adaptive}
\end{figure}

\section{Related Work}
\label{sec:related-work}

One approach to accelerating constrained decoding has been to pre-compile restricted constraint classes to reduce runtime overhead. 
Engineering advances have enabled at least partial compilation of constraints expressible as membership in regular \citep{deutsch2019general,willard2023efficient,kuchnik2023validating} or context-free \citep{koo2024automata,ugare2024improving,dong2024xgrammar} languages as well as restricted classes of Boolean circuits \citep{ahmed2025controllable}.
In comparison, our approach supports arbitrary programmable constraints.

Another approach has explored limiting the number of constraint evaluations.
\citet{poesia2022synchromesh} and \citet{ugare2025itergen} allow the LM to proceed unrestrictedly and then backtrack on errors.
\citet{scholak2021picard} and \citet{shin2022few} use top-$k$ truncation within beam search.
\citet{loula2025syntactic}, in a similar spirit to \citet{morin2005hierarchical}, hierarchically stratify the vocabulary and incrementalize the constraint checker to byte sequences.
A subset of the Outlines library \citep[\texttt{v.0.2.2};][]{willard2023efficient} uses a variant of deterministic probability-ordered search, first sorting tokens by their logits and then yielding the first conforming token.
\citet{botta2025query} and \citet{mundler2025type} propose related variants of simple unweighted token-level rejection sampling.
Our work builds on these approaches within a probabilistic framework, deriving an exact sampler equivalent to token masking (\fastLCD) alongside importance weights supporting the sound use of such samples in SMC (\SMCAWRS).

Several recent papers have used SMC to correct the greediness of LCD \citep{lew2023sequential,zhao2024probabilistic,loula2025syntactic}.
These approaches come with significant limitations: \citet{zhao2024probabilistic} require an expensive fine-tuning procedure for learning twists, whereas \citet{loula2025syntactic} require constraints to be decomposable into slow and fast components, the latter of which must still be evaluated over large sets of tokens. In contrast, our approach works with any constraint out of the box and evaluates it frugally, being typically more accurate than twisted SMC for any fixed runtime.

\section{Conclusion}

Locally constrained decoding is both slow and greedy. In this paper, we address these weaknesses respectively by introducing (i) an adaptive rejection sampler requiring orders of magnitude fewer constraint evaluations, and (ii) an algorithm computing weights for transforming this local sampler into a global one. Across many challenging controlled generation domains, we find that our method is faster and more accurate than existing methods, even when using fewer particles and smaller LMs.
Finally, we use theoretical and empirical analyses to show that the number of constraint evaluations our method makes scales with the KL divergence between the unconstrained and constrained LM distributions---as a consequence, our method is faster for more powerful LMs.

\section*{Reproducibility}

The most up-to-date and performant \SMCAWRS reference implementation can be found in the following actively maintained library: \texttt{\href{https://github.com/genlm/genlm-control}{https://github.com/genlm/genlm-control}}

The source code and data to replicate this paper's experiments can be found in the following repository: \texttt{\href{https://github.com/genlm/awrs-colm-2025}{https://github.com/genlm/awrs-colm-2025}}

\section*{Author Contributions}

\begin{itemize}
    \item[] \underline{\textbf{First Authors}}
    \item \textbf{Benjamin Lipkin} (\texttt{\href{mailto:lipkinb@mit.edu}{lipkinb@mit.edu}}): research conception, formal analysis (algorithms, proper weighting), software development (samplers), experiment development (simulation, pattern matching), visualization, writing
    \item \textbf{Benjamin LeBrun} (\texttt{\href{mailto:benjamin.lebrun@mail.mcgill.ca}{benjamin.lebrun@mail.mcgill.ca}}): lead software engineer (infrastructure, interfaces, SMC), experiment development (text-to-SQL, JSON, pattern-matching), visualization, writing
    \item[]
    \item[] \underline{\textbf{Contributors}}
    \item \textbf{Jacob Hoover Vigly} (\texttt{\href{mailto:jahoo@mit.edu}{jahoo@mit.edu}}): formal analysis (runtime complexity, parameter estimator), technical advice (sampling algorithms), visualization, writing
    \item \textbf{João Loula} (\texttt{\href{mailto:jloula@mit.edu}{jloula@mit.edu}}): software development (algorithm optimization), experiment development (goal inference, molecular synthesis), visualization, writing
    \item \textbf{David R. MacIver} (\texttt{\href{mailto:david@drmaciver.com}{david@drmaciver.com}}): experiment development (JSON) and bounded cost extensions to AWRS.
    \item \textbf{Li Du} (\texttt{\href{mailto:leodu@cs.jhu.edu}{leodu@cs.jhu.edu}}): software development (grammar parsing prototype)
    \item \textbf{Jason Eisner} (\texttt{\href{mailto:jason@cs.jhu.edu}{jason@cs.jhu.edu}}): technical advice (sequential inference), writing
    \item \textbf{Ryan Cotterell} (\texttt{\href{mailto:ryan.cotterell@inf.ethz.ch}{ryan.cotterell@inf.ethz.ch}}): organization management, writing
    \item \textbf{Vikash Mansinghka} (\texttt{\href{mailto:vkm@mit.edu}{vkm@mit.edu}}): organization management
    \item[]
    \item[] \underline{\textbf{Senior Authors}}
    \item \textbf{Timothy J. O'Donnell} (\texttt{\href{mailto:timothy.odonnell@mcgill.ca}{timothy.odonnell@mcgill.ca}}): organization management, senior project leadership, project narrative development, writing
    \item \textbf{Alexander K. Lew} (\texttt{\href{mailto:alexander.lew@yale.edu}{alexander.lew@yale.edu}}): senior project leadership, research conception, formal analysis (proper weighting), project narrative development, project advising and mentorship, writing
    \item \textbf{Tim Vieira} (\texttt{\href{mailto:tim.f.vieira@gmail.com}{tim.f.vieira@gmail.com}}): senior project leadership, research conception, formal analysis (algorithms, sequential inference), software development (grammar parsing), project narrative development, project advising and mentorship, writing
\end{itemize} 

\section*{Acknowledgments}

BLi is supported by a National Science Foundation Graduate Research Fellowship under Grant No. 2141064.
JHV is supported by a National Science Foundation SBE Postdoctoral Research Fellowship under Grant No. SMA-2404644. This research was enabled in part by compute resources provided by Mila (mila.quebec).

\newpage

\bibliography{colm2025_conference}
\bibliographystyle{colm2025_conference}

\newpage

\appendix

\renewcommand\thefigure{\thesection.\arabic{figure}}

\section{Conditional Language Modeling}
\label{app:cond-lm}
\setcounter{figure}{0}    

\paragraph{Strings.}
An \defn{alphabet} $\alphabet$ is a finite set of symbols. Let $\strings$ denote the set of all strings formed from the symbols in $\alphabet$.  Let $\varepsilon$ denote the empty string. Let $|\str|$ denote the length of $\str \in \strings$.  We also define $\eos \notin \alphabet$ as a special \defn{end-of-string} marker.

\paragraph{Constraints.}\label{defs:constraint-function-stuff}
Let $\condition\colon \strings \to \{0, 1\}$ be a \defn{constraint function}, encoding a \defn{set of valid strings} $\conditionSet \subseteq \strings$. Let $\conditionSetPrefix \defeq \set{ \str \mid \str \, \str' \in \conditionSet }$, i.e., the set of all valid prefixes of strings in $\conditionSet$. We define the \defn{incremental constraint function}:
\begin{align}
\conditionPrefix(\sym' \mid \str) \begin{cases} 
\indicator{\str \in \conditionSet} & \textbf{if } \sym' = \eos    \\
\indicator{\str\, \sym' \in \conditionSetPrefix} & \textbf{otherwise}
\end{cases}    
\end{align}

\paragraph{Language models.}
A \defn{language model} $\p$ is a probability distribution over $\strings$.
The \defn{prefix probability} $\pPrefix(\str)$ is the probability that a string drawn from $\p$ has $\str$ as a prefix:
\begin{equation}
\pPrefix(\str) \defeq 
\sum_{\str' \in \strings} \p(\str\,\str')
\label{eq:prefix-prob}
\end{equation}
The \defn{conditional prefix probability} is the probability that a string drawn from $\p$ has the prefix $\str \, \str'$ given that it already has the prefix $\str$ for any strings $\str, \str' \in \strings$:
\begin{equation}
\pPrefix(\str'\mid\str) \defeq \frac{\pPrefix(\str\,\str')}{\pPrefix(\str)}
\quad\text{and}\quad\pPrefix(\eos \mid \str) \defeq {\p(\str)} / {\pPrefix(\str)}
\end{equation}
\noindent Then, the probability of $\str$ may be factorized as
\begin{align}
\p(\str) = \pPrefix(\eos \mid \str) \prod_{t=1}^{|\str|} \pPrefix(\sym_t \mid \str_{< t})
\label{eq:factored-lm}
\end{align}

\paragraph{Global conditioning.}
We build on recent work defining constrained generation as probabilistic conditioning \citep[e.g.,][]{borschinger2011particle,dubbin2012unsupervised,yang2013log,buys2015bayesian,lin2018neural,miao2020right,krause2021gedi,yang2021fudge,meng2022controllable,qin2022cold,zhang2023tractable,svs,hu2024amortizing,lew2023sequential,pmlr-v235-du24a,zhao2024probabilistic,park2024grammar,ahmed2025controllable,loula2025syntactic}.
In particular, we define the \defn{posterior distribution} $\ensemble(\str)$ of a language model \defn{prior} $\str \sim \p$ subject to the condition $\condition(\str)$, as
\begin{align}
\ensemble(\str) \defeq \frac{1}{G} \targetFn(\str) \qquad
\targetFn(\str) \defeq \p(\str) \condition(\str) \qquad
G \defeq \sum_{\str \in \strings} \targetFn(\str)
\end{align}
For \ensemble to be a well-defined probability distribution, we require $G = \Pr_{\str \sim \p}[\condition(\str)] > 0$.
In other words, there must be a way to generate a constraint-conforming string from $\p$, and $G$ tells us the probability of this event.
Note that 
\begin{align}
\ensemble(\str) = \Pr_{X \sim \p}\left[ X = \str \,\middle|\, X \in \conditionSet \right]
\end{align}
Probabilistic conditioning is an enticing method for conditional generation as it is the only method that preserves the relative probabilities of all events that satisfy the condition.  

Note that \ensemble is a language model, so it has conditional prefix probabilities $\ensemblePrefix$. Unfortunately, however, $G$ and the conditional prefix probabilities are generally intractable to compute exactly as they involve intractable sums over all $\str \in \strings$ \citep{rosenfeld2001whole}.  This makes ancestral sampling, i.e., left-to-right sampling from the conditional prefix probability distribution, infeasible.  In what follows, we will develop methods for (approximate) sampling from \ensemble.

\paragraph{(Sequence-level) Rejection sampling.}
The most straightforward algorithm for sampling from \ensemble is rejection sampling. Note that this is the intention of sample-verify methods:

\begin{center}
\begin{minipage}{0.95\textwidth}
\begin{minted}{python} 
def rejection_sampling():
    while True:
        @$\str \sim \p$@  # sample complete string from the prior
        if @$\condition(\str)$@:  # only return the string if the condition is satisfied
            return @$\str$@
\end{minted}
\end{minipage}
\end{center}
Rejection sampling has an expected runtime of $\bigO{1/G}$ per sample.  So it is only practical in this setting when $G \approx 1$.

\paragraph{Locally constrained decoding.}\label{sec:lpoe}
Locally constrained decoding (typically performed through token masking) defines a language model $\lpoe$ that attempts to approximate \ensemble by approximating \ensemblePrefix with $\lpoePrefix(\sym' \mid \str) \approx \ensemblePrefix(\sym' \mid \str)$ where 
\begin{align}
\lpoePrefix(\sym' \mid \str) \defeq \frac{\pPrefix(\sym' \mid \str) \conditionPrefix(\sym' \mid \str)}{L(\str)} 
\quad\text{and}\quad L(\str) 
\defeq 
\smashoperator{\sum_{\str' \in \alphabet \cup \set{\eos}}}
\pPrefix(\sym' \mid \str) \conditionPrefix(\sym' \mid \str)   
\end{align}
Prior work \citep[e.g.,][]{lew2023sequential, park2024grammar,ahmed2025controllable} has shown
that this local approximation can be very different (i.e., biased) from $\ensemble$ in both theory and practice.  Fortunately, it is possible to improve this approximation (i.e., overcome this bias) with additional computation using the techniques that we describe next.

\textbf{Explanation for why locally constrained decoding is biased.}
Although $\lpoe$ is an effective method for generating strings $\str \sim \lpoe$ that satisfy $\condition(\str)$, it has a tendency to over-represent certain strings.  We can quantify this through the string's relative probability
\begin{align}
\rho(\str) 
\defeq \frac{\ensemble(\str)}{\lpoe(\str)}
= \frac{1}{G} \frac{\p(\str) \condition(\str)}{\lpoe(\str)}
= \frac{1}{G} \prod_{t=1}^{|\str|+1} L(\str_{<t})
\end{align}
because
\begin{align}
\ensemble(\str) = \frac{\p(\str) \condition(\str)}{G} \quad\text{and}\quad
\lpoe(\str) = \frac{\p(\str) \condition(\str)}{\prod_{t=1}^{|\str|+1} L(\str_{<t})}
\end{align}
In other words, if we compare samples from the local distribution $\lpoe$ to their relative probability in the global distribution $\ensemble$, the string $\str$ will appear at a rate of $\rho(\str)$ more if $\rho(\str) < 1$ or less if $\rho(\str) > 1$ than it should.

In this work, $L(\str) \le 1$ for all $\str$. 

In the expression for $\rho(\str)$, we see the factor that actually depends on the string $\str$ independent of $G$, which is fortunate for us as we cannot efficiently compute $G$.  Let $w(\str) = G \rho(\str) = \prod_{t=1}^{|\str|+1} L(\str_{<t})$.

Clearly, $\prod_{t=1}^{|\str|+1} L(\str_{<t})$ is an important diagnostic for sample quality.

Operationally, when sampling from $\lpoe$, the cumulative product of local $L$'s is a useful signal for detecting low-quality samples.  

Interestingly, the average weight is equal to $G$, i.e., $\E_{\str \sim \lpoe}\left[ \prod_{t=1}^{|\str|+1} L(\str_{<t}) \right] = G$.

Although the expression for the bias does not care about the ordering, it's hard not to think about the operational view of ancestral sampling from $\lpoe$; in this view, the sample is created from left to right by sampling from the conditional token distributions.  We say that our specific sample is distorted by the product of its local normalization constants.  This quantity measures how much the constraints affected the sample.  If the constraints only rule out very low (or even zero) probability tokens, then the distortion will be small as $L$ will be $\approx 1$.

This operational view motivates the use of sequential Monte Carlo (SMC).  In the case of SMC, we compare samples of complete and incomplete strings of length $\le t$, and resample those that appear to be on track, i.e., favoring those that have higher intermediate weight values.  Thus, if a particle was favored initially by the proposal distribution, it may be evicted later in favor of a replica of a higher-weight particle.

Below is a simple example illustrating global conditioning, which can be used to illustrate the difference with respect to local conditioning.

\begin{example}
\label{ex:tiny-conditioning} 
Suppose that $\domain = \{ \aa, \bb \}$, the language of valid strings is $\mathcal{L} = \{ \aa \aa, \bb \aa \}$, and the probability distribution is encoded in the following probability tree, which be annotated with the valid and invalid strings:

\begin{center}
\begin{tikzpicture}[>=Stealth, 
                    node distance=2.5cm, 
                    on grid, 
                    auto, 
                    every node/.style={font=\small}]

\tikzset{state/.style={
    rounded corners,
    fill=red!10,
    minimum size=6mm,
    inner sep=2pt,
    rectangle
}}

\node[state] (start)                        {$\varepsilon$}; 
\node[state] (1)     [above right=of start][yshift=-.5cm] {$\aa$};
\node[state] (2)     [below right=of start][yshift=.5cm] {$\bb$};
\node[state] (3)     [above right=of 1][yshift=-1.25cm]    {$\;\aa\,\aa\;\cmark$};
\node[state,fill=black!10] (4)     [below right=of 1][yshift=1.25cm]     {$\;\aa\,\bb\;\xmark$};
\node[state] (5)     [above right=of 2][yshift=-1.25cm]    {$\;\bb\,\aa\;\cmark$};
\node[state,fill=black!10] (6)     [below right=of 2][yshift=1.25cm]  {$\;\bb\,\bb\;\xmark$};

\draw[->] (start) -- node[midway,sloped,above] {\aa/0.9} (1);
\draw[->] (start) -- node[midway,sloped,below] {\bb/0.1} (2);

\draw[->] (1) -- node[midway,sloped,above] {\aa/0.01} (3);
\draw[->,black!50] (1) -- node[midway,sloped,below] {\bb/0.99} (4);

\draw[->] (2) -- node[midway,sloped,above] {\aa/0.99} (5);
\draw[->,black!50] (2) -- node[midway,sloped,below] {\bb/0.01} (6);

\end{tikzpicture}
\end{center}
\noindent Under the original distribution, we have

$p(\aa \, \aa) = .9 \cdot .01 = 0.009$

$p(\aa \, \bb) = .9 \cdot .99 = 0.891$

$p(\bb \, \aa) = .1 \cdot .99 = 0.099$

$p(\bb \, \bb) = .1 \cdot .01 = 0.001$

The globally conditioned distribution is 

$\ensemblePrefix(\aa \, \aa) = .083333$

$\ensemblePrefix(\bb \, \aa) = .916667$


This example illustrates a local reversal: under the original distribution $\p$, the first symbol is $\aa$ in $.9$ of cases, and $\bb$ is the first symbol in $.1$ cases. However, after conditioning, the case that made $\aa$ so common (i.e., $\aa \, \bb$) is disallowed.  And, less importantly, a minor case for $\bb$ is disallowed.  This makes the globally conditioned conditional prefix probability:

$\ensemblePrefix(\aa \mid \varepsilon) = \frac{0.009}{0.009 + .099} = .083333$

$\ensemblePrefix(\bb \mid \varepsilon) = \frac{0.099}{0.009 + .099} = .916667$

\noindent which is a dramatic reversal from $.9$ and $.1$, respectively.
   
\end{example}

\paragraph{Importance sampling.}
In our setting, importance sampling is a simple sampling-based approximate inference technique that can be used to extend locally constrained data with a weight correction.  The weight correction allows us to overcome the bias in $\lpoePrefix \approx \ensemblePrefix$ with additional computation, i.e., taking more samples.  Given a computational budget of $\N > 0$ samples, the \defn{importance sampling} procedure works as follows:
\begin{enumerate}
\item Sample from the locally constrained distribution: $\str^{(1)}, \ldots, \str^{(\N)} \simIID \q$ where $\q$ is a \defn{proposal distribution} such as $\q = \lpoe$.
\item Compute weights for each $\n$: $\w^{(\n)} = \frac{\p(\str^{(\n)})}{\q(\str^{(\n)})}$.  For the special case of $\q = \lpoe$, the weight simplifies to $\w^{(\n)} = \prod_{t=1}^{|\str^{(\n)}|+1} L(\str_{<t})$.
\item Define estimates :
\begin{align*}
\widehat{G} \defeq \frac{1}{\N} \sum_{\n=1}^\N \w^{(\n)} \qquad
\widehat{\targetFn}(\str) \defeq \frac{1}{\N} \sum_{\n=1}^\N \w^{(\n)} \indicator{\str = \str^{(\n)}} \qquad
\widehat{\ensemble}(\str) \defeq \frac{\widehat{\targetFn}(\str)}{\widehat{G}} 
\end{align*}
\item Return a sample from the posterior estimate $\widehat{\ensemble}$.
\end{enumerate}

\begin{example}[\cref{ex:tiny-conditioning}, continued]
Returning to the above example, in this case, importance sampling with the locally conditioned distribution as proposal generates 

$\lpoe(\aa \, \aa) = .9$ with weight $L(\aa) \cdot L(\aa\,\aa) = 1 \cdot .01 = .01$

$\lpoe(\bb \, \aa) = .1$ with weight $L(\bb) \cdot L(\bb\,\aa) = 1 \cdot .99 = .99$


\noindent Thus, the importance-weighted distribution estimate is

$\widehat{\ensemble}(\aa \, \aa) = \frac{.01 \cdot .9}{.01 \cdot .9 + .1 \cdot .99} = .083333$

$\widehat{\ensemble}(\bb \, \aa) = \frac{.9 + .1}{.01 \cdot .9 + .1 \cdot .99} = .916667$

\noindent which is precisely the global distribution.
\end{example}

\paragraph{Sequential Monte Carlo.}
SMC is an extension of importance sampling, which effectively applies importance sampling to a sequence of intermediate target distributions chosen to keep partial generations on track rather than once for the entire sequence.

We define an intermediate \defn{twist function} (also referred to as a \defn{shaping function}) $\shapingPrefix\colon \strings \to \Rnonneg$, which is the key to providing partially generated strings intermediate feedback.  Complete strings, on the other hand, will be judged by \targetFn.  In this work, we use $\shapingPrefix(\str) = \pPrefix(\str) \conditionPrefix(\str)$, but we will discuss alternatives shortly.  In our incarnation of the SMC algorithm, both complete and incomplete strings will evolve through a time-indexed state space where, at time $t$, the state contains two variables: a string of length $\le t$, a Boolean indicating if the string is active (incomplete).  When the string is active, its length is restricted to equal $t$. Once the string has been completed, it is never changed, i.e., no symbols can be appended to it.

\begin{subequations}
We define the \defn{initial target} as
\begin{align}
\pi_0(\activeParticle, \str) &\defeq \indicator{\activeParticle=\top, \str=\varepsilon}
\end{align}
which says that initially, the only possible state is $\Tuple{\top, \varepsilon}$, i.e., active and equal to the empty string.
We define the \defn{intermediate targets} $\pi_t$ (for $t > 0$) as\footnote{A complete string $\str$ requires $|\str|+1$ steps to be generated due to the final $\eos$ event. This is why $\pi_t$ has complete strings of length $< t$, rather than (say) $\le t$.}
\begin{align}
\pi_t(\activeParticle, \str) 
&= 
\shapingPrefix(\str) \indicator{|\str| = t, \activeParticle=\top} 
&\explain{incomplete} \\
&+
\targetFn(\str) \indicator{|\str| < t, \activeParticle=\bot}
& \explain{complete; length \ensuremath{< t}} \nonumber
\end{align}
Note that as $t \to \infty$, $\pi_t(\top, \cdot)$ converges to $\targetFn$.
\end{subequations}

In this work,\footnote{A guiding principle for designing $\shapingPrefix$ is to approximate $\shapingPrefix \approx \ensemblePrefix$.  However, any choice of $\shapingPrefix$ that satisfies the technical condition $\shapingPrefix(\str) = 0 \Longrightarrow \ensemblePrefix(\str) = 0$ will converge (albeit with different rates).
Note that if $\shapingPrefix = \ensemblePrefix$, the SMC algorithm is an \emph{exact} sampler for \ensemble.  Unfortunately, computing $\ensemblePrefix$ exactly is typically intractable, so we must approximate it.} we use $\shapingPrefix(\str) = \pPrefix(\str) \conditionPrefix(\str)$ as our shaping function as in \citep{loula2025syntactic}.  Under this choice of intermediate target, the constraint checker ensures that we always have at least one valid complete of the string prefix $\str$.  This feedback is useful for detecting rejection as soon as the string is guaranteed to fail, and it is useful for detecting cases where a lot of the probability mass from the prior has been eliminated (i.e., we are likely to have a low-weight particle in the sense of importance sampling).  In relation to $\ensemblePrefix$, it provides a sufficient condition for when $\ensemblePrefix(\str) = 0$.  It, however, does not provide much information beyond that.\footnote{Note that it is ok from the perspective of the technical conditions for $\conditionPrefix(\str)$ to have false positives (i.e., strings that are considered ok, that are not); however, false negatives are prohibited.}

We define the shorthand
\begin{align}
\shapingPrefix(\sym' \mid \str) \defeq \begin{cases}
\frac{\targetFn(\str)}{\shapingPrefix(\str)} &\textbf{if } \sym' = \eos\\
\frac{\shapingPrefix(\str\,\sym')}{\shapingPrefix(\str)}    & \textbf{otherwise}
\end{cases}
\end{align}
Importantly, this definition gives the following factorization of $\targetFn$, $\forall \str \in \strings$,
\begin{align}
\targetFn(\str) = \shapingPrefix(\varepsilon) \shapingPrefix(\eos \mid \str) \prod_{t=1}^{|\str|} \shapingPrefix(x_t \mid \str_{<t})
\label{eq:factored-potential}
\end{align}

It is straightforward to verify that the shaped weights of complete strings are equivalent to those used in importance sampling:
\begin{align}
\frac{\targetFn(\str)}{\proposal(\str)} 
&= \shapingPrefix(\varepsilon) \frac{\shapingPrefix(\eos \mid \str)}{\proposalPrefix(\eos \mid \str)} \prod_{t=1}^{|\str|} \frac{\shapingPrefix(\sym_t \mid \str_{< t})}{\proposalPrefix( \sym_t\mid\str_{< t}) }
\end{align}

We provide pseudocode for the SMC procedure in \cref{alg:smc}. 
\begin{algorithm}[!ht] 
\caption{Sequential Monte Carlo}
\label{alg:smc}

\begin{algorithmic}[1]
\footnotesize
\Procedure{SMC}{$\proposal, \shapingPrefix, \N, \tau$}

\For{$\n = 1 \ldots \N$}    \Comment{$\N$ number of samples}
  \State $(\str^{(\n)}, \w^{(\n)}, \activeParticle^{(\n)}) \gets (\varepsilon, \shapingPrefix(\varepsilon), \texttt{true})$
\EndFor

\While{$\exists \n \in 1 \ldots \N\colon \activeParticle^{(\n)}$}
\For{$\n=1 \dots \N\text{ s.t. } \activeParticle^{(\n)}$} \Comment{Incomplete particles}

\State $\sym' \sim \proposal(\cdot \mid \str^{(\n)})$
\If{$\sym' = \eos$}   \Comment{Complete particle}
  \State $\activeParticle^{(\n)} \gets \texttt{false}$
\Else
  \State $\str^{(\n)} \gets \str^{(\n)} \circ \sym'$
\EndIf
\State $\w^{(\n)} \gets \w^{(\n)} \frac{\shapingPrefix(\sym' \mid \str^{(\n)})}{\proposal(\sym' \mid \str^{(\n)})}$ \label{line:smc_weight-final}
\EndFor

\State $(\str^{(\cdot)}, \w^{(\cdot)}, \activeParticle^{(\cdot)})
\gets \textsc{Resample}(\str^{(\cdot)}, \w^{(\cdot)}, \activeParticle^{(\cdot)}, 
\tau
)$\!\!\!\!

\EndWhile

\State $\widehat{G} \gets \frac{1}{\N} \sum_{\n=1}^\N \w^{(\n)}$ 
\State $\widehat{\targetFn}(\str) \gets \frac{1}{\N} \sum_{\n=1}^\N \w^{(\n)} \mathbbm{1}\{ \str \!=\! \str^{(\n)}\}$
\State $\widehat{\ensemble}(\str) \gets \frac{\widehat{\targetFn}(\str)}{\widehat{G}}$

\State \Return $(\widehat{G}, \widehat{\targetFn}, \widehat{\ensemble})$ 

\EndProcedure

\vspace{.5\baselineskip}
\Procedure{Resample}{$\str^{(\cdot)}, \w^{(\cdot)}, \activeParticle^{(\cdot)}, 
\tau$}

\State $\totalWeight \gets \sum_{\n=1}^\N \w^{(\n)}$

\State $\ess \gets \totalWeight^2 / \left(\sum_{\n=1}^\N \big(\w^{(\n)}\big)^2 \right)$\Comment{Effective sample size}\label{line:ess-def}

\If{$\ess < \tau \!\cdot\! \N$}  \Comment{Resample if needed}

\State $\overline{\str}^{(\cdot)} \gets \str^{(\cdot)};\ \tmpWeight^{(\cdot)} \gets \w^{(\cdot)}$ \Comment{Temporary copy}

\For{$\n = 1 \ldots \N$}
\State $R \sim \mathrm{Categorical}(\frac{1}{\totalWeight}\langle \tmpWeight^{(1)}, \ldots, \tmpWeight^{(\N)}\rangle)$
\State $(\str^{(\n)}, \w^{(\n)}, \activeParticle^{(\n)}) \gets (\overline{\str}^{(R)}, \totalWeight/\N, \activeParticle^{(R)})$
\EndFor
\EndIf

\State \Return $(\str^{(\cdot)}, \w^{(\cdot)}, \activeParticle^{(\cdot)})$

\EndProcedure

\end{algorithmic}
\end{algorithm}

\paragraph{SMC extension for properly weighted token proposals.} Our method uses an extension of \cref{alg:smc} that allows for a properly weighted proposal distribution (See \cref{app:ravi} for a more detailed discussion of proper weighting). More specifically, the token proposals will now be a pair of a token and a weight, i.e., $(\sym', \w') \sim \proposal(\cdot \mid \str^{(\n)})$. The requirement is that token proposal distribution is properly weighted with respect to the intermediate target: $\shapingPrefix(\sym' \mid \str^{(\n)})$.  
While still yielding sound inference, the use of approximate proposals does admit additional variance.
We provide several methods for defining these kinds of proposal distributions (\cref{sec:algorithms}). Pseudocode for the extended method is provided in \cref{alg:smc-pwp}.

\begin{algorithm}[H] 
\caption{Sequential Monte Carlo with Properly Weighted Proposal}
\label{alg:smc-pwp}

\begin{algorithmic}[1]
\footnotesize
\Procedure{SMC-PWP}{$\proposal, \shapingPrefix, \N, \tau$}

\For{$\n = 1 \ldots \N$}    \Comment{$\N$ number of samples}
  \State $(\str^{(\n)}, \w^{(\n)}, \activeParticle^{(\n)}) \gets (\varepsilon, \shapingPrefix(\varepsilon), \texttt{true})$
\EndFor

\While{$\exists \n \in 1 \ldots \N\colon \activeParticle^{(\n)}$}
\For{$\n=1 \dots \N\text{ s.t. } \activeParticle^{(\n)}$} \Comment{Incomplete particles}

\State $(\sym', \w') \sim \proposal(\cdot \mid \str^{(\n)})$
\If{$\sym' = \eos$}   \Comment{Complete particle}
  \State $\activeParticle^{(\n)} \gets \texttt{false}$
\Else
  \State $\str^{(\n)} \gets \str^{(\n)} \circ \sym'$
\EndIf
\State $\w^{(\n)} \gets \w^{(\n)} \cdot \w'$
\EndFor

\State $(\str^{(\cdot)}, \w^{(\cdot)}, \activeParticle^{(\cdot)})
\gets \textsc{Resample}(\str^{(\cdot)}, \w^{(\cdot)}, \activeParticle^{(\cdot)}, 
\tau
)$\!\!\!\!

\EndWhile

\State $\widehat{G} \gets \frac{1}{\N} \sum_{\n=1}^\N \w^{(\n)}$ 
\State $\widehat{\targetFn}(\str) \gets \frac{1}{\N} \sum_{\n=1}^\N \w^{(\n)} \mathbbm{1}\{ \str \!=\! \str^{(\n)}\}$
\State $\widehat{\ensemble}(\str) \gets \frac{\widehat{\targetFn}(\str)}{\widehat{G}}$

\State \Return $(\widehat{G}, \widehat{\targetFn}, \widehat{\ensemble})$ 

\EndProcedure

\end{algorithmic}
\end{algorithm}

\newpage

\section{Recursive Auxiliary-Variable Inference (RAVI)}
\label{app:ravi}
\setcounter{figure}{0}

\defn{Importance sampling} is an approach to approximate sampling from an unnormalized target distribution $\zpost$, with normalizing constant $\Z$, through a proposal distribution $\prop$ whose support is at least that of $\zpost$.
We would like to design $\prop$ with favorable properties towards our downstream goal, e.g., efficient runtime.
Crucially, this design of a strong proposal distribution $\prop$ is often mediated by auxiliary random choices $\ar$ such that we sample $\Tuple{\x,\ar} \sim \prop$.
When this is the case, it is necessary to calculate a weight $\w(\x,\ar)$ to correct for $\ar$.

RAVI \citep{lew2022recursive} gives us a generalized, flexible recipe for deriving the weights of relatively arbitrary choices of $\prop$.
Within RAVI, we may work with any unnormalized target distribution $\zpost$ over any space $\domain$. 
Let $\Tuple{\x,\w} \sim \wprop$ denote the process of sampling from a proposal $\prop$ and calculating its weight $\w$.
We are interested in developing $\wprop$ with the following property:

\begin{definition}
\label{def:proper-weight}
The proposal distribution $\wprop$ is \defn{properly weighted} if:
\begin{equation}
    \E_{\pair \sim \wprop}[\w \, \f\of{\x}] 
    = \Z \, \E_{\x \sim \post}[\f\of{\x}]
\end{equation}
\end{definition}
Note that by taking trivial $\f\of\x=1$, this property implies $\E_{\pair \sim \wprop}[\w] = \Z$. 
Leveraging this identity, each estimator $\Zest$ in the main text is approximated via the weight $\w$ of a properly weighted proposal distribution. 
For notational consistency with the literature on proper weighting, $\w$ will be used as opposed to $\Zest$ throughout the appendix.

\subsection{The Proposal} 
A RAVI proposal is a joint distribution $\prop(\ar,\x)$ over a product space $\aux \times \domain$, where $\aux$ holds auxiliary random choices made by the proposal. Typically, proposals will be designed to make the marginal $\prop(\x)$ a good approximation to the target $\post\of{\x}$. 

In this work, we consider exclusively exact proposals, where $\prop(\x) = \post\of{\x}$. This is achieved via various formulations of rejection sampling, where $\ar$ represents the auxiliary randomness generated by the rejection sampler. Because 
$\prop(\x) = \sum_{\ar \in \aux} \prop(\ar,\x)$ 
is typically intractable to evaluate exactly, the usual importance weight $\w = \frac{\zpost\of{\x}}{\prop(\x)}$ cannot be computed directly. RAVI provides a way to tractably compute a (noisy) importance weight that still satisfies proper weighting.

\subsection{The Meta-Proposal} 
A RAVI meta-proposal $\mprop(\ar;\x)$ is designed to infer $\ar$ given $\x$. The optimal meta-proposal would be $\prop(\ar|\x) \defeq \prop\of{\ar,\x}/\prop\of{\x}$, but this optimal choice is often not tractable. In practice, we select a family of probability distributions $\mprop(\ar;\x)$ over $\aux$, indexed by $\x \in \domain$, with the appropriate support, in order to define an `extended' target $(\zpost\extend\mprop)\of{\ar,\x}\defeq\zpost\of{\x}\mprop\of{\ar;\x}$
over the joint space. Formally, we require that $\zpost\extend\mprop \ACwrt  \prop$.\footnote{For $\mu,\nu$ two distributions on the same domain, $\mu$ is said to be \defn{absolutely continuous} (AC) with respect to $\nu$ (written $\mu\ACwrt \nu$) if and only if $\mu$ is zero anywhere that $\nu$ is zero.} For a given $\x$, this means that $\mprop$ should, with probability 1, propose some $\ar$ such that $\prop(\ar,\x) > 0$.

\subsection{Properly Weighted Sampling} 

\begin{definition}
\label{def:ravi-1}
We define \defn{1-level RAVI sampling} (cf. 2-level RAVI sampling in \cref{def:ravi-2})
from a proposal $\prop$ and meta-proposal $\mprop$ as follows:
\begin{enumerate}
\item Generate $(\ar,\x) \sim \prop$.
\item Evaluate $\w \defeq \frac{\zpost\of{\x} \, \mprop(\ar;\x)}{\prop(\ar,\x)}$.
\item Return $\pair$.
\end{enumerate}
\end{definition}

This can be seen as standard importance sampling on the extended state space $\aux \times \domain$. The weighted value $\Tuple{\Tuple{\ar,\x}, \w}$ is properly weighted for the extended target $\zpost\of{\x} \, \mprop(\ar;\x)$, which means that the weighted value $\pair$ is properly weighted for the re-marginalized target $\zpost\of{\x}$.
When $\mprop(\ar;\x)$ performs "perfect meta-inference," i.e., when $\mprop(\ar;\x) = \prop(\ar|\x)$, then $\w 
= \frac{\zpost\of{\x} \, \mprop(\ar;\x)}{\prop(\ar,\x)}
= \frac{\zpost\of{\x} \, \prop(\ar|\x)}{\prop(\ar\mid\x) \, \prop(\x)}
= \frac{\zpost\of{\x}}{\prop(\x)}$, meaning we compute exact importance weights. Otherwise, the importance weights will be noisier but will still have the same expected value (that is, $\E[\w\mid\x] = \frac{\zpost\of{\x}}{\prop(\x)}$; consequently we also have $\E[\w] = \E_{(r,\x) \sim \prop}[\frac{\zpost\of{\x}}{\prop(\x)}] = \Z$).

\begin{restatable}[]{proposition}{RAVIOne}
Given a proposal $\prop\of{\ar,\x}$ and meta-proposal $\mprop\of{\ar;\x}$, such that $\zpost\extend\mprop \ACwrt \prop$, 
\cref{def:ravi-1} is properly weighted for $\zpost$.
\end{restatable}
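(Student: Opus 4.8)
The plan is to verify Definition~\ref{def:proper-weight} directly: I will show that for an arbitrary test function $\f\colon\domain\to\mathbb{R}$, the quantity $\E_{\pair\sim\wprop}[\w\,\f\of{\x}]$ equals $\Z\,\E_{\x\sim\post}[\f\of{\x}]$. Since the weighted value is produced by first drawing $(\ar,\x)\sim\prop$ and then \emph{deterministically} setting $\w\defeq\frac{\zpost\of{\x}\,\mprop(\ar;\x)}{\prop(\ar,\x)}$, both $\w$ and $\x$ are functions of the underlying randomness $(\ar,\x)$, so I can push the expectation back onto $\prop$ and write
\[
\E_{\pair\sim\wprop}[\w\,\f\of{\x}]
= \E_{(\ar,\x)\sim\prop}\!\left[\frac{\zpost\of{\x}\,\mprop(\ar;\x)}{\prop(\ar,\x)}\,\f\of{\x}\right].
\]

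The core is a short algebraic manipulation against $\prop$. First I expand the expectation as the sum $\sum_{\ar\in\aux}\sum_{\x\in\domain}\prop(\ar,\x)\cdot\frac{\zpost\of{\x}\,\mprop(\ar;\x)}{\prop(\ar,\x)}\,\f\of{\x}$ and cancel the proposal density, collapsing it to $\sum_{\ar,\x}\zpost\of{\x}\,\mprop(\ar;\x)\,\f\of{\x}$. Next I marginalize out $\ar$ using the fact that $\mprop(\cdot;\x)$ is, for each fixed $\x$, a normalized distribution over $\aux$, so $\sum_{\ar}\mprop(\ar;\x)=1$; this leaves $\sum_{\x}\zpost\of{\x}\,\f\of{\x}$. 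Finally, recognizing $\post\of{\x}=\zpost\of{\x}/\Z$ rewrites this as $\Z\sum_{\x}\post\of{\x}\,\f\of{\x}=\Z\,\E_{\x\sim\post}[\f\of{\x}]$, the desired identity. Conceptually this is nothing more than standard importance sampling on the extended space $\aux\times\domain$ (with target $\zpost\of{\x}\,\mprop(\ar;\x)$ and proposal $\prop$), followed by re-marginalization back to $\domain$, as the surrounding text already anticipates.

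The one point requiring care---and the main, if modest, obstacle---is justifying the cancellation of $\prop(\ar,\x)$ in the denominator. Along any realized draw the weight is well-defined because sampled pairs satisfy $\prop(\ar,\x)>0$, so no division by zero ever occurs operationally. For the bookkeeping of the sum I would adopt the convention that the summand is $0$ whenever $\prop(\ar,\x)=0$; the hypothesis $\zpost\extend\mprop\ACwrt\prop$ guarantees that the extended target $\zpost\of{\x}\,\mprop(\ar;\x)$ also vanishes at every such point, so those terms contribute nothing and discarding them introduces no error. This is exactly where absolute continuity is indispensable, and it is precisely what licenses both the cancellation and the subsequent re-marginalization over $\ar$. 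Everything else is routine, so I expect no further difficulties.
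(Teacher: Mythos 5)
Your proposal is correct and follows essentially the same route as the paper's proof: expand the expectation as a sum over $\aux\times\domain$, cancel $\prop(\ar,\x)$, marginalize out $\ar$ using that $\mprop(\cdot;\x)$ is normalized, and identify $\zpost\of{\x}=\Z\,\post\of{\x}$. Your explicit justification of the cancellation via the absolute-continuity hypothesis is a welcome elaboration of what the paper compresses into the annotation ``Cancellation; AC,'' but it is the same argument.
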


\begin{proof}
\label{proof:RAVIOne}
\begin{subequations}
\begin{align}
\E_{\pair\sim\wprop_\textrm{1-RAVI}}[\w \, \f\of{\x}]
&= \E_{\Tuple{\ar,\x} \sim \prop}\left[\frac{\zpost\of{\x} \, \mprop(\ar;\x)}{\prop(\ar,\x)} \, \f\of{\x}\right] & \explain{\cref{def:ravi-1}} \\
&= \sum_{\ar \in \aux} \sum_{\x \in \domain} \prop(\ar,\x) \, \frac{\zpost\of{\x} \, \mprop(\ar;\x)}{\prop(\ar,\x)} \, \f\of{\x} & \explain{Def. of expectation} \\
&= \sum_{\ar \in \aux} \sum_{\x \in \domain} \zpost\of{\x} \, \mprop(\ar;\x) \, \f\of{\x} & \explain{Cancellation; AC} \\
&= \sum_{\x \in \domain} \zpost\of{\x} \, \f\of{\x} \sum_{\ar \in \aux} \mprop(\ar;\x) & \explain{Rearranging} \\
&= \sum_{\x \in \domain} \zpost\of{\x} \, \f\of{\x} & \explain{$\mprop(\cdot; \x)$ is a distribution} \\
&= \sum_{\x \in \domain} \Z \, \post\of{\x} \, \f\of{\x} & \explain{Def. of $\post$} \\
&= \Z \, \sum_{\x \in \domain} \post\of{\x} \, \f\of{\x} & \explain{Rearranging} \\
&= \Z \, \E_{\x \sim \post}[\f\of{\x}] & \explain{Def. of expectation}
\end{align}
\end{subequations}
\end{proof}

\subsection{Deeper Recursive Inference}

If our meta-inference $\mprop$ itself introduces auxiliary variables $\s \in \auxS$ (so we have $\mprop(\s,\ar;\x)$), then we can repeat this process, adding a meta-meta-proposal $\mmprop(\s;\ar,\x)$ that aims to approximate $\mprop(\s\mid\ar;\x)$. Our properly weighted algorithm then becomes:

\begin{definition}
\label{def:ravi-2}
We define 2-level RAVI sampling
from a proposal $\prop$, meta-proposal $\mprop$, and meta-meta-proposal $\mmprop$ as follows:
\begin{enumerate}
\item Generate $(\ar,\x) \sim \prop$ and $\s \sim \mmprop(\cdot;\ar,\x)$.
\item Evaluate $\w \defeq \frac{\zpost\of{\x} \, \mprop(\s,\ar;\x)}{\prop(\ar,\x) \, \mmprop(\s;\ar,\x)}$.
\item Return $\pair$.
\end{enumerate}
\end{definition}

Intuitively, since $\mprop$ ``overextended'' the target distribution to include not just $\ar$ but also $\s$, we must now extend the proposal $\prop$ with a distribution $\mmprop$ over $\s$. In general, we can continue this process (e.g., if $\mmprop$ has auxiliary variables), alternately extending the model and proposal until we reach an extension that does not introduce new auxiliary variables.

\begin{restatable}[]{proposition}{RAVITwo}
Given a proposal $\prop\of{\ar,\x}$, meta-proposal $\mprop\of{\s,\ar;\x}$, and meta-meta-proposal $\mmprop\of{\s;\ar,\x}$, such that $\zpost\extend\mprop \ACwrt  \prop\extend\mmprop$,
\cref{def:ravi-2} is properly weighted for $\zpost$.
\end{restatable}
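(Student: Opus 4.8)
The plan is to observe that $2$-level RAVI sampling is nothing more than $1$-level RAVI sampling carried out on an enlarged auxiliary space, so that the already-established one-level result (\cref{def:ravi-1}) applies essentially verbatim. Concretely, I would bundle the two auxiliary draws into a single variable $\Tuple{\ar,\s}$ ranging over $\aux\times\auxS$ and then read off the induced one-level proposal and meta-proposal. The one-level proposal becomes $\prop\extend\mmprop$, i.e.\ the joint law $\prop\of{\ar,\x}\,\mmprop\of{\s;\ar,\x}$ from which step~1 of \cref{def:ravi-2} samples $\Tuple{\ar,\x,\s}$; the one-level meta-proposal (a distribution over $\aux\times\auxS$ indexed by $\x$) becomes $\mprop\of{\s,\ar;\x}$. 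With these identifications the weight in \cref{def:ravi-2}, namely $\w=\frac{\zpost\of{\x}\,\mprop\of{\s,\ar;\x}}{\prop\of{\ar,\x}\,\mmprop\of{\s;\ar,\x}}$, is exactly the one-level weight formula applied to the bundled auxiliary variable.

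The second step is to check that the absolute-continuity hypotheses line up. The one-level result requires the extended target to be absolutely continuous with respect to the proposal; under the bundling, the extended target is $\zpost\of{\x}\,\mprop\of{\s,\ar;\x}=(\zpost\extend\mprop)\of{\ar,\x,\s}$ and the proposal is $(\prop\extend\mmprop)\of{\ar,\x,\s}$, so the required condition is precisely the hypothesis $\zpost\extend\mprop\ACwrt\prop\extend\mmprop$ of the proposition. Hence the one-level result yields proper weighting directly, with no further work.

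If one prefers a self-contained argument, I would instead repeat the computation of \cref{proof:RAVIOne} while carrying the extra variable $\s$ through the sums: expand $\E_{\pair\sim\wprop}[\w\,\f\of{\x}]$ as a triple sum over $\Tuple{\ar,\x,\s}$ against the sampling law $\prop\of{\ar,\x}\,\mmprop\of{\s;\ar,\x}$, cancel this law against the denominator of $\w$, and then marginalize. The only genuine subtlety---and the main obstacle---is the cancellation step: it is legitimate only on the support of the sampling law, so one must invoke $\zpost\extend\mprop\ACwrt\prop\extend\mmprop$ to guarantee that the numerator $\zpost\of{\x}\,\mprop\of{\s,\ar;\x}$ also vanishes wherever the denominator does, ensuring the omitted terms contribute zero to $\sum_{\ar,\x,\s}\zpost\of{\x}\,\mprop\of{\s,\ar;\x}\,\f\of{\x}$. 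After cancellation, summing out both auxiliary variables uses that $\mprop\of{\cdot,\cdot;\x}$ is a probability distribution over $\aux\times\auxS$, so that $\sum_{\ar,\s}\mprop\of{\s,\ar;\x}=1$, leaving $\sum_{\x}\zpost\of{\x}\,\f\of{\x}=\Z\,\E_{\x\sim\post}[\f\of{\x}]$, which is exactly proper weighting.
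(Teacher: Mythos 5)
Your proposal is correct, but your primary argument takes a genuinely different route from the paper's. The paper proves the statement by direct computation, exactly paralleling its one-level proof: it expands $\E[\w\,\f\of{\x}]$ as a triple sum over $\Tuple{\ar,\x,\s}$ against the sampling law $\prop\of{\ar,\x}\,\mmprop\of{\s;\ar,\x}$, cancels that law against the denominator of $\w$ (citing absolute continuity), marginalizes $\s$ and then $\ar$ out of $\mprop$, and concludes via the definition of $\post$. Your main argument instead reduces the two-level case to the already-established one-level case (\cref{def:ravi-1}) by bundling $\Tuple{\ar,\s}$ into a single auxiliary variable ranging over $\aux\times\auxS$: the induced one-level proposal is $\prop\extend\mmprop$, the induced one-level meta-proposal is $\mprop\of{\s,\ar;\x}$, the weight in \cref{def:ravi-2} is then literally the one-level weight formula, and the hypothesis $\zpost\extend\mprop\ACwrt\prop\extend\mmprop$ is literally the one-level absolute-continuity requirement on the enlarged space. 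This reduction is sound, since $\prop\of{\ar,\x}\,\mmprop\of{\s;\ar,\x}$ is a genuine joint distribution on $\aux\times\auxS\times\domain$ from which step 1 of \cref{def:ravi-2} samples, and $\mprop\of{\cdot,\cdot;\x}$ is a genuine distribution on $\aux\times\auxS$. What the reduction buys is conceptual economy: it makes the recursive character of RAVI explicit (each deeper level is just one-level sampling on an extended space), so it generalizes to arbitrary nesting depth with no new computation; the paper's direct calculation instead buys a self-contained derivation whose steps can be checked line by line. Your fallback self-contained argument---carrying $\s$ through the sums, cancelling only on the support of the sampling law as justified by absolute continuity, and marginalizing via $\sum_{\ar,\s}\mprop\of{\s,\ar;\x}=1$---is essentially identical to the paper's proof, including its correct identification of the cancellation step as the place where the absolute-continuity hypothesis is consumed.
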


\begin{proof}
\label{proof:RAVITwo}
\begin{subequations}
\begin{align}
&\E_{\pair\sim\wprop_\textrm{2-RAVI}}[\w \, \f\of{\x}] \nonumber \\
&= \E_{\Tuple{\ar,\x} \sim \prop, \s \sim \mmprop(\cdot;\ar,\x)}\left[\frac{\zpost\of{\x} \, \mprop(\s,\ar;\x)}{\prop(\ar,\x) \, \mmprop(\s;\ar,\x)} \, \f\of{\x}\right] & \explain{\cref{def:ravi-2}} \\
&= \sum_{\ar \in \aux} \sum_{\x \in \domain} \sum_{\s \in \auxS} \prop(\ar,\x) \, \mmprop(\s;\ar,\x) \, \frac{\zpost\of{\x} \, \mprop(\s,\ar;\x)}{\prop(\ar,\x) \, \mmprop(\s;\ar,\x)} \, \f\of{\x} & \explain{Def. of expectation} \\
&= \sum_{\ar \in \aux} \sum_{\x \in \domain} \sum_{\s \in \auxS} \zpost\of{\x} \, \mprop(\s,\ar;\x) \, \f\of{\x} & \explain{Cancellation; AC} \\
&= \sum_{\x \in \domain} \zpost\of{\x} \, \f\of{\x} \sum_{\ar \in \aux} \sum_{\s \in \auxS} \mprop(\s,\ar;\x) & \explain{Rearranging} \\
&= \sum_{\x \in \domain} \zpost\of{\x} \, \f\of{\x} \sum_{\ar \in \aux} \mprop(\ar;\x) & \explain{Marginalize over $\s$} \\
&= \sum_{\x \in \domain} \zpost\of{\x} \, \f\of{\x} & \explain{$\mprop(\cdot;\x)$ is a distribution} \\
&= \sum_{\x \in \domain} \Z \, \post\of{\x} \, \f\of{\x} & \explain{Def. of $\post$} \\
&= \Z \, \sum_{\x \in \domain} \post\of{\x} \, \f\of{\x} & \explain{Rearranging} \\
&= \Z \, \E_{\x \sim \post}[\f\of{\x}] & \explain{Def.\@ of expectation}
\end{align}
\end{subequations}
\end{proof}

\subsection{RAVI Intuitions for \cref{def:wrs} (WRS)}
\label{app:ravi-wrs}

In the context of the RAVI framework, WRS can be understood as representing the rejected samples as auxiliary variables generated during the sampling process. The auxiliary space $\aux = \cup_{i \in \mathbb{N}} \domain^i$ consists of finite lists of rejected samples. The proposal $\prop$ generates a trace of rejection sampling, while the meta-proposal $\mprop$ generates $\L$ additional rejection loops to improve our inference of the auxiliary variables. Finally, a meta-meta-proposal $\mmprop$ accounts for the additional auxiliary variables introduced by $\mprop$.

Intuitively, as $\L$ increases, we obtain a better estimate of the acceptance probability $\Z$, leading to lower variance in our importance weights. In the limit as $\L \to \infty$, meta-proposal $\mprop(\ar; \x)$ converges to its optimal distribution $\prop(\ar \mid \x)$ and the meta-meta-proposal $\mmprop(\s; \ar, \x)$ converges to its optimal distribution $\mprop(\s \mid \ar; \x)$.

\subsection{RAVI Intuitions for \cref{def:awrs} (AWRS)}
\label{app:ravi-awrs}

In the context of the RAVI framework, AWRS represents a modification where the sampling procedures in the proposal, meta-proposal, and meta-meta-proposal all maintain memory of previously rejected samples. The sampling distributions are continually renormalized after each rejection to account for the removed probability mass. This adaptation requires a different weight calculation that accounts for the changing probability distributions throughout the sampling process.

\newpage

\section{Proofs for \cref{sec:wrs} (WRS)}
\setcounter{figure}{0}   
\label{proof:WRS}

\WRSCorrect*

Recall $\Zest=\w$, where $\w$ is the weight of a properly weighted proposal distribution. $\w$ will be used in this section for consistency with notational convention from proper weighting (see \cref{app:ravi} for discussion).

First, we will show that $\x$ is distributed according to $\post$.

\begin{proof}
\label{proof:WRSExact}
\begin{subequations}
\begin{align}
\Pr_{\pair\sim\wprop_\textrm{WRS}}[\x] 
&= {\Pr_{\x\sim\prior}[\x\mid\lik\of\x]} & \explain{\cref{def:wrs}} \\
&= \frac{\Pr_{\x\sim\prior}[\x]\Pr_{\x\sim\prior}[\lik\of\x\mid\x]}{\Pr_{\x\sim\prior}[\lik\of\x]} & \explain{Bayes rule} \\
&= \frac{\prior\of\x\lik\of\x}{\Z} & \explain{Defs. of $\prior, \lik, \Z$} \\
&= \post\of\x & \explain{Def. of $\post$}
\end{align}
\end{subequations}
\end{proof}

Next, we will prove the unbiasedness of $\Zest=\w$ as an estimator for $\Z$ by proving that $\wprop_\textrm{WRS}$ is properly weighted for $\zpost$ (\cref{def:proper-weight}).

\begin{proof}
\label{proof:WRSPW}
We will prove this using the 2-level RAVI framework from \cref{def:ravi-2}.

First, we define our spaces and distributions:
\begin{itemize}
\item Let the auxiliary space $\aux = \cup_{i \in \Nat} \domain^i$ represent finite lists of samples rejected before obtaining an accepted sample.
\item For $\ar = (\ar_1, \ldots, \ar_\nrej) \in \aux$ and $\x \in \domain$, define our proposal distribution (Remember $\lik: \domain \to \set{0,1}$):
\begin{subequations}
\begin{align}
\prop(\ar, \x) 
&= \prior(\x_c) \, \lik(\x_c) \, \prod_{i=1}^{\nrej} \prior(\ar_i) \, (1-\lik(\ar_i)) & \explain{Def.} \\
&= \prior(\x_c) \, \lik(\x_c) \, \prod_{i=1}^{\nrej} \prior(\ar_i) & \explain{$\forall\ar\in\aux\colon\lik\of\ar=0$} \\
&= \prior(\x_c) \, \prod_{i=1}^{\nrej} \prior(\ar_i) & \explain{$\forall\x_c\in\cond\colon\lik\of{\x_c}=1$}
\end{align}
\end{subequations}
This represents the probability of rejecting $\ar_1 \cdots \ar_\nrej$ and accepting $\x_c$.

\item Our meta-proposal $\mprop$ requires additional auxiliary variables. Let $\auxS = (\aux \times \domain)^{\L} \times \{1, \ldots, \L\}$, representing $\L$ additional rejection loops along with an index $\L^*$ choosing one loop.
Recall $\mprop$ aims to infer the rejected $\ar$ given $\x$. But since $\x$ is independent of the rejected samples that preceded it, we set $\mprop$ to simulate its own $\L$ rejection sampling loop(s). To avoid this same problem at the next level of meta-inference, we leave auxiliary randomness consisting of sequences that can be simulated by rejection sampling in $\mmprop$ below.
\item For $\s = ((\s^{(1)}, \ldots, \s^{(\L)}), (\s^*_1, \ldots, \s^*_\L), \L^*) \in \auxS, \ar \in \aux, \x \in \domain$, set the meta-proposal to generate $\L$ rejection loops where each $\s^{(i)}$ represents rejected samples, $\s^*_i$ is an accepted sample, and $\L^*$ is the index of the split loop,\footnote{Note, the terms containing $\L^*$ will cancel out, so we do not actually need to sample it.} which is chosen with probability proportional to $\nrej_i+1$, where $\nrej_i = |\s^{(i)}|$. Return the chosen $\s^{(\L^*)}$ truncated at the sampled split-point as the proposed latent sequence of rejected samples.  Letting $\nrej_0=\nrej$ for convenience:
\begin{subequations}
\begin{align}
&\mprop(\s, \ar; \x)\nonumber\\
&= \frac{\nrej_0+\nrej_{\L^*}+1}{\sum_{i=0}^{\L} (\nrej_i + 1)} \, \frac{1}{\nrej_0 + \nrej_{\L^*}+1} \nonumber\\ 
&\quad\cdot \prod_{i=1}^{\L}\left(\prior(\s^*_i) \, \lik(\s^*_i)\,\prod_{j=1}^{\nrej_i} \prior(\s^{(i)}_j)(1-\lik(\s^{(i)}_j))\right) & \explain{Def.} \\
&= \frac{1}{\sum_{i=0}^{\L} (\nrej_i + 1)} \, \prod_{i=1}^{\L}\left(\prior(\s^*_i) \, \lik(\s^*_i)\,\prod_{j=1}^{\nrej_i} \prior(\s^{(i)}_j)(1-\lik(\s^{(i)}_j))\right)\hspace{-1in} & \explain{Cancellation} \\
&= \frac{1}{\sum_{i=0}^{\L} (\nrej_i + 1)} \, \prod_{i=1}^{\L}\left(\prior(\s^*_i) \, \lik(\s^*_i)\,\prod_{j=1}^{\nrej_i} \prior(\s^{(i)}_j)\right) & \explain{$\forall{\s^{(i)}}\in\auxS\colon\lik\of{\s^{(i)}}=0$} \\
&= \frac{1}{\sum_{i=0}^{\L} (\nrej_i + 1)} \, \prod_{i=1}^{\L}\left(\prior(\s^*_i)\,\prod_{j=1}^{\nrej_i} \prior(\s^{(i)}_j)\right) & \explain{$\forall{\s^*}\in\auxS\colon\lik\of{\s^*}=1$}
\end{align}
\end{subequations}

\item Our meta-meta-proposal $\mmprop$ aims to infer the auxiliary randomness $\s$ given $\ar$. Again, $\ar$ is not useful, but the sequence of rejected samples in $\s$ can be simulated by rejection sampling. Set $\mmprop$ to generate $\L$ independent rejection loops and select $\L^*$ uniformly, representing the guess about which loop was split. Note that due to independence, we can re-use our existing loops. $\mmprop$ is defined as follows:
\begin{subequations}
\begin{align}
\mmprop(\s; \ar, \x) 
&= \frac{1}{\L} \prod_{i=1}^{\L}\left(\prior(\s^*_i) \, \lik(\s^*_i)\,\prod_{j=1}^{\nrej_i} \prior(\s^{(i)}_j)(1-\lik(\s^{(i)}_j))\right)\hspace{-1in} & \explain{Def.} \\
&= \frac{1}{\L} \prod_{i=1}^{\L}\left(\prior(\s^*_i) \, \lik(\s^*_i)\,\prod_{j=1}^{\nrej_i} \prior(\s^{(i)}_j)\right) & \explain{$\forall{\s^{(i)}}\in\auxS\colon\lik\of{\s^{(i)}}=0$} \\
&= \frac{1}{\L} \prod_{i=1}^{\L}\left(\prior(\s^*_i)\,\prod_{j=1}^{\nrej_i} \prior(\s^{(i)}_j) \right) & \explain{$\forall{\s^*}\in\auxS\colon\lik\of{\s^*}=1$}
\end{align}
\end{subequations}
\end{itemize}

Following \cref{def:ravi-2}, and substituting our definitions, we calculate the proper weight as follows. Note all terms of $\prior$ and $\lik$ will cancel out:
\begin{subequations}
\begin{align}
\w &= \frac{\zpost(\x) \mprop(\s, \ar; \x)}{\prop(\ar, \x) \mmprop(\s; \ar, \x)} & \explain{\cref{def:ravi-2}} \\
&= \frac{\prior(\x)\lik(\x) \, \frac{1}{\sum_{i=0}^{\L} (\nrej_i + 1)} \, \prod_{i=1}^{\L}(\ldots)}{\prior(\x_c) \, \prod_{i=1}^{\nrej} \prior(\ar_i) \, \frac{1}{\L} \prod_{i=1}^{\L}(\ldots)} & \explain{Substitution of defs.} \\
&= \frac{\frac{1}{ \sum_{i=0}^{\L} (\nrej_i + 1)}}{\frac{1}{\L}} & \explain{Cancellation; AC} \\
&= \frac{\L}{\sum_{i=0}^{\L} (\nrej_i + 1)} & \explain{Algebra} \\
&= \frac{\L}{\L + \sum_{i=0}^{\L}\nrej_i} & \explain{Algebra} \\
&= \frac{\L}{\L + \nrej} & \explain{Def. of \nrej}
\end{align}
\end{subequations}

This matches the weight formula in \cref{def:wrs}, confirming that our sampling procedure is properly weighted for $\zpost$ justified by the RAVI framework.
\end{proof}

\subsection{Another perspective on WRS}
\label{proof:WRSMVUE}

Upon completing the RAVI derivation, it is now clear that an alternative formulation for the weights of WRS is possible. 
We can consider our weight calculation as using $\L+1$ independent rejection-sampling loops to estimate the acceptance probability $\Z$.
The total number of trials $\Tr\defeq\sum_{i=0}^{\L}(\nrej+1)$ required to reach $\nacc\defeq\L+1$ acceptances in such a process follows a Negative Binomial distribution, i.e., $\Tr\sim NB(\nacc,\Z)$.
Thus, calculation of $\w$ for WRS amounts to calculating an unbiased estimator of $\Z$ for the Negative Binomial distribution.
See \cref{sec:wrs} for another gloss of this perspective.

\begin{proposition}
$\Zest=\frac{\L}{\L+\nrej}$ is the minimum-variance unbiased estimator (MVUE) for $\Z$.
\end{proposition}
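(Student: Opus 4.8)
The plan is to apply the Lehmann--Scheffé theorem: I will show that $\nrej$ is a complete sufficient statistic for $\Z$ and that $\Zest$ is an unbiased function of $\nrej$, whence $\Zest$ is automatically the (essentially unique) minimum-variance unbiased estimator. First I would make the negative-binomial structure precise. Viewing the procedure as $\nacc = \L+1$ independent loops of i.i.d.\ Bernoulli($\Z$) trials, each run until its first success, the total trial count satisfies $\Tr = \nrej + \nacc \sim NB(\nacc, \Z)$, so the number of failures $\nrej$ has pmf $\Pr[\nrej = k] = \binom{k+\nacc-1}{k}(1-\Z)^k \Z^{\nacc}$ for $k \ge 0$, with $\nacc$ known. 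Writing this as $\Z^{\nacc}\binom{k+\nacc-1}{k}\exp\!\big(k\log(1-\Z)\big)$ exhibits a one-parameter exponential family with natural parameter $\log(1-\Z)$ and sufficient statistic $\nrej$; since $\Z$ ranges over the open interval $(0,1)$, the natural parameter ranges over an open set, the family is complete, and $\nrej$ is therefore a complete sufficient statistic for $\Z$.

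Second, I would confirm that $\Zest$ is an unbiased function of this statistic. That $\Zest = \frac{\L}{\L+\nrej} = \frac{\nacc-1}{(\nacc-1)+\nrej}$ depends on the data only through $\nrej$ is immediate, and its unbiasedness $\E[\Zest]=\Z$ is already guaranteed by \cref{proof:WRSPW} (equivalently, by proper weighting). For a self-contained check one may use the identity $\frac{\nacc-1}{\nacc-1+k}\binom{k+\nacc-1}{k} = \binom{k+\nacc-2}{k}$, which reduces $\E[\Zest]$ to a shifted negative-binomial series summing to $\Z^{\nacc}\cdot\Z^{-(\nacc-1)} = \Z$; note this manipulation requires $\nacc \ge 2$, i.e.\ $\L \ge 1$, exactly matching the hypothesis.

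Finally, Lehmann--Scheffé delivers the conclusion: an unbiased estimator that is a function of a complete sufficient statistic is the essentially unique MVUE. Since $\Zest$ is unbiased and measurable with respect to $\nrej$, it is the MVUE for $\Z$.

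I expect the completeness step, not the arithmetic, to be the crux. Unbiasedness by itself is cheap and already in hand, so the entire force of the \emph{minimum-variance} claim rests on $\nrej$ being complete; the cleanest way to secure that is the exponential-family form together with the fact that $\nacc$ is fixed and known. The role of the hypothesis $\L \ge 1$ is worth flagging explicitly, as it is simultaneously what makes $\Zest$ well-defined, what validates the unbiasedness identity, and what keeps $\nacc \ge 2$ so that the shifted series converges.
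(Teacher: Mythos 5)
Correct, and essentially the paper's own approach: you establish that $\nrej$ is a complete sufficient statistic via the one-parameter exponential-family form of the negative binomial with $\nacc=\L+1$ fixed, verify unbiasedness by the same shifted-series/binomial-identity computation (hinging on $\L\ge 1$), and invoke Lehmann--Scheff\'e to conclude. The only difference is presentational: the paper additionally \emph{derives} the formula $\Zest=\frac{\L}{\L+\nrej}$ by Rao--Blackwellizing the first-trial success indicator $\ifirst$ conditioned on $\nrej$, a step your argument makes unnecessary by directly checking that the given formula is an unbiased function of the complete sufficient statistic.
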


\begin{proof}
The PMF of $NB(\nacc, \Z)$, for $\nacc \in\Nat_{\ge0},\Z\in[0,1]$ is:
\begin{align}
\label{eq:nbpmf}
\Pr[\Tr = \tr; \nacc, \Z] = \binom{\tr-1}{\nacc-1}\Z^{\nacc}(1-\Z)^{\tr-\nacc}
\end{align}
supported on $\tr \in \{\nacc,\nacc+1,\nacc+2,\ldots\}$.

Assuming $\nacc=\L+1$ is fixed, the PMF may be written in exponential family form:
\begin{subequations}
\begin{align}
\Pr[\Tr = \tr; \nacc, \Z] 
&= \binom{\tr-1}{\nacc-1}\exp\of{\nacc\log\Z + (\tr-\nacc)\log(1-\Z)} \\
&= \binom{\tr-1}{\nacc-1}\exp\of{\tr\log(1-\Z)+\nacc\log\of{\frac{\Z}{1-\Z}}}
\end{align}
\end{subequations}

\noindent with sufficient statistic $\tr$. Since $\tr=\nacc+\nrej$, $\nacc$ is fixed, and we consider a one-parameter exponential family, we can conclude that $\nrej$ is a complete sufficient statistic for $\Z$.

Let's start with a simple unbiased estimator for $\Z$. Consider the indicator random variable $\ifirst$, denoting the success of the first trial. Trivially, $\E[\ifirst] = \Pr[\x_0\in\cond] = \Z$.
By the Rao-Blackwell theorem, we can reduce the variance of this estimator by computing its conditional expectation given the sufficient statistic $\nrej$:

\begin{subequations}
\begin{align}
\Zest &= \mathbb{E}[\ifirst \mid \nrej] \\
&= \Pr[\x_0\in\cond \mid \nrej] & \explain{Def. of Expectation} \\
&= \frac{\Pr[\nrej \mid \x_0\in\cond]\Pr[\x_0\in\cond]}{\Pr[\nrej]} & \explain{Bayes Rule} \\
&= \frac{\binom{(\nacc-1)+\nrej-1}{(\nacc-1)-1}\Z^{(\nacc-1)}(1-\Z)^{\nrej} \cdot \Z}{\binom{\nacc+\nrej-1}{\nacc-1}\Z^{\nacc}(1-\Z)^{\nrej}} & \explain{Substitute Defs.} \\
&= \frac{\binom{(\nacc-1)+\nrej-1}{(\nacc-1)-1}\Z^{\nacc}(1-\Z)^{\nrej}}{\binom{\nacc+\nrej-1}{\nacc-1}\Z^{\nacc}(1-\Z)^{\nrej}} & \explain{Combine like terms} \\
&= \frac{\binom{(\nacc-1)+\nrej-1}{(\nacc-1)-1}}{\binom{\nacc+\nrej-1}{\nacc-1}} & \explain{Cancellation} \\
&= \frac{\nacc-1}{\nacc+\nrej-1} & \explain{Algebra} \\
&= \frac{\L + 1 - 1}{\L + 1 + \nrej - 1} & \explain{\nacc=\L+1} \\
&= \frac{\L}{\L + \nrej} & \explain{Algebra}
\end{align}
\end{subequations}

So, $\Zest = \frac{\L}{\L + \nrej}$ is our estimator.

To confirm that $\Zest$ is unbiased, see that its expected value reduces to summing over the entire support of the PMF of $\Tr'\sim NB(\nacc-1,\Z)$, scaled by constant $\Z$.

\begin{subequations}
\begin{align}
\E[\Zest]
&= \E\left[\frac{\L}{\L + \nrej}\right]&\\
&= \sum_{\tr=\nacc}^{\infty}\left[\frac{\L}{\L + \nrej}\Pr[\Tr = \tr; \nacc, \Z]\right]& \explain{Def. of Expectation}\\
&= \sum_{\tr=\nacc}^{\infty}\left[\frac{\nacc-1}{\tr - 1}\binom{\tr-1}{\nacc-1}\Z^{\nacc}(1-\Z)^{\tr-\nacc}\right]& \explain{Substitute Defs.}\\
&= \sum_{\tr=\nacc}^{\infty}\left[\binom{\tr-2}{\nacc-2}\Z^{\nacc}(1-\Z)^{\tr-\nacc}\right]&\explain{Binom. Coef. Identity}\\
&= \sum_{\tr'=\nacc-1}^{\infty}\left[\binom{\tr'-1}{\nacc-2}\Z^{\nacc}(1-\Z)^{\tr'-(\nacc-1)}\right]&\explain{Reindex s.t. $\tr'=\tr-1$}\\
&= \Z\sum_{\tr'=\nacc-1}^{\infty}\left[\binom{\tr'-1}{(\nacc-1)-1}\Z^{\nacc-1}(1-\Z)^{\tr'-(\nacc-1)}\right]& \explain{Algebra}\\
&= \Z\sum_{\tr'=\nacc-1}^{\infty}\Pr[\Tr' = \tr'; \nacc-1, \Z]& \explain{\cref{eq:nbpmf}}\\
&= \Z& \explain{Dist. Sums to 1}
\end{align}
\end{subequations}
Therefore $\Zest$ is an unbiased estimator of $\Z$.
Note this proof relies on $\nacc=\L+1\ge2$, which we have by construction, since $\L\ge1$.

By the Lehmann-Scheffé theorem, since $\nrej$ is a complete sufficient statistic for $\Z$, and $\Zest$ is an unbiased estimator that is a function of $\nrej$, $\Zest$ must be the minimum-variance unbiased estimator (MVUE) for $\Z$.
\end{proof}

\newpage

\section{Proofs for \cref{sec:awrs} (AWRS)}
\setcounter{figure}{0}    
\label{proof:AWRS}

\AWRSCorrect*

As in \cref{proof:WRS}, $\Zest=\w$, and $\w$ will be used for consistency with notational convention on proper weighting (see \cref{app:ravi} for discussion).

The first statement, that $\x$ is distributed according to $\post$, follows directly from \cref{proof:WRS} and \cref{def:awrs}, i.e., that rejection samplers yield exact samples and AWRS is a rejection sampler.

Next, to prove the unbiasedness of $\Zest=\w$ as an estimator for $\Z$, we will prove that $\wprop_\textrm{AWRS}$ is properly weighted for $\zpost$ (\cref{def:proper-weight}).

\begin{proof}
\label{proof:AWRSPW}
We will prove this using the 2-level RAVI framework from \cref{def:ravi-2}, adapted for the case where our sampling procedures are adaptive.

First, we define our spaces and distributions with adaptivity:
\begin{itemize}
\item Let the auxiliary space $\aux = \cup_{i \in \Nat} \domain^i$ represent finite lists of distinct samples rejected before obtaining an accepted sample.

\item For $\ar = (\ar_1, \ldots, \ar_{\nrej_0}) \in \aux$ and $\x \in \domain$, we define our adaptive proposal distribution: 

\begin{subequations}
\begin{align}
&\prop(\ar, \x) \nonumber\\
&= \left(\prod_{i=1}^{\nrej_0} \frac{1}{1-\sum_{j=1}^{i-1} \prior(\ar)}\right) \, \prior(\x_c) \, \lik(\x_c) \,\prod_{i=1}^{\nrej_0}\prior(\ar_i)(1-\lik(\ar_i)) \,  \indicator{[\forall i \neq j. \ar_i \neq \ar_j]}\hspace{-.75in} & \explain{Def.} \\
&= \left(\prod_{i=1}^{\nrej_0} \frac{1}{1-\sum_{j=1}^{i-1} \prior(\ar)}\right) \, \prior(\x_c) \, \lik(\x_c) \, \prod_{i=1}^{\nrej_0}\prior(\ar_i) \, \indicator{[\forall i \neq j. \ar_i \neq \ar_j]}\hspace{-1in} & \explain{$\forall\ar\in\aux\colon\lik\of\ar=0$} \\
&= \left(\prod_{i=1}^{\nrej_0} \frac{1}{1-\sum_{j=1}^{i-1} \prior(\ar)}\right) \, \prior(\x_c) \, \prod_{i=1}^{\nrej_0}\prior(\ar_i) \, \indicator{[\forall i \neq j. \ar_i \neq \ar_j]} & \explain{$\forall\x_c\in\cond\colon\lik\of{\x}=1$}
\end{align}
\end{subequations}

This represents the probability of rejecting unique $\ar_1 \cdots \ar_\nrej$ and accepting $\x_c$, with appropriate renormalization at each step as we remove rejected samples.

\item Our meta-proposal runs a second adaptive rejection loop, using the rejected samples from the first loop to further constrain the sampling space. Let $\s = ((\s_1, \ldots, \s_{\nrej_1}), \s^*) \in \auxS$, where $\s_i$ are rejected samples, $\s^*$ is the accepted sample, and $\ar \concat \s$ denotes the concatenation of the rejected samples from both loops:
\begin{subequations}
\begin{align}
&\mprop(\s, \ar; \x) \nonumber\\
&= \frac{1}{\nrej_0+\nrej_1+1} \, \prod_{i=1}^{\nrej_0}(\prior(\ar_i) \, (1-\lik(\ar_i))) \, \prod_{i=1}^{\nrej_1} (\prior(\s_i) \, (1-\lik(\s_i))) \hspace{-1in}\nonumber \\
&\cdot \prior(\s^*) \, \lik(\s^*) \, \prod_{i} \frac{1}{\sum_{j=1}^{i} \prior(\ar \concat \s)} \, \indicator{[\forall i \neq j. (\ar \concat \s)_i \neq (\ar \concat \s)_j]} \hspace{-1in} & \explain{Def.} \\
&= \frac{1}{\nrej_0+\nrej_1+1} \, \prod_{i=1}^{\nrej_0}\prior(\ar_i) \, \prod_{i=1}^{\nrej_1} (\prior(\s_i) \, (1-\lik(\s_i))) \nonumber\\
&\cdot \prior(\s^*) \, \lik(\s^*) \, \prod_{i} \frac{1}{\sum_{j=1}^{i} \prior(\ar \concat \s)} \, \indicator{[\forall i \neq j. (\ar \concat \s)_i \neq (\ar \concat \s)_j]} & \explain{$\forall\ar\in\aux\colon\lik\of\ar=0$}
\end{align}
\begin{align}
&= \frac{1}{\nrej_0+\nrej_1+1} \, \prod_{i=1}^{\nrej_0}\prior(\ar_i) \, \prod_{i=1}^{\nrej_1} \prior(\s_i)\nonumber \\
&\cdot \prior(\s^*) \, \lik(\s^*) \, \prod_{i} \frac{1}{\sum_{j=1}^{i} \prior(\ar \concat \s)} \, \indicator{[\forall i \neq j. (\ar \concat \s)_i \neq (\ar \concat \s)_j]} & \explain{$\forall{\s^{(i)}}\in\auxS\colon\lik\of{\s^{(i)}}=0$} \\
&= \frac{1}{\nrej_0+\nrej_1+1} \, \prod_{i=1}^{\nrej_0}\prior(\ar_i) \, \prod_{i=1}^{\nrej_1} \prior(\s_i)\nonumber \\
&\cdot \prior(\s^*) \, \prod_{i} \frac{1}{\sum_{j=1}^{i} \prior(\ar \concat \s)} \, \indicator{[\forall i \neq j. (\ar \concat \s)_i \neq (\ar \concat \s)_j]} & \explain{$\forall{\s^*}\in\auxS\colon\lik\of{\s^*}=1$}
\end{align}
\end{subequations}
\item Our meta-meta-proposal continues the adaptive rejection process, starting with the previously rejected samples in $\ar$ already removed from consideration:

\begin{subequations}
\begin{align}
\mmprop(\s; \ar, \x) 
&= \left(\prod_{i=\nrej_0+1}^{\nrej_0+\nrej_1} \frac{1}{1-\sum_{j=1}^{i} \prior(\ar\concat\s)}\right) \, \prod_{i=1}^{\nrej_1}\prior(\s_i)(1-\lik(\s_i))\hspace{-1in}\nonumber \\
&\cdot \prior(\s^*) \, \lik(\s^*) \, \indicator{[\forall i \neq j. \s_i \neq \s_j \wedge \s_i \not\in \ar]} & \explain{Def.} \\
&= \left(\prod_{i=\nrej_0+1}^{\nrej_0+\nrej_1} \frac{1}{1-\sum_{j=1}^{i} \prior(\ar\concat\s)}\right) \, \prod_{i=1}^{\nrej_1}\prior(\s_i)\nonumber \\
&\cdot \prior(\s^*) \, \lik(\s^*) \, \indicator{[\forall i \neq j. \s_i \neq \s_j \wedge \s_i \not\in \ar]} & \explain{$\forall{\s^{(i)}}\in\auxS\colon\lik\of{\s^{(i)}}=0$} \\
&= \left(\prod_{i=\nrej_0+1}^{\nrej_0+\nrej_1} \frac{1}{1-\sum_{j=1}^{i} \prior(\ar\concat\s)}\right) \, \prod_{i=1}^{\nrej_1}\prior(\s_i)\nonumber \\
&\cdot \prior(\s^*) \, \indicator{[\forall i \neq j. \s_i \neq \s_j \wedge \s_i \not\in \ar]} & \explain{$\forall{\s^*}\in\auxS\colon\lik\of{\s^*}=1$}
\end{align}
\end{subequations}
\end{itemize}

Following \cref{def:ravi-2}, and substituting our definitions, we calculate the proper weight. Note that most terms of $\prior$ and $\lik$ will cancel out, as will most of the renormalization terms. The key difference from WRS is that when $\prop$ proposes the \textit{accepted} value $\x$ from the distribution $\frac{\prior(\x)}{1-\sum_{j=1}^{\nrej_0} \prior(\ar)}$, there is no corresponding $\frac{1}{1-\sum_{j=1}^{\nrej_0} \prior(\ar)}$ term in the numerator:

\begin{subequations}
\begin{align}
\w &= \frac{\zpost(\x) \mprop(\s, \ar; \x)}{\prop(\ar, \x) \mmprop(\s; \ar, \x)} & \explain{\cref{def:ravi-2}} \\
&= \frac{\prior(\x)\lik(\x) \, \frac{1}{\nrej_0+\nrej_1+1} \cdot (\cdots)}{\left(\prod_{i=1}^{\nrej_0} \frac{1}{1-\sum_{j=1}^{i-1} \prior(\ar)}\right) \, \frac{\prior(\x)}{1-\sum_{j=1}^{\nrej_0} \prior(\ar)} \cdot (\cdots)} & \explain{Substitution} \\
&= \frac{\frac{1}{\nrej_0+\nrej_1+1}}{\frac{1}{1-\sum_{j=1}^{\nrej_0} \prior(\ar)}} & \explain{Cancellation; AC} \\
&= \frac{1-\sum_{j=1}^{\nrej_0} \prior(\ar)}{\nrej_0+\nrej_1+1} & \explain{Algebra} \\
&= \frac{1-\prej_0}{\nrej_0+\nrej_1+1} & \explain{$\prej_0 = \sum_{j=1}^{\nrej_0} \prior(\ar)$} \\
&= \frac{1-\prej_0}{\nrej+1} & \explain{Def. of $\nrej$}
\end{align}
\end{subequations}

This matches the weight formula in \cref{def:awrs}, confirming that our adaptive sampling procedure is properly weighted for $\zpost$ as justified by the RAVI framework.
\end{proof}

\newpage

\section{NumPy Implementations of Samplers (\cref{def:wrs,def:awrs})}
\setcounter{figure}{0}    

Note that implementations in this section are designed for helping readers develop intuition, tightly coupling math and code.
This is opposed to efficient implementation and numerical stability, so such examples should be thoroughly adapted for performant use.

\begin{listing}[H]
\caption{Weighted Rejection Sampling (WRS)}
\label{alg:wrs}
\phantom{.}
\vspace{0.5em}
\begin{center}
\begin{minipage}{0.48\textwidth}
\begin{minted}{python} 
def wrs(@$\prior$@, @$\lik$@, @$\L$@):
    for @$i$@ in range(@$\L$@+1):
        @$\nrej_i$@ = 0
        while True:
            @$\x \sim \prior$@
            if @$\lik\of{\x}$@:
                if @$i$@ == 0:
                    @$\x_c$@ = @$\x$@
                break
            @$\nrej_i$@ += 1
    @$\Zest$@ = @$\frac{\L}{\L + \sum_{i=0}^{\L}\nrej_i}$@
    return @$\Tuple{\x_c,\Zest}$@
\end{minted}
\end{minipage}
\begin{minipage}{0.47\textwidth}
\begin{minted}{python}    
def wrs(p, cond, L):
    n = np.zeros(L+1)
    for i in range(L+1):
        while True:
            x = sample(p)
            if cond(x):
                if i == 0:
                    xc = x
                break
            n[i] += 1
    Zhat = L / (L + n.sum())
    return xc, Zhat
\end{minted}
\end{minipage}
\end{center}
\end{listing}

\begin{listing}[H]
\caption{Adaptive Weighted Rejection Sampling (AWRS)
}
\label{alg:awrs}
\phantom{.}
\vspace{0.5em}
\begin{center}
\begin{minipage}{0.47\textwidth}
\begin{minted}{python} 
def awrs(@$\prior$@, @$\lik$@): # @$\L$@ = 1
    @$\nrej$@, @$\prej_0$@, @$\ars$@ = 0, 0, set()
    for @$i$@ in range(2):
        while True:
            @$\x \sim \prior(\cdot \mid {\x \notin \ars})$@
            if @$\lik\of{\x}$@:
                if @$i$@ == 0:
                    @$\x_c$@ = @$\x$@
                break
            else:
                if @$i$@ == 0:
                    @$\prej_0$@ += @$\prior\of\x$@
                @$\ars$@.add(@$\x$@)
            @$\nrej$@ += 1
    @$\Zest$@ = @$\frac{1-\prej_0}{\nrej+1}$@
    return @$\Tuple{\x_c,\Zest}$@
\end{minted}
\end{minipage}
\begin{minipage}{0.47\textwidth}
\begin{minted}{python}    
def awrs(p, cond): # L = 1
    n, psi0, r = 0, 0, set()
    for i in range(2):
        while True:
            x = cond_sample(p, r)
            if cond(x):
                if i == 0:
                    xc = x
                break
            else:
                if i == 0:
                    psi0 += p[x]
                r.add(x)
            n += 1
    Zhat = (1 - psi0) / (n + 1)
    return xc, Zhat
\end{minted}
\end{minipage}
\end{center}
\end{listing}

\newpage

\section{Simulation Results of Samplers (\cref{def:wrs,def:awrs})}\label{sec:simsamp}
\setcounter{figure}{0}    

In this section, we report empirical results from simulation that illustrate the unbiasedness as well as favorable variance and runtime of our algorithms.

\begin{figure}[H]
\centering
\includegraphics[width=\linewidth]{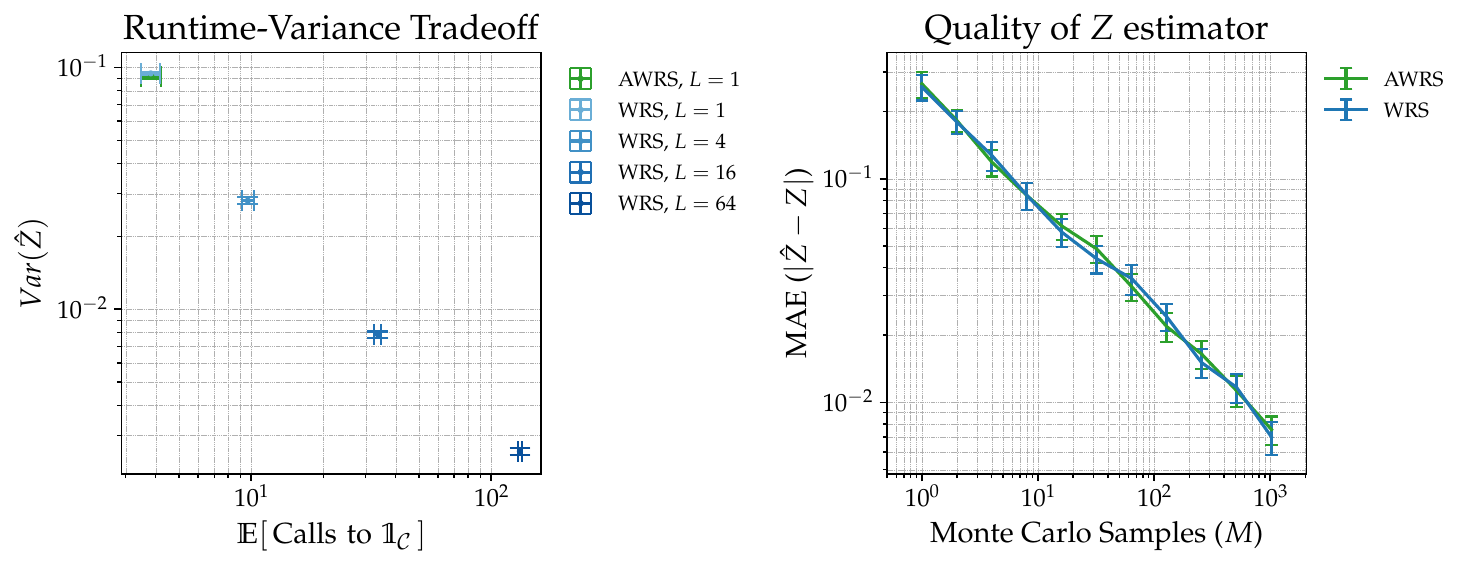}
\caption{
Left: The compute budget of \cref{def:wrs} (WRS) can trade off runtime and variance. \cref{def:awrs} (AWRS) shows comparable variance to WRS at $\L=1$, which we find sufficient in practice. 
Right: The algorithms described in \cref{def:wrs} (WRS; $\L$=1) and \cref{def:awrs} (AWRS) are properly weighted.
As we draw increasingly many Monte Carlo samples ($\N$) from each sampler, the mean absolute error (MAE) in the estimate of $\Z$ trends towards $0$.
Across both panels, error bars reflect $95\%$ confidence intervals over an outer loop of $100$ Monte Carlo iterations, where
$\prior \sim D_{\vocab}(\cdot \mid \mathbf{1})$, a uniform Dirichlet draw of probability vectors with size $\vocab=1,000$, and
$\lik \sim B_{\vocab}(\cdot ; \mathrm{Uniform}(0,1))$, a draw from a Bernoulli process of size $\vocab$ with success probability $\pi\sim\mathrm{Uniform}(0,1)$.
}
\label{fig:unbiased-var}
\end{figure}

\begin{figure}[H]
\centering
\includegraphics[width=\linewidth]{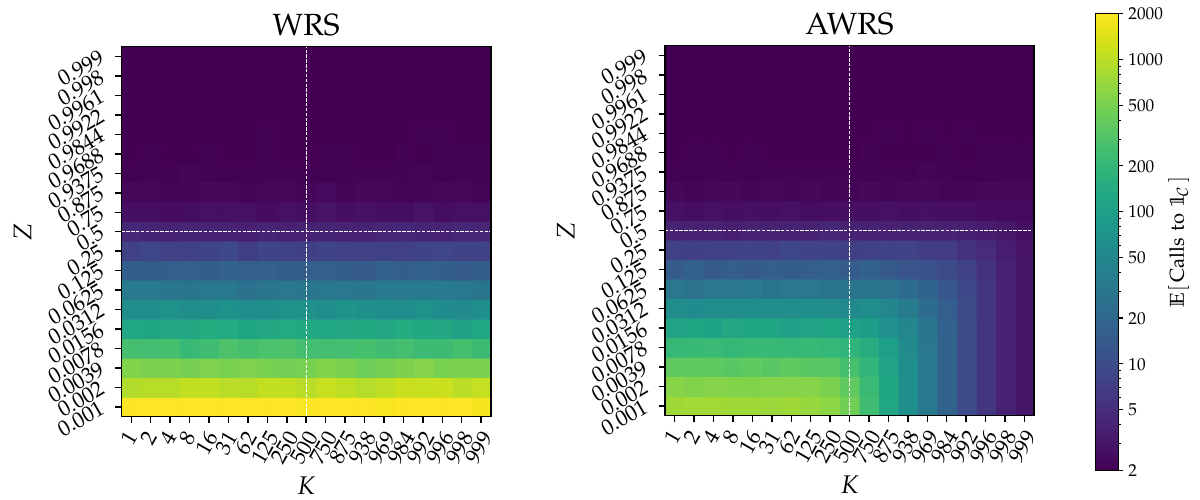}
\caption{AWRS runtime scales favorably. 
Here we visualize Monte Carlo ($N=100$ per cell) estimates of expected runtime and ``corner-case'' behavior.
As $\Z$ increases, runtime decreases. As $\pass$ (the number of tokens conforming to $\lik$) increases towards $\vocab=1,000$ (the vocabulary size), runtime decreases for AWRS.
Note that the scales of $\Z$ and $\pass$ decay logarithmically towards the corners and the color axis also scales logarithmically. 
For AWRS, in the regime where $\pass$ is small, runtime operates dominantly on $\passp$. As $\pass$ gets large, runtime is capped by $\vocab-\pass$.
}
\label{fig:rt}
\end{figure}

\newpage

\section{Runtime Analysis}

\subsection{Runtime Analysis of \cref{def:wrs} (WRS)}\label{app:wrs-rt}
\setcounter{figure}{0}

\WRSRT*
\begin{proof}
\label{proof:WRSRT}
\cref{def:wrs} performs $\L+1$ independent rejection sampling loops. In each loop, we sample $\x_j\sim\prior$ repeatedly, until obtaining a sample such that $\lik(\x_j) = 1$.

Define total runtime as $\T\defeq\sum_{i=0}^{\L}\T_i$, where random variable $\T_i$ denotes the number of samples in the $i$th loop (some number $\nrej_i\ge0$ of rejected samples, and one final successful sample). 

Since we sample independently with replacement, each $\T_i$ is identically distributed as a geometric distribution supported on $\Nat_{\ge1}$ with parameter $\Z = \Pr_{\x\sim\prior}[\lik(\x)]\in(0,1]$.  Thus total runtime is as follows.

\begin{subequations}
\begin{align}
\E[\T] 
&= \sum_{i=0}^{\L}\E[\T_i] & \explain{\cref{def:wrs}; Linearity of expectation} \\
&= \sum_{i=0}^{\L} \sum_{k=1}^{\infty} k \, \Pr[\T_i = k] & \explain{Def. of expectation} \\
&= (\L+1) \sum_{k=1}^{\infty} k \, (1-\Z)^{k-1} \Z & \explain{PMF of geom. dist.; Independence} \\
&= (\L+1) \sum_{k=1}^\infty kr^{k-1}(1-r) & \explain{Let $r=1-\Z$} \\
&= (\L+1) \sum_{k=1}^\infty r^{k-1} & \explain{Simplify by subtracting like terms} \\
&= (\L+1) \frac{1}{1-(1-\Z)} & \explain{Sum geom. series; $r=1-\Z<1$} \\
&= \frac{\L+1}{\Z} & 
\end{align}
\end{subequations}

Thus, the computational complexity is $\bigO{\frac{\L}{\Z}}$.
\end{proof}

\subsection{Runtime Analysis of \cref{def:awrs} (AWRS)}
\label{app:awrs-rt}

\AWRSRT*

\begin{proof}

In AWRS, we sample \emph{without replacement} from initial distribution $\prior$, until we reach a sample which satisfies the constraint, which is sampled \emph{with replacement}. This is repeated for a total of $1+\L$ successive loops.

To derive a simple expression for the expected total number of samples, \cref{eq:awrs-rt}, the following general property will be useful.

\begin{lemma}\label{prop:sampling-order}
Let $\uprob$ be any (potentially unnormalized) probability distribution over the discrete set $\domain$.
Suppose we begin sequentially sampling (with or without replacement) according to $\uprob$. For any two disjoint sets $\A_1,\A_2\subseteq\domain$, let $\x_\A$ be the first sampled element that happens to be in $\A\defeq\A_1 \sqcup \A_2$.
The probability that $\x$ is in $\A_1$ (write $\A_1\prec\A_2$) is $\Pr[\A_1\prec \A_2]=\frac{\uprob\of{\A_1}}{\uprob\of{\A}}$.
\end{lemma}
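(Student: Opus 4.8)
The plan is to prove the identity by conditioning on the very first draw, after first noting that the claim is invariant under rescaling of $\uprob$: the normalizing constant cancels from the ratio $\uprob\of{\A_1}/\uprob\of{\A}$, so I may assume $\uprob$ is normalized and treat it as an honest probability distribution. Write $r\defeq\uprob\of{\A_1}/\uprob\of{\A}$ for the target value and $p$ for $\Pr[\A_1\prec\A_2]$. Conditioning on the first sampled element $\x_1$ splits into three cases: if $\x_1\in\A_1$ then $\A_1\prec\A_2$ has already occurred; if $\x_1\in\A_2$ it has already failed; and if $\x_1\notin\A$ the outcome is decided by the remainder of the sampling process. The two sampling regimes differ only in how this remainder behaves, so I would handle them separately and show both yield $p=r$.

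For sampling with replacement, the process after discarding a draw outside $\A$ is probabilistically identical to the original process, since the draws are i.i.d. Hence $p$ satisfies the self-consistent recursion $p=\uprob\of{\A_1}+\uprob\of{\domain\setminus\A}\,p$ (using $\uprob\of\domain=1$), which solves to $p\cdot\uprob\of{\A}=\uprob\of{\A_1}$, i.e. $p=r$. Equivalently, one can argue directly that, because the draws are i.i.d., the first draw landing in $\A$ is in $\A_1$ with the conditional probability $\uprob\of{\A_1}/\uprob\of{\A}$.

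For sampling without replacement I would proceed by induction on $|\domain|$. In the base case $\domain=\A$ there are no elements outside $\A$, so the first draw already lies in $\A$ and is in $\A_1$ with probability $r$ by definition. For the inductive step I again condition on $\x_1$; the first two cases contribute as above, and in the third case ($\x_1=y\notin\A$) the process continues as sampling without replacement from the strictly smaller ground set $\domain\setminus\{y\}$ with $\uprob$ restricted there. The key point is that deleting $y\notin\A$ leaves both $\uprob\of{\A_1}$ and $\uprob\of{\A}$ unchanged, so the induction hypothesis gives conditional probability $r$ for this subproblem. Summing the three cases, $p=\uprob\of{\A_1}+\uprob\of{\domain\setminus\A}\,r$, which simplifies to $r$ exactly as before.

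I expect the main obstacle to be the without-replacement case, specifically making the recursion airtight: the whole argument hinges on the invariance of the ratio $\uprob\of{\A_1}/\uprob\of{\A}$ under deletion of elements outside $\A$, which is what keeps the target value stable as the ground set shrinks. Once that invariance is stated cleanly, both regimes collapse to the same one-line algebraic simplification, and the identity $\Pr[\A_1\prec\A_2]=\uprob\of{\A_1}/\uprob\of{\A}$ follows.
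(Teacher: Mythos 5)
Your proof is correct, but it takes a genuinely different route from the paper's. The paper argues in one shot: it conditions on the (random) set $\AR$ of already-rejected elements at the moment the first element of $\A$ is drawn, observes that the sampling distribution at that moment is proportional to $\uprob$ restricted to $\domain\setminus\AR$, and---since no element of $\A$ can yet have been rejected---this restriction leaves the ratio $\uprob\of{\A_1}/\uprob\of{\A}$ intact for \emph{every} possible value of $\AR$, so the law of total probability finishes the argument and handles both regimes uniformly. You instead run a first-step analysis: with replacement you solve the fixed-point equation $p=\uprob\of{\A_1}+\uprob\of{\domain\setminus\A}\,p$, and without replacement you induct on $\card{\domain}$, using the observation that deleting an element outside $\A$ changes neither $\uprob\of{\A_1}$ nor $\uprob\of{\A}$. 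Both proofs pivot on the same invariance; the difference is structural. The paper's version is shorter and makes the key insight explicit, but it quietly conditions on a random stopping configuration, which a fully careful reader must expand into a sum over histories. Your induction makes exactly that bookkeeping explicit and elementary, at the cost of splitting the two regimes and carrying a recursion. One point to state carefully in your write-up: your opening remark that the claim is invariant under rescaling is not a cosmetic reduction but load-bearing, since after deleting $y\notin\A$ the restricted weight function is unnormalized, and it is precisely the lemma's ``potentially unnormalized'' phrasing (equivalently, your scale-invariance observation) that lets the induction hypothesis apply to it without renormalizing.
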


\begin{proof}
The crucial insight is that for a sequence drawn with or without replacement, the event $\A_1\prec\A_2$ is determined by the first occurrence of an element in $\A$ (denote this element $\x_\A$), which is independent of the order or weight of elements outside of $\A$. That is,
$\Pr[\A_1\prec\A_2]=\Pr[\x_\A\in\A_1]=\frac{\prob'\of{\A_1}}{\prob'\of{\A}}$, where $\prob'$ is the probability distribution from which $\x_\A$ is sampled.

In sampling with replacement the distribution $\prob'=\prob\propto\uprob$ trivially, and we have our result. In sampling without replacement, let $\AR\subset\domain$ be the set of already-rejected elements at the point when $\x_\A$ is drawn. Then the sampling distribution is $\prob'\of\x\propto\indicator{\domain\setminus\AR}\of\x\prob\of\x$. Since the elements in $\A$ are not yet rejected by definition, simply $\frac{\prob'\of{\A_1}}{\prob'\of{\A}} = \frac{\prob\of{\A_1}}{\prob\of{\A}} = \frac{\uprob\of{\A_1}}{\uprob\of{\A}}$.
\end{proof}

Define total runtime as the total number of samples (both rejected and accepted) in the algorithm:
\begin{equation}
\T \defeq \T_0 + \T_1 + \dots + \T_\L
\end{equation}
where $\T_0$ is the runtime of the first loop (number of samples up to and including the first accepted sample), and $\T_\l$ is the runtime of the $\l$\textsuperscript{th} additional loop, for $\l\in\{1,\dots,\L\}$.
$\T$ is the number of calls to $\lik$ in the algorithm.

Applying \cref{prop:sampling-order} in our setting, within the first loop, where we are sampling without replacement from initial distribution $\prior$, define the following shorthand for the probability that $\x\in\domain\setminus\cond$ appears in the sequence of samples before any element from the set $\cond$.
\begin{equation}
\xbeforeC 
\defeq \Pr[\x\prec\cond] 
=\frac{\prior\of{\x}}{\prior\of\x + \sum_{\x'\in\cond}\prior\of{\x'}}
=\frac{\prior\of{\x}}{\prior\of\x + \Z}
\end{equation}
Note that each $\xbeforeC$ is strictly less than $1$, since by assumption 
$\prior$ places some nonzero probability mass on elements in $\cond$.

\paragraph{Runtime of First Loop: $\T_0$}

For each $\x\in\domain\setminus\cond$ define an indicator random variable $Y^{(0)}_\x = 1$ if $\x$ appears in the loop, else $0$. The runtime $\T_0$ is the number of rejections, plus one accepted trial: 
\begin{equation}
    \T_0 \defeq 1 + \sum_{\x\notin\cond}Y^{(0)}_\x
\end{equation}
Thus the expected runtime is as follows.
\begin{subequations}\label{eq:awrs-rt-T0}
\begin{align}
\E[\T_0] 
&= 1 + \sum_{\x\notin\cond}\E[Y^{(0)}_\x]&\explain{Linearity of expectation}\\
&= 1 + \sum_{\x\notin\cond}\Pr[Y^{(0)}_\x=1]&\explain{Expectation of indicator RV}\\
&= 1 + \sum_{\x\notin\cond}\xbeforeC &\explain{Definition of $Y_\x$'s and \cref{prop:sampling-order}}
\end{align}
\end{subequations}

\paragraph{Runtime of a Subsequent Loop: $\T_\l$}

Now consider the $\l$\textsuperscript{th} loop, for $\l \ge 1$.

Denote $\AR_\l\subseteq(\domain\setminus\cond)$ the set of accumulated rejections from previous loops, and $\prej_\l \defeq \sum_{\ar\in\AR_\l}\prior(\ar)$ the total already-rejected probability mass. Note $\prej_\l < 1$ by assumption. We begin the $\l$\textsuperscript{th} loop sampling from a distribution which assigns probability $\frac{\prior\of{x}}{1-\prej_\l}$ to each $\x\in\domain\setminus\AR_\l$.

Given $\AR_\l$, for each $\x\in\domain\setminus(\cond\cup\AR_\l)$ define indicator random variable $Y^{(\l)}_\x = 1$ if $\x$ appears in the current loop, else $0$. Then,
\begin{subequations}\label{eq:conditional-expectation-Ym}
\begin{align}
\E[Y^{(\l)}_\x\mid\AR_\l]
&=\Pr[Y^{(\l)}_\x=1\mid \AR_\l]&\explain{Expectation of indicator RV}\\
&=
\frac{\frac{\prior\of{\x}}{1-\prej_\l}}{\frac{\prior\of\x}{1-\prej_\l} + \frac{\Z}{1-\prej_\l}}
&\explain{Definition of $Y^{(\l)}_\x$ and \cref{prop:sampling-order}}\\
&=
\frac{\prior\of{\x}}{\prior\of\x + \Z}&\explain{Cancellation of $1-\prej_\l > 0$}\\
&=
\xbeforeC&\explain{Definition of $\xbeforeC$}
\end{align}
\end{subequations}
Importantly, the probability $\Pr[Y^{(\l)}_\x=1\mid \AR_\l]$ is in fact independent of $\AR_\l$, and is simply equal to $\xbeforeC = \Pr[Y^{(0)}_\x=1]$, for each $\x\in\domain\setminus(\cond\cup\AR_\l)$. That is, the probability is independent of the previous rejections, and independent of which loop we are in, provided only that $\x$ is not in $\cond$, and has not yet been rejected. Note this is equivalent to the probability that $\x$ is rejected in the current loop, given it has not been rejected earlier:
\begin{equation}
    \Pr[\x\in\AR_{\l+1}\mid x\notin(\cond\cup\AR_{\l})] = \E[Y^{(\l)}_\x\mid\AR_\l] = \xbeforeC
\end{equation}
Thus for any $\x\in\domain\setminus\cond$, the probability that it has not yet been rejected is
\begin{equation}\label{eq:pr-x-not-rejected}
    \Pr[\x\notin\AR_\l\mid\x\notin\cond] = \prod_{\l'=1}^{\l}\Pr[\x\notin\AR_{\l'}\mid \x\notin(\cond\cup\AR_{\l'-1})] = (1-\xbeforeC)^\l
\end{equation}
letting $\AR_0\defeq \emptyset$ for convenience, and noting
$\Pr[\x\notin\AR_{1}\mid \x\notin\cond] = 1-\E[Y^{(0)}_\x]=1-\xbeforeC$.

Given already-rejected $\AR_\l$ from previous runs, the runtime $\T_\l$ is the number of rejections, plus one accepted trial:
\begin{equation}
\T_\l
\defeq 1 + \sum_{\x\notin(\cond\cup\AR_\l)}Y^{(\l)}_\x
\end{equation}
Using the properties above, we can derive the expected runtime in loop $\l$ as follows.
\begin{subequations}\label{eq:awrs-rt-Ti}
\begin{align}
\E[\T_\l] 
&= \E\left[
    \E\left[\T_\l\mid\AR_\l\right]
\right]&\explain{Law of total expectation}\\
&= \E\left[
    1 + \sum_{\x\notin(\cond\cup\AR_\l)} \E\left[Y^{(\l)}_\x\mid\AR_\l\right]
\right]&\explain{Linearity of expectation, defn of $\T_\l$}\\
&= \E\left[
    1 + \sum_{\x\notin(\cond\cup\AR_\l)}\xbeforeC
\right]&\explain{\Cref{eq:conditional-expectation-Ym}}\\
&= \E\left[ 1 + \sum_{\x\notin\cond}
    \indicator{\domain\setminus\AR_\l}\of\x\xbeforeC
\right]
&\explain{Reindexing}\\
&= 1 +
    \sum_{\x\notin\cond}
    \E\left[\indicator{\domain\setminus\AR_\l}\of\x\right]
    \xbeforeC
&\explain{Linearity of expectation}\\
&= 1 +
    \sum_{\x\notin\cond}
    \Pr[\x\notin\AR_\l]
    \xbeforeC
&\explain{Expectation of indicator RV}\\
&= 1 + \sum_{\x\notin\cond}(1-\xbeforeC)^\l\xbeforeC
&\explain{\Cref{eq:pr-x-not-rejected}}
\end{align}
\end{subequations}
We can see from the derivation of $\E[\T_0]$ in \cref{eq:awrs-rt-T0} that this expression also holds for $i=0$.

\paragraph{Total Expected Runtime}
Putting this together, we have that for $\L\ge1$, the expected total runtime is
\begin{subequations}\label{eq:awrs-rt}
\begin{align}
\E[\T] 
&= \sum_{\l=0}^{\L} \E[\T_\l] &\explain{Linearity of expectation}\\
&= \sum_{\l=0}^{\L} \left[1 + \sum_{\x\notin\cond} (1-\xbeforeC)^{\l}\xbeforeC\right]&\explain{\Cref{eq:awrs-rt-Ti}}\\
&= 1+\L+\sum_{\x\notin\cond} \sum_{\l=0}^{\L} (1-\xbeforeC)^{\l}\xbeforeC&\explain{Sum constant; Change summation order}\\
&= 1+\L+ \sum_{\x\notin\cond}
\frac{1-(1-\xbeforeC)^{\L+1}}{1-(1-\xbeforeC)}\xbeforeC
&\explain{Partial sum of geom. series}\\
&= 1+\L+ \sum_{\x\notin\cond} 1-(1-\xbeforeC)^{\L+1}&\explain{$\xbeforeC>0$}\\
&= 1+\L+ \card{\domain\setminus\cond} -\sum_{\x\notin\cond} (1-\xbeforeC)^{\L+1}
\end{align}
\end{subequations}

From this we can see that our runtime decays polynomially in $\L$ with respect to the $\xbeforeC$'s (where, recall, $\xbeforeC 
\defeq \frac{\prior\of\x}{\prior\of\x + \Z} < 1$), and so each additional loop comes at reduced cost.\footnote{Empirically, in the context-sensitive pattern matching domain, the second loop samples only 
$1$ token in $92\%$, $85\%$, and $72\%$ of runs for Llama 70B, 8B, and 1B, respectively.}
Note that once all items that do not satisfy the constraint are removed, there is just a single step per sampling loop. This corresponds to the intuition that as we add subsequent loops of sampling without replacement, the expected number of samples before success in each loop decreases as the probability mass is shifted off of already-ruled-out items, and the runtime approaches linear growth in $\L$.

More formally, that is: Considering the behavior as $\L$ grows,  $(1-\xbeforeC)^{\L+1}$ approaches $0$, so the runtime scales as $\E[\T]=\bigO{\L}$.

For fixed $\L$, the runtime scales in the probabilities $\xbeforeC$, as $\E[\T] = \bigO{\sum_{x\notin\cond}\xbeforeC}$, since higher powers are dominated by the linear term for each $\xbeforeC$. [Also, considering the behavior as $\xbeforeC\to1$ for each $x\in\domain\setminus\cond$ (that is, when the probability mass on items satisfying the constraint becomes negligible in pairwise comparison to those each of those that don't), we again have that $(1-\xbeforeC)^{\L+1} \to 0$.
So the expected runtime approaches $1+\L+\card{\domain\setminus\cond}$.
]

Note, for the special case $\L=1$, the expected runtime is
\begin{subequations}\label{eq:awrs-rt-M=1}
\begin{align}
\E[\T] 
&= \E[\T_0] + \E[\T_1] \\
&= \left[1 + \sum_{\x\notin\cond}\xbeforeC \right] 
+ \left[1 + \sum_{\x\notin\cond}\xbeforeC (1-\xbeforeC)\right] \\
&= 2+ \sum_{\x\notin\cond} 2\xbeforeC-\xbeforeC^2
\end{align}
\end{subequations}

\end{proof}

\subsection{Runtime Simulation Comparison}

\begin{figure}[H]
\centering
\includegraphics[width=\linewidth]{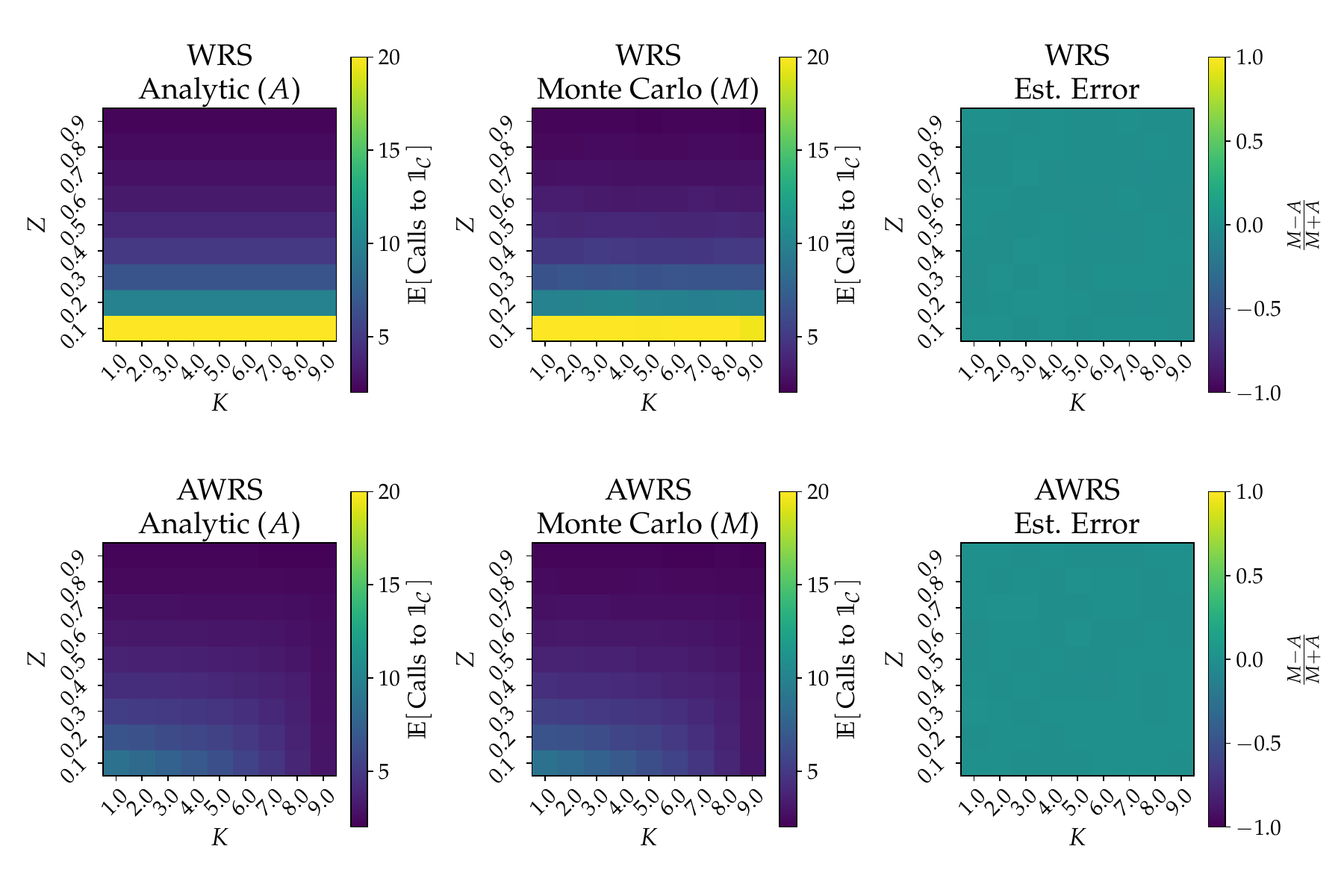}
\caption{
Analytic solutions to expected runtime are corroborated by empirical simulation.
Here we visualize Monte Carlo ($N=1,000$ per cell) estimates of expected runtime as a function of $\Z$ and $\pass$ (the number of tokens conforming to $\lik$) for a dense tiling of a vocabulary of size $\vocab=10$.
Note again that AWRS scales more gracefully than WRS.
}
\label{fig:expected}
\end{figure}

\newpage

\section{Extensions of \cref{def:awrs} (AWRS)}
\label{app:awrs-ext}
\setcounter{figure}{0}    

\subsection{Partial Concurrency}
\label{app:awrs-ext-conc}

The algorithm described in \cref{def:awrs} can be further sped up by executing the rejection loop concurrently.
The key insight is that since we never resample any rejected element, as long as we encounter rejections, we may concurrently sample-without-replacement (SWOR), only syncing between loops. The recipe for partially concurrent AWRS is as follows:

\begin{definition}
Given an unnormalized target $\zpost$, concurrent AWRS generates $\pairz\sim\wprop_\textrm{AWRS}$ as follows:
\begin{enumerate}
\item Sample a collection of keys $\keyx \sim \Exp\of{\prior\of\x}$, i.e., independent samples from exponential distributions whose rates are parameterized by $\prior$.
\item Sort the tokens by their keys, $\x_{1}, \ldots, \x_{\card\domain}$ such that $\key_{\x_1} \le \cdots \le \key_{\x_{\card\domain}}$.
\item Evaluate $\lik\of{\x_{1}}, \ldots, \lik(\x_{\card\domain})$ concurrently until the first acceptance at $\x_i$. 
\item Kill all threads with index $>i$ and wait for threads with index $<i$.
\item Set $\x$ to the min index passing element $\x_j$, set $\nrej_0$ to $j-1$, and set $\prej_0$ to $\sum_{i=0}^{j-1}\prior\of{\x}$.
\item Mark elements $\x_1,...,\x_{j-1}$ before the next loop.
\item Repeat for the next loop, still counting rejections as in \cref{def:awrs}.
\item Calculate $\Zest$ as usual.
\end{enumerate}
\end{definition}

As an additional minor speed-up, since all exponential perturbations are independent, we may pre-sample them in (very large) batches and simply stream them as needed in the loop.

For unnormalized logits $\log\tilde\prior$, sample keys $\keyx\sim\mathrm{Gumbel}\of{\log\tilde\prior}$ and sort keys descending rather than ascending \citep{vieira2014gumbel}.

\subsection{Clipped-Probability Early Stopping}
\label{app:awrs-ext-stop}

The algorithm described in \cref{def:awrs} may take up to $\card{\domain\setminus\cond}$ steps to find a token, and as more mass is removed, $\Zest$ may get arbitrarily small.
In these cases, we risk spending compute on samples that will likely be dropped by SMC, anyways.
Here, we present a strategy for early stopping in such cases, based on a pair of thresholds on rejected probability mass, one for each loop.
While this speedup can be beneficial in some settings, clipped AWRS is no longer an exact sampler and may yield dead ($\Zest=0$) samples.

\begin{definition}
\label{def:clipped-awrs}
Given an unnormalized target $\zpost$, \defn{clipped AWRS} generates $\pairz\sim\wprop_\textrm{CAWRS}$ as follows:
\begin{enumerate}
\item Select two thresholds $0<\pthresh_0<\pthresh_1<1$.
\item Sample
$\Tuple{\ar_1,\dots,\ar_{\nrej_0},\dots}$ as follows: draw $\nrej_0$ unique rejections through a sequence of renormalized distributions on $\domain\setminus\ars_{<i}$ until either obtaining $\x_{c_0}\in\cond$ or $\sum_{i=1}^{\nrej_0}\prior\of{\ar_i}>\pthresh_0$.
\item If $\x_{c_0}$ was found, generate an additional trace $\Tuple{\s_{1},\dots,\s_{\nrej_1},\dots}$, by continuing to sample as above from the remaining not-yet-rejected elements, through an additional $\nrej_1$ new unique rejections until either a token is accepted or $\sum_{i=1}^{\nrej_0}\prior\of{\ar_i}+\sum_{j=1}^{\nrej_1}\prior\of{\s_j}>\pthresh_1$. Then calculate the standard $\Zest = \frac{1-\prej_0}{\nrej+1}$, where $\prej_0 = \sum_{i=1}^{\nrej_0} \prior(\ar_i)$ and $\nrej = \nrej_0 + \nrej_1$.
\item If no $\x_{c_0}$ was found before hitting $\pthresh_0$, draw $1$ single sample $\x^*$ from the remaining elements after the first loop $\domain\setminus\ars_{<i}$.
\item If $\lik\of{\x^*}=0$, then $\Zest=0$.
\item If $\lik\of{\x^*}=1$, then run the second loop as in step $3$ through either an acceptance or the $\pthresh_1$ cutoff, but set $\Zest = \frac{(\nrej_1+1)(1-\prej_0)}{\nrej+1}$, where $\nrej_1$ is the number of rejections only in the second loop.
\end{enumerate}
In summary, $\Zest \defeq \frac{(\nrej_1+1)^{\mathbb{1}\left[\prej_0>\pthresh_0\right]}(1-\prej_0)}{\nrej+1}$
\end{definition}

\subsubsection{RAVI Perspective}

We may think of clipped AWRS as the following extension of AWRS (\cref{app:ravi-awrs}):

\paragraph*{Proposal.} We again define over the space $\domain \times \aux$, and as before, run an adaptive rejection loop. However, we now stop \textit{either} when we see an accepted sample $\x_{c_0}$, \textit{or} when the rejected samples have total probability mass exceeding $\pthresh_0$. In that case, the returned $\x$ is simply the next proposed value, whether or not it satisfies the constraint.

\paragraph*{Meta-proposal.} Given $\x$, we only care when $\x$ satisfies the constraint; otherwise, no matter our meta-proposal, $\Zest=0$. Assuming $\x$ does satisfy the constraint, the meta-proposal generates its own rejection loop with probability mass bound by $\pthresh_1$, then selects a split point $\nrej_0$. If the total mass up to the split point exceeds $\pthresh_0$, we clamp $\nrej_0$ to the first index at which $\pthresh_0$ was exceeded. This allows us to tractably infer the first loop, once again not needing to actually sample the split point. 

\paragraph*{Meta-meta-proposal.} We continue generating a second loop with threshold $\pthresh_1$ to propose the remaining samples from the meta-proposal, which we had stopped at its split point.

\subsubsection{Practical considerations}

Stochastic runtimes (particularly if the runtime distribution has long tails) can lead to issues in production systems. 
Here, we have presented a modified version of AWRS that incorporates a user-specified upper bound on the mass sampled to generate a satisfying next token. 
Using this variant, if a single particle within SMC has found itself in a poor position, it will stop early with a weight of $0$, allowing resampling to cull the particle (and clone a more promising particle) without slowing down the rest of the algorithm. 
If a constraint is so difficult that the upper bound is hit frequently (across all particles in an SMC algorithm), the user may opt to increase their upper bound, dispatch to a larger model more capable of satisfying the constraint, or increase the number of particles in SMC to mitigate the issue.
We recommend testing how the inclusion of early stopping at various levels of strictness affect accuracy/runtime tradeoffs in your exact setting.

\subsection{Bounded cost early stopping}

Another way to implement early stopping is to set an upper bound on the number of rejected tokens seen. In this section we present three different algorithms for accomplishing this. The first is a variant of the with-replacement WRS algorithm (\cref{def:wrs}), which we present as a foundation. The second two are variants of AWRS (\cref{def:awrs}), each with slightly different performance characteristics. Each takes a parameter $\R$ and guarantees that no more than $\R + \L + 1$ calls to the constraint function will be made (up to $\R$ of which will fail to satisfy the constraint and up to $\L + 1$ satisfy the constraint).

\subsubsection{Clipped Weighted Rejection Sampling}

\begin{restatable}[]{definition}{Clipped WRS}
\label{def:clipped-wrs}
Given an unnormalized target $\zpost$, \defn{clipped WRS} generates $\pairz\sim\wprop_\textrm{CWRS}$
as follows:
\begin{enumerate}
\item Sample with replacement from $\prior$ to get a sequence $\Tuple{\x_1, \ldots, \x_n}$, stopping when you have observed either 
$\L + 1$ accepted samples (not-necessarily-unique elements $\x_i$ with $\lik\of{\x_i} = 1$)
or $\R$ rejected samples (not-necessarily-unique elements $\x_j$ with $\lik\of{\x_j} = 0$).
\item Let $\s$ be the number of accepted samples and $\ar$ be the number of rejected samples so $\ntot = \ar + \s$ is the total number of samples seen. If $\s = \L + 1$ (i.e. we stopped on an accepted element), let $\Zest=\frac{\s - 1}{\ntot + 1}$. Otherwise (i.e. $\ar=\R$ and we stopped on a rejected element), let $\Zest=\frac{\s}{\ntot + 1}$.
\item If $\s>0$ let $\x$ be the first accepted token, otherwise let $\x = x_1$  (the choice in the $s = 0$ case is arbitrary, but choosing a token we have already sampled ensures that we never return an accepted token with a $0$ weight, which is a convenient property).
\item Return $\pairz$
\end{enumerate}
\end{restatable}

The proof of the correctness of this estimator follows from applying the Rao-Blackwell theorem to the estimator $\lik\of{\x_1}$ (i.e., whether the first sampled token satisfies the constraint) conditioned on the sufficient statistic $(\ar, \s)$ and counting the number of sequences that can have a given value $(\ar, \s)$. This is essentially the same as the proof of the standard negative binomial result (see \cref{proof:WRSMVUE}).

\subsubsection{Geometric Adaptive Weighted Rejection Sampling}

We can now reduce the sampling without replacement version to the above, through the following observation: Any run of sampling without replacement can be thought of as sampling with replacement, where any samples that have already been seen during the run are discarded (not recorded). Thus you can construct a run of sampling with replacement from a run of sampling without replacement by inserting a number of ``phantom'' samples that the without replacement algorithm removed. Because the estimator only depends on the \emph{number} of tokens, we do not need to actually generate these phantom tokens, only to count them, so instead of actually sampling individual phantom tokens (which is slow), you can sample from a geometric distribution representing the number of phantoms to insert before getting a given novel token.

\begin{restatable}[]{definition}{Geometric AWRS}
\label{def:geometric-awrs}
Given an unnormalized target $\zpost$, \defn{geometric adaptive weighted rejection sampling} (GAWRS) generates $\pairz\sim\wprop_\textrm{GAWRS}$
as follows:
\begin{enumerate}
\item Sample a sequence $\Tuple{\x_1, \ldots, \x_i, \ldots}$ from $\prior$ without replacement of rejected tokens (i.e., samples are drawn proportional to the probabilities of not-yet-rejected tokens).
\item Maintain a sequence $\q_1, \ldots, \q_i, \ldots$ representing the total probability mass removed prior to sampling $\x_i$ (so $\q_1$ = 0).
\item Maintain counters $\s_0, \s_1, \ldots, \s_i, \ldots$ and $\ar_0, \ar_1, \ldots, \ar_i, \ldots$ representing the total number of accepted and rejected tokens seen so far, with $\s_i$ defined as simple counting (i.e. $\s_0 = 0$, $\s_i = \lik\of{\x_i} + \s_{i-1}$), but $\ar_i$ defined to also count the phantom tokens seen so far, so $\ar_0 = 0$, but $\ar_i \sim \ar_{i - 1} + \mathrm{Geom}(\q_i) + 1 - \lik\of{\x_i}$.
\item Stop when $\ar_i \geq \R$ or $\s_i = \L + 1$. Let $\s = \s_i$, $\ar = \min(\R, \ar_i)$ (because if $\ar_i > R$ we should have stopped in the middle of a run of phantom tokens), let $\ntot=\ar+\s$ and use the same estimator as in \cref{def:clipped-wrs} above: If $\s = \L + 1$, let $\Zest=\frac{\s - 1}{\ntot + 1}$. Otherwise, let $\Zest=\frac{\s}{\ntot + 1}$.
\item If $\s>0$ let $\x$ be the first accepted token, otherwise let $\x = x_1$.
\item Return $\pairz$
\end{enumerate}
\end{restatable}

This method, without the early stopping, can also be used to provide an alternative form of the main AWRS algorithm. That being said, preliminary experimentation suggests that this approach is slightly more expensive and does not yield meaningfully different variance.

\subsubsection{Recursive Adaptive Weighted Rejection Sampling}

Another method operates from the trivial observation that
\begin{math}
    \Z = \E[\lik\of X] = \prior\of\x \lik\of\x + (1 - \prior\of\x) \E[\lik\of X | X \neq \x]
\end{math}, for $X\sim\prior$ and arbitrary $\x\in\domain$.
This allows us to recursively construct an estimator for $\Z$ by picking some $\x$, then applying either some ``base case'' estimator or recursively applying the same formula for $\E[\lik\of X | X \neq \x]$.

In particular, if we take our sampling without replacement loop as providing our choices of $\x$, the remaining values are precisely sampled from the conditional distribution. 

\begin{restatable}[]{definition}{Recursive AWRS}
\label{def:recursive-awrs}
Given an unnormalized target $\zpost$, \defn{recursive adaptive weighted rejection sampling} (RAWRS) generates $\pairz\sim\wprop_\textrm{RAWRS}$
as follows:
\begin{enumerate}
\item Sample without replacement from $\prior$ to get a sequence $\Tuple{\x_1, \ldots, \x_n}$ stopping when you find the first $\x_n$ with $n \leq \R$ and $\lik\of{\x_n} = 1$.
\item For each $\x_i$, let $\q_i$ be its  conditional probability: $\q_i \defeq \frac{\prior(\x_i)}{1 - \sum\limits_{j < i} \prior(\x_i)}$. Let $\n_i = \prod\limits_{j < i} (1 - \q_i)$.
\item If $n = \R$ then if $\lik\of {x_n} = 1$ let $\Zest=\n_i$, otherwise let $\Zest=0$.
\item If $n < \R$ then if $\lik\of {x_{n + 1}} = 1$ let $\Zest=\n_i$, otherwise let $\Zest=\q_i \n_i$.
\item Let $\x=\x_n$.
\item Return $\pairz$
\end{enumerate}
\end{restatable}

This works by recursively applying the formula until we've drawn $\R - 1$ samples that don't satisfy the constraint or $1$ that has, then using as the base case the trivial estimator that samples a single token and returns $1$ if it satisfies the constraint and $0$ if it doesn't.

\subsubsection{Trade-offs and Performance Considerations}

Which of these two algorithms performs better depends on the shape of the distribution. In terms of absolute number of calls, RAWRS always makes the same or strictly fewer calls than GAWRS, but this ignores the fact that some of the calls in the latter may be repeated and not need recalculating (if the same accepted token is sampled multiple times). In particular, in the case where the distribution has a single element $x$ with $\lik\of{x} = 1$ and $\prior\of{x} > \frac{2}{3}$, the expected number of calls made by GAWRS is under $2$, while the minimum number of calls made by RAWRS is always at least $2$.

This $\frac{2}{3}$ threshold follows from the fact that the number of calls to the constraint function is bounded above by one plus the number of tokens other than $x$ seen before $x$ is drawn twice (one call for evaluating $\lik\of{x}$ and at most one call per other token seen), which is bounded above by a negative binomial distribution with parameters $(2, \prior\of{x})$ (because that's how many you'd see in the with-replacement model). So the expected number of calls is at most $1 + \frac{2(1 - \prior\of{x})}{\prior\of{x}}$, which some simple algebra shows is strictly less than two when $\prior\of{x} > \frac{2}{3}$.

In contrast, in less peaked distributions $\prior$, GAWRS may often need to sample more than $1$ token in its second loop, and thus RAWRS will strictly dominate in those cases.

These two methods can also be hybridized: At any point in RAWRS one can switch to GAWRS in the recursion. This likely yields the best of both worlds, but considerable further investigation is needed to empirically explore this design space, which we leave for future work.

\subsection{Combination with truncation sampling}
\label{app:awrs-ext-topp}

There are a few ways one might combine AWRS with truncation-based modifications to an LM’s sampling procedure.

The most straightforward way is to think of truncation-based sampling as modifying the base LM distribution, on top of which AWRS can then be applied. 
For example, in top-p sampling with $p=0.9$, we zero out the bottom $10\%$ of token probabilities and renormalize to obtain a modified next-token probability distribution. 
We can then apply AWRS as though the base LM had generated this truncated next-token probability distribution: we adaptively rejection-sample from the truncated distribution, rather than from the full next-token distribution. 
One caveat is that the user must ensure that truncation does not remove all valid next tokens (according to the constraint) from the support.

Another way to combine AWRS with truncation-based sampling is to consider the truncation as part of the constraint: we write a new constraint function that accepts a token only if it passes the original constraint and is in the top $10\%$ of logits from the base model.
If AWRS-SMC were applied with this constraint, the weights would correct not only for myopia from locally constrained decoding, but also for myopia from top-p decoding. 
As the number of particles grows, the algorithm’s output distribution would converge to the global posterior distribution that arises when the base LM is conditioned on the event that (1) the string satisfies the original constraint, and (2) each token is in the top $10\%$ of the LM’s predictions given the previous tokens. 
Note that, as above, the user must be careful to avoid the situation where there are no strings that satisfy this more stringent constraint.

\newpage

\section{Constraint Details}
\setcounter{figure}{0}   

\subsection{Constraint Costs}
\label{app:constraints}

\begin{table}[H]
\centering
\begin{tabular}{l|l}
\textbf{Domain} & \textbf{Cost (ms/eval)} \\
\hline
Goal inference & 11.0 \\
Molecular synthesis & 0.30 \\
Pattern matching & 0.0054 \\
Text-to-SQL & 6.5 \\
JSON & 0.47 \\
\end{tabular}
\caption{The evaluation cost of each domain's local constraint checker.}
\label{tab:costs}
\end{table}

The cost of constraint evaluation varies dramatically depending on constraint complexity. \cref{tab:costs} presents the mean cost (ms/eval) of the constraints tested in this paper.
As can be seen, costs range from $< 0.1\text{ms}$ to $>10\text{ms}$, and yet all can still be integrated into the decoding process with \fastLCD, incurring a comparatively low constant-factor overhead (mean $1.22\times$ sec/ex across our benchmarks).
Contrast this with \slowLCD, which evaluates the constraint on all tokens at each step. 
This method is so costly that we could only tractably run the full token masking baseline for the context-sensitive pattern matching domain. 
Even there, the approach incurs $>50\times$ overhead. 

Alternatively, most existing implementations of token masking, including libraries such as Outlines \citep{willard2023efficient} and Guidance \citep{lundberg2024guidance}, are for simple grammar constraints, where highly-engineered parsers in performant languages like Rust or C++ can exploit the limited constraint expressiveness and systems engineering to evaluate the constraint for multiple possible next tokens simultaneously while navigating runtimes as low as 10 microseconds in the case of XGrammar \citep{dong2024xgrammar}. 
For further discussion of these libraries, see \citet{cognetta2025pitfalls} for a detailed overview.

\fastLCD contributes a new way to integrate arbitrary black-box constraints that do more than check adherence to a simple grammar, for which such extensive performance engineering may not be feasible. 
In our experiments, constraints are written as short Python functions with minimal performance engineering; \fastLCD lets them be used in the decoding loop without a severe hit to overall decoding speed. 
\fastLCD thus supports both an increase in the complexity of what can be efficiently constrained at decode time as well as a usability benefit for constraint programmers, who can now afford milliseconds per constraint evaluation rather than being restricted to microseconds. 
We see many opportunities for interesting constraints at this time scale, including incremental type systems and simulators.
Consider the Goal inference domain, for example, where a PDDL planner can be integrated into the decoding loop with \fastLCD. 
In contrast, naive token masking for a model with a vocab size of $100,000$ would require an unusable $11\text{ms/tok} \times 100,000\text{tok} > 18\text{min/step}$.

\subsection{From Global Constraints to Local Constraints}
\label{app:global-local}

The goal of controlled generation is to satisfy some sequence-level constraint. 
However, to enforce this constraint incrementally during generation, we need to formulate some local version of the constraint to check at each step. 
An important design question is how to implement this local constraint. 

The optimal Boolean local constraint checks for membership in the prefix language of the sequence-level constraint, i.e., it checks that there exists at least one suffix that, when concatenated to the current prefix, satisfies the sequence-level constraint. 
In some cases, this prefix language can be evaluated quickly and precisely, e.g. in our pattern matching task, where we see performance near $100\%$. 

But in many other cases, the exact calculation of this prefix language is intractable. 
Still, it is often relatively simple to implement fast local checks of validity, i.e., upper bounds on the optimal local constraint. 
For example, assume our goal is to generate Python code that passes a series of tests. 
While it is not feasible to ask in the middle of the program whether it will eventually pass these tests, it is still simple to rule out many generations: the Python code should be syntactically valid, not reference undefined variables, use types correctly, and more generally execute without exception through the most recent complete statement, regardless of what it outputs.

When we talk about black-box constraint integration, this means that local constraint checkers may be arbitrary programs that map prefixes to Booleans. 
They need not adhere to some special restricted class of functions like regular grammars. 
This enables us to practically implement constraints like the one above, by writing a short program that actually calls the Python interpreter on generated code and checks if exceptions are raised. 
Existing frameworks for locally constrained decoding typically do not support this degree of freedom, instead requiring the local constraints to be expressed in restricted formal languages.\looseness=-1

In general, it is up to the user to strike their own tradeoff between the local constraint’s speed and precision. 
In general, more precise constraints will make \SMCAWRS more sample-efficient (better performance with fewer particles), whereas less precise constraints may mean that \SMCAWRS requires more particles to produce great posterior samples. 
But for all valid local versions of a constraint (i.e., all checkers that do not incorrectly rule out prefixes that could be completed to valid strings), the algorithm does correctly target the sequence-level conditional distribution, generating exact samples as the number of particles grows.\looseness=-1

\newpage

\section{Context-Sensitive Pattern Matching}
\label{app:cspm}
\setcounter{figure}{0}    

In order to generate instances of pattern-matching specifications that exceed the expressiveness of standard FSM-based approaches, we executed the following procedure:

\begin{enumerate}
\item Prompt \texttt{claude-sonnet-3.7} to generate pattern-matching expressions to test a modern regex engine that use advanced features including lookaround, backreferences, conditionals, etc. This was repeated until generating $1503$ candidates.
\item Drop duplicates.
\item Filter all proposed expressions that \textit{do not} comply with the \texttt{regex} library \citep{barnett2014regex}. This confirms these specifications are satisfiable.
\item Filter all proposed expressions that \textit{do} comply with the \texttt{interegular} library \citep{megaing2019interegular}. This confirms these specifications exceed the capacities of FSM-based approaches.
\item Confirm that the empty string \texttt{""} is a valid prefix, but not complete. This satisfies our setting for benchmarking LM generation.
\end{enumerate}

Following these steps, $402$ valid test cases remained.
The authors manually inspected a subset of the final dataset and found that it exploited a wide range of challenging specifications. Here are some examples:

\begin{itemize}[itemsep=\baselineskip]
\item Repeating pattern of two characters in forward then reverse order: \begin{verbatim}
^(\w)(\w)(?:\2\1)+$
\end{verbatim}
\item Arbitrarily nested center embedding: \begin{verbatim}
^(<<(?R)*>>|\w+)$
\end{verbatim}
\item Conditional matching: \begin{verbatim}
(\d{3})?(?(1)abc\1|xyz)
\end{verbatim}
\item A mutually recursive arithmetic expression parser: \begin{verbatim}
(?(DEFINE)(?<expr>(?&term)(?:[+\-](?&term))*)(?<term>(?&factor)(?:[*/](?&factor))*)\end{verbatim}\begin{verbatim}(?<factor>\d+|\((?&expr)\)))^(?&expr)$
\end{verbatim}
\item A simple JSON parser: \begin{verbatim}
"^(?(DEFINE)(?<json>(?<obj>\{(?:(?&str):(?&val)(?:,(?&str):(?&val))*|)\})\end{verbatim}\begin{verbatim}|(?<arr>\[(?:(?&val)(?:,(?&val))*|)\])|(?<str>""(?:[^""\\]|\\.)*"")|(?<val>(?&obj)\end{verbatim}\begin{verbatim}|(?&arr)|(?&str)|true|false|null|-?\d+(?:\.\d+)?(?:[eE][+-]?\d+)?)))(?&json)$"
\end{verbatim}
\end{itemize}

\newpage

\section{Experiment Details}
\setcounter{figure}{0}    

This section provides details about the hyperparameters and hardware used in our experiments. 

\paragraph{Temperature \& Max Tokens.} All experiments were run with temp $1.0$. For all methods, we set a maximum number of tokens threshold to prevent excessively long generation times. Thresholds are set for each domain depending on the typical length of valid sequences from that domain. The following thresholds were set: Pattern Matching ($32$), Molecular Synthesis ($40$), Goal Inference ($100$), Text-to-SQL ($100$), JSON ($350$).

\paragraph{SMC Resampling.} In our SMC-based methods (\twistOnly,  \SMCAWRS), a resampling step is triggered when the effective-sample size (ESS) of the particle beam falls below a set threshold. For the Pattern Matching, Text-to-SQL, and JSON domains, we use an ESS of $\N/2$ for \SMCAWRS \, and $\N - 1$ for \twistOnly, where $\N$ is the number of particles.  The \twistOnly \, threshold was chosen to trigger a resampling step as soon as a particle has been deemed invalid by the constraint. These domains used standard multinomial resampling. For the Goal Inference and Molecular Synthesis domains, we use an ESS of $\N$, which triggers particle resampling at every step. In these cases, we used a stratified resampling scheme as in \citep{loula2025syntactic}. 

\paragraph{Hardware.} Text-to-SQL, Pattern Matching, JSON, and Molecular Synthesis experiments were run on a single L40S GPU, with the exception of any runs using Llama 3.3 (70B), which used 4 L40S GPUs. Goal Inference experiments were run on a single 40GB A100 GPU. 

\newpage

\section{Experiments with Language Models of Varying Size}
\setcounter{figure}{0}

\begin{table*}[th]
    \centering
    \begin{subtable}{0.48\textwidth}
        \centering
        \resizebox{\linewidth}{!}{
        \begin{tabular}{l cc}
        \toprule
        \textbf{Method} & \textbf{Accuracy} & \textbf{Runtime (sec/ex)} \\
        \midrule
        \baseLM & 0.159 (0.12, 0.20) & 0.04 (0.04, 0.05)\\
        \fastLCD & 0.953 (0.93, 0.97) & 0.07 (0.06, 0.08) \\
        \slowLCD & 0.950 (0.93, 0.97) & 8.45 (6.17, 11.39) \\
        \midrule
        \sampleVerify &  0.373 (0.33, 0.42) &  0.20 (0.19, 0.21) \\
        \twistOnly & 0.452 (0.41, 0.50) & 0.11 (0.10, 0.13) \\
        \SMCAWRS  &  0.974 (0.96, 0.99) & 0.29 (0.26, 0.33) \\
        \bottomrule
        \end{tabular}
        }
        \caption{Llama 3.2 1B}
        \label{tab:lamma-3.2-1B}
    \end{subtable}
    \hfill
    \begin{subtable}{0.48\textwidth}
        \centering
        \resizebox{\linewidth}{!}{
        \begin{tabular}{l cc}
        \toprule
        \textbf{Method} & \textbf{Accuracy} & \textbf{Runtime (sec/ex)} \\
        \midrule
        \baseLM & 0.570 (0.52, 0.62) & 0.10 (0.09, 0.11) \\
        \fastLCD & 0.993 (0.98, 1.00) & 0.13 (0.11, 0.14) \\
        \slowLCD & 0.978 (0.96, 0.99) & 6.91 (5.68, 8.46) \\
        \midrule
        \sampleVerify & 0.781 (0.74, 0.82) & 0.28 (0.26, 0.30) \\
        \twistOnly & 0.796 (0.76, 0.84) & 0.20 (0.19, 0.22) \\
        \SMCAWRS  & 0.990 (0.98, 1.00) & 0.36 (0.33, 0.40) \\
        \bottomrule
        \end{tabular}
        }
        \caption{Llama 3.1 8B}
        \label{tab:lamma-3.1-8B}
    \end{subtable}
    
    \vspace{1em}

    \begin{subtable}{0.48\textwidth}
         \centering
         \resizebox{\linewidth}{!}{
         \begin{tabular}{l cc}
        \toprule
        \textbf{Method} & \textbf{Accuracy} & \textbf{Runtime (sec/ex)} \\
        \midrule
        \baseLM &  0.818 (0.78, 0.86) & 0.22 (0.21, 0.24) \\
        \fastLCD & 0.993 (0.98, 1.00) &  0.25 (0.23, 0.28) \\
        \slowLCD & 0.990 (0.98, 1.00) &  5.24 (4.50, 6.14) \\
        \midrule
        \sampleVerify & 0.858 (0.82, 0.89) & 0.50 (0.46, 0.54) \\
        \twistOnly &  0.846 (0.81, 0.88) & 0.44 (0.41, 0.48) \\
        \SMCAWRS  &  0.995 (0.99, 1.00) & 0.50 (0.46, 0.55) \\
        \bottomrule
        \end{tabular}
         }
         \caption{Llama 3.3 70B}
         \label{tab:lamma-3.3-70B}
    \end{subtable}

    \caption{Comparison of method accuracy and runtime across language models of varying size on the Pattern Matching domain.  Confidence intervals bootstrapped at the level of 95\%. Runtime represents the average execution time (in seconds) across all instances in the dataset. \sampleVerify and \twistOnly were run with $\N=10$ particles. \SMCAWRS was run with $\N=5$ particles. Llama 3.3 8B results are copied from \cref{tab:pattern} and repeated here for convenience. All models are instruct versions.}
    \label{tab:particles-lms}
    
\end{table*}

\begin{figure}[th]
    \centering
    \includegraphics[width=\linewidth]{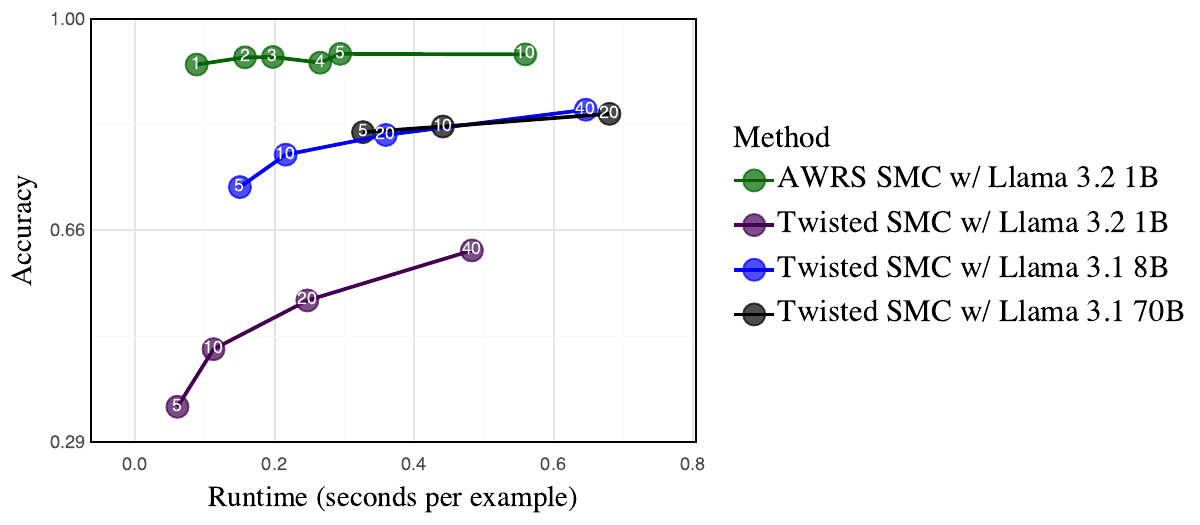}
    \caption{Accuracy and runtime of \SMCAWRS and \twistOnly with varying particle counts on the Pattern Matching domain, using LMs of different sizes. \SMCAWRS with a smaller LM achieves better performance than \twistOnly with larger LMs at the same runtime cost. All models are instruct versions.}
    \label{fig:particle-lms}
\end{figure} 

\end{document}